\def\Figref#1{Figure~\ref{#1}}
\def\Secref#1{Section~\ref{#1}}
\def\eqref#1{equation~\ref{#1}}
\def\Eqref#1{Equation~\ref{#1}}
\def\Algref#1{Algorithm~\ref{#1}}
\def\1{\bm{1}}
\def\va{{\bm{a}}}
\def\vc{{\bm{c}}}
\def\vp{{\bm{p}}}
\def\vv{{\bm{v}}}
\def\vw{{\bm{w}}}
\def\vx{{\bm{x}}}
\def\mA{{\bm{A}}}
\def\mB{{\bm{B}}}
\def\mC{{\bm{C}}}
\def\mD{{\bm{D}}}
\def\mH{{\bm{H}}}
\def\mI{{\bm{I}}}
\def\mR{{\bm{R}}}
\def\mS{{\bm{S}}}
\def\mU{{\bm{U}}}
\def\mW{{\bm{W}}}
\def\mX{{\bm{X}}}
\def\mZ{{\bm{Z}}}
\DeclareMathAlphabet{\mathsfit}{\encodingdefault}{\sfdefault}{m}{sl}
\SetMathAlphabet{\mathsfit}{bold}{\encodingdefault}{\sfdefault}{bx}{n}
\def\gD{{\mathcal{D}}}
\def\gJ{{\mathcal{J}}}
\def\gL{{\mathcal{L}}}
\def\gR{{\mathcal{R}}}
\def\gX{{\mathcal{X}}}
\def\gY{{\mathcal{Y}}}
\def\sR{{\mathbb{R}}}
\newcommand{\E}{\mathbb{E}}
\newcommand{\R}{\mathbb{R}}
\newcommand\bbR{\ensuremath{\mathbb{R}}} % Real numbers
\newcommand\bbE{\ensuremath{\mathbb{E}}} % Expectation
\newcommand\bbP{\ensuremath{\mathbb{P}}} % Probability
\DeclareMathOperator*{\diag}{diag} % Diagonal matrix
\newcommand{\ep}{\epsilon} % epsilon
\newcommand{\ind}{\mathbbm{1}} % Indicator
\DeclarePairedDelimiter\abs{\lvert}{\rvert}%
\DeclarePairedDelimiterX{\norm}[1]{\lVert}{\rVert}{#1}
\DeclarePairedDelimiter{\brac}{\langle}{\rangle}
\DeclarePairedDelimiterX{\infdivx}[2]{(}{)}{%
  #1\;\delimsize\|\;#2%
}
\theoremstyle{plain}
\newtheorem{theorem}{Theorem}[section]
\newtheorem{lemma}[theorem]{Lemma}
\theoremstyle{definition}
\newtheorem{assumption}[theorem]{Assumption}
\theoremstyle{remark}
\newcommand{\our}{{FedDAR}}
\newcommand{\rebuttal}[1]{{\color{black} #1}}
\title{FedDAR: Federated Domain-Aware Representation Learning}
\author{%
  Aoxiao Zhong\thanks{Equal contribution.} \\
  Harvard University\\
  \texttt{aoxiaozhong@g.harvard.edu} \\
  % examples of more authors
   \And
   Hao He\footnotemark[1]\\
   Massachusetts Institute of Technology \\
 \texttt{haohe@mit.edu} \\
   \AND
   Zhaolin Ren \\
   Harvard University \\
   \texttt{zhaolinren@g.harvard.edu} \\
   \And
   Na Li\\
   Harvard University \\
   \texttt{nali@seas.harvard.edu} \\
   \And
   Quanzheng Li \\
   Massachusetts General Hospital, Harvard Medical School \\
   \texttt{li.quanzheng@mgh.harvard.edu} \\
}
\begin{document}

%******************
\maketitle

% \begin{abstract}
% Federated learning (FL) allows many clients to collaboratively train a model without gathering data to a central node. In practice, data distribution usually varies across clients. Prior works~\cite{collins2021exploiting,arivazhagan2019federated} propose to learn a globally shared encoder and multiple predictors per client locally to overcome such heterogeneity. However, such a client-wise personalized strategy ignores the intrinsic connection between the heterogeneous data distributions across clients. In this work, we consider a more realistic setting for personalized FL, where we assume each client's data distribution is a mixture of several predefined domains. Under this setting, we theoretically show that the domain-aware personalized FL algorithm is superior to the domain-unaware algorithm which treats each client separately. We provide a novel federated learning framework, \our, that is able to learn a shared representation across domains and domain-wise personalized heads with the consideration of fairness across domains. We theoretically analyze the convergence of our algorithm and empirically demonstrate its superiority compared to multiple baselines on both synthetic and real-world datasets.
% \end{abstract}

\begin{abstract}
Cross-silo Federated learning (FL) has become a promising tool in machine learning applications for healthcare. It allows hospitals/institutions to train models with sufficient data while the data is kept private. To make sure the FL model is robust when facing heterogeneous data among FL clients, most efforts focus on personalizing models for clients. However, the latent relationships between clients' data are ignored. In this work, we focus on a special non-iid FL problem, called \emph{Domain-mixed FL}, where each client's data distribution is assumed to be a mixture of several predefined domains.  Recognizing the diversity of domains and the similarity within domains, we propose a novel method, \our, which learns a domain shared representation and domain-wise personalized prediction heads in a decoupled manner. For simplified linear regression settings, we have theoretically proved that \our~ enjoys a linear convergence rate.  For general settings, we have performed intensive empirical studies on both synthetic and real-world medical datasets which demonstrate its superiority over prior FL methods.

%This paper focuses on a special non-iid FL problem, called \emph{Domain-mixed FL}, where each client's data distribution is assumed to be a mixture of several predefined domains. Many real-world applications fall into this scenario. For example, one may perform FL on data from multiple hospitals where the data distribution heterogeneity usually comes from different fractions of sub-populations like patient groups indexed by race. Prior FL methods ignore such latent relationships between data and thus are sub-optimal in solving this domain-mixed FL problem. We illustrate this drawback both theoretically and empirically. To remedy this issue, we propose a novel method, \our, that learns a domain shared representation and domain-wise prediction heads in a decoupled manner. We theoretically prove \our~ enjoys a linear convergence rate and empirically demonstrate its superiority over prior FL methods on both synthetic and real-world medical datasets. 
\end{abstract}

\section{Introduction}
Federated learning (FL)~\cite{mcmahan2017communication} is a machine learning approach that allows many clients(e.g. mobile devices or organizations) to collaboratively train a model without sharing the data. It has great potential to resolve the dilemma in real-world machine learning applications, especially in the domain of healthcare. A robust and generalizable model in medical application usually requires a large amount of diverse data to train. However, collecting a large-scale centralized dataset could be expensive or even impractical due to the constraints from regulatory, ethical and legal challenges, data privacy and protection~\cite{rieke2020future}. 

While promising, applying FL to real-world problems has many technical challenges. One eminent challenge is data heterogeneity. Data across the clients are assumed to be independently and identically distributed (iid) by many FL algorithms. But this assumption rarely holds in the real world. It has been shown that non-iid data distributions will cause the failure of standard FL strategies such as FedAvg~\cite{jiang2019improving,sattler2020clustered,kairouz2019advances,li2020federated}. As an ideal model that can perform well on \textit{all} clients may not exist, it requires FL algorithms to personalize the model for different data distributions.

%While all of these works aim to learn a client-wise personalized model, they lack clear and realistic statistical assumptions on local data distributions. Recently,~\cite{marfoq2021federated} proposed FedEM, which assumes that the data distribution of each client is a mixture of $M$ underlying distributions. An EM-like algorithm is designed to learn a linear combination of $M$ shared component models with personalized mixture weights. Different from their assumption, we use actual known domains as the underlying distributions, without needing to estimate which domain a data point is drawn from. 
%\cite{papadaki2021federating} is most similar to ours in terms of the assumption of data distribution, but it doesn't involve any personalization. Instead they proposed FedMinMax to learn one global model that encourages fairness among domains.

%Considering the limitation of existing client-wise personalized FL approaches under our mixture of domain assumption, and inspired by the success of decoupled models on improving group-wise fairness\cite{dwork2018decoupled}, we design a novel FL framework, which can not only efficiently leverage diverse data from different domains to learn a common representation, but also provide domain-wise personalized head for fairer performance across the domains.

Prior theoretical work~\cite{marfoq2021personalized} shows that it is impossible to improve performances on all clients without making assumptions about the client's data distributions. Past works on personalized FL methods~\cite{marfoq2021personalized,sattler2020clustered,ghosh2020efficient,mansour2020three,deng2020adaptive} make their own assumptions and tailor their methods to those assumptions. In this paper, we propose a new and more realistic assumption where each client’s data distribution is a mixture of several predefined domains. We call our problem setting \emph{Domain-mixed FL}. It is inspired by the fact that the diversity of the medical data can be attributed to some known concept of domains, such as different demographic/ethnic groups of patients ~\cite{szczepura2005access,ranganathan2006exclusion,nationalstatistics}, different manufacturers or protocols/workflows of image scanners~\cite{maartensson2020reliability,ciompi2017importance}, and so on. Despite of the domain shifts between data domains, same domain at different clients are usually considered to have the same distribution. The data heterogeneity between FL clients actually comes from the distinct mixtures of diverse domains at clients. Furthermore, it is necessary to address the ubiquitous issue of domain shifts in healthcare data. For instance, different ethic groups could have significant differences in disease patterns and treatment responses ~\cite{szczepura2005access,ranganathan2006exclusion,nationalstatistics}. In addition, one ethic group could be a majority in one location/client, but a minority in another location/client; the mixture or composition of ethnicity could be different in local study cohorts. To reduce the potential bias in the FL model, we need put ethniciy related domain-wise personalization into our algorithm design.  Real world applications like this motivate us to \emph{personalize model for each domain instead of client}. 

FedEM\cite{marfoq2021personalized} and FedMinMax\cite{papadaki2021federating} makes similar assumption on data distribution as ours. However, FedEM assumes the domains are unknown and tries to learn a linear combination of several shared component models with personalized mixture weights through an EM-like algorithm. FedMinMax doesn't acknowledge the domain shift between domains and still aims to learn one shared model across domains by adapting minmax optimization to FL setting .

\textbf{Our Contributions.} We formulate the proposed problem setting, \emph{Domain-mixed FL}. Through our analysis, we find prior FL methods, both generic FL methods like FedAvg~\cite{mcmahan2017communication}, and personalized FL methods like FedRep~\cite{collins2021exploiting}, are sub-optimal under our setting. To address this issue, we propose a new algorithm, \emph{Federated Domain-Aware Representation Learning (FedDAR)}. 
FedDAR learns a shared model for all the clients but embedded with domain-wise personalized modules. The model contains two parts: an shared encoder across all domains and a multi-headed predictor whose heads are associated with domains. For an input from one specific domain, the model extracts representation via the shared encoder and then use the corresponding head to make the prediction.
\our~decouples the learning of the encoder and heads by alternating between the updates of the encoder and the heads. It allows the clients to run many local updates on the heads without overfitting on domains with limited data samples. This also leads to faster convergence and better performed model. %\lina{since we are saying: "more optimization", we should say "more optimization"--> better, faster, etc. Can we say stabilizes the training faster? What do you mean by "more optimization iterations on the heads stabilizes the training procedure"?} 
\our~also adapts different aggregation strategies for the two parts. We use a weighted average operation to aggregate the local updates for the encoder. With additional sample re-weighting, the overall training objective is equally weighted for each domain to encourage the fairness among domains. While for the heads, we propose a novel second-order aggregation algorithm to improve the optimality of aggregated heads. 

We theoretically show our method enjoys nice properties like linear convergence and small sample complexity in a linear case. Through extensive experiments on both synthetic and real-world datasets, we demonstrate that \our~significantly improves performance over the state-of-the-art personalized FL methods. To the best of our knowledge, our paper is among the
first efforts in domain-wise personalized federated learning that achieve such superior performance.

\section{Related work}

Besides the literature we have discussed above, other works on personalization and fairness in federated learning are also closely related to our work.
\paragraph{Personalized Federated Learning.}
Personalized federated learning has been studied from a variety of perspectives: i) local fine-tuning~\cite{wang2019federated,yu2020salvaging} ii) meta-learning~\cite{chen2018federated,fallah2020personalized,jiang2019improving,khodak2019adaptive} iii) local/global model interpolation~~\cite{deng2020adaptive,corinzia2019variational,mansour2020three}. iv) clustered FL that partition clients into clusters and learn optimal model for each cluster~\cite{sattler2020clustered,mansour2020three,ghosh2020efficient}. v) Multi-Task Learning(MTL)~\cite{vanhaesebrouck2017decentralized,smith2017federated,zantedeschi2020fully}~\cite{hanzely2020federated,hanzely2020lower,t2020personalized,huang2021personalized,li2021ditto} vi) local representations or heads for clients~\cite{arivazhagan2019federated, liang2020think,collins2021exploiting}. vii) personalized model through hypernetwork or super model~\cite{shamsian2021personalized,chen2021bridging,xu2022closing}. 
The personalization module in our approach is similar to vi). However, the targets we are personalizing the model for are the domains instead of clients.
 
\paragraph{Fairness in Federated Learning.}
There are two commonly used definitions of fairness in existing FL works. One is client fairness, usually formulated as \emph{client parity (CP)}, which requires clients to have similar performance. A few works~\cite{li2021ditto,li2019fair,mohri2019agnostic,yue2021gifair,zhang2020fairfl} have studied on this. Another is group fairness. In the centralized setting, the fundamental tradeoff between group fairness and accuracy has been studied~\cite{menon2018cost,wick2019unlocking, zhao2019inherent}, and various fair training algorithms have been proposed\cite{roh2020fairbatch,jiang2020identifying,zafar2017fairness,zemel2013learning,hardt2016equality}. Since the notions of group fairness is the same in FL setting, most of existing FL works adapt methods from centralized setting~\cite{zeng2021improving,du2021fairness,galvez2021enforcing,chu2021fedfair,cui2021addressing}. In this work, our method is not designed specifically for certain group fairness notions like demographic parity. Instead, we aim to achieve the best possible performance for each domain through personalization, admitting the difference between data domains. Moreover, our concept of data domains is not limited as demographic groups. It can also be applied to any other mixture of domain data, as long as our assumptions hold.

\section{Problem: Domain-mixed Federated Learning}
% \hhh{I reorganize the following 2 to 3 sections. I change the flow to the following: (1) section 3 to introduce (the minimum of) our problem setting; (2) section 4 to state limitations of directly applying prior methods; (3) Our method and analysis on it.}

\textbf{Notations.} Federated learning involves multiple clients. We denote number of clients as $n$. We use $i\in [n]\triangleq \{1,2,...,n\}$ to index each client. Client $i$ has a local data distribution $\mathcal{D}_i$ which induces a local learning objective, i.e., the expected risk $\gR_i(f)=\mathbb{E}_{(\vx_i,y_i)\sim \gD_i}[\ell(f(\vx_i),y_i)]$, where  $f: \mathcal{X}\rightarrow \mathcal{Y}$ is the model mapping the input $\vx\in \gX$ to the predicted label $f(\vx)\in \gY$ and $\ell: \gY \times \gY \rightarrow \mathbb{R}$ is a generic loss function. In real practice, client $i\in [n]$ has a finite number, say $L_i$, of data samples, i.e., $\mathcal{S}_i=\{(\vx_i^{j},y_i^{j})\}_{j=1}^{L_i}$. $L=\sum_{i=1}^n L_i$ denotes the total number of data samples.

\textbf{Problem Formulation of Domain-mixed Federated Learning.}
We introduce a new formulation of FL problem by assuming each clients' local data distribution is a weighted mixture of $M$ domain specific distributions. Specifically, we use $\{\tilde{\mathcal{D}}_m\}_{m=1}^M$ to denote data distributions from $M$ predefined domains. For client $i$, its local data distribution is $\gD_i = \sum_{m} \pi_{i,m} \tilde{\mathcal{D}}_m$ where the mixing coefficients $\pi_{i,m}$ stand for the probabilities of client $i$'s data sample coming from domain $m$.
% \lina{should we explain the meaning of ``mixing coefficients''?} 
Take medical application as an example, different hospitals are clients and different ethnic groups are domains. Each ethnic group have different health data while each hospital's data is a mix of ethnic group data. 

Further, the domains of the data samples are assumed to be known. We use a triplet of variables $(\vx,y,z)$ to represent the input features, label and domain. The goal of our problem is to learn a model $f(\vx,z)$ that can perform well in every domain, as shown by the following learning objective, 
\begin{equation}
\label{eqn:obj-origin}
\min_{f} \gR(f):=
\frac{1}{M}\sum_{m=1}^M\gR_{m}(f(\cdot,m))
\end{equation}
where $\gR_{m}(f(\cdot,m)) =\mathbb{E}_{(\vx,m)\sim \tilde{\gD}_m}[\ell(f(\vx,m),y)]$.  
Our problem focuses on the setting that each domain have a different conditioned label distribution, i.e., $P_m(y|\vx)$ is different in each domain $m$. 
% \hh{We assume the conditioned label distribution is different in different domain, i.e. $\forall m \neq m'$, $P_{m}(Y|\phi(X)) \neq P_{m'}(Y|\phi(X))$. 
% Such an assumption is often referred as \emph{concept shift} in the domain adaptation literature~\cite{kouw2018introduction}.}
\vspace{-5pt}
\subsection{Comparison with Prior Domain-unaware FL Problem Formulations}
\label{sec:problem-cmp}
Our FL problem introduces the concept of the domain and focuses on the model's performance in each domain. Many prior FL formulations does not recognize the existence of the domains. For example, the original federated learning algorithms like FedAvg~\cite{mcmahan2017communication}, FedProx~\cite{li2020federated} learn a globally shared model that via minimizing the averaged risk, i.e., $\min_{f} \frac{1}{n} \sum_i \gR_i(f)$. Some variants consider the fairness across the clients. To do so they optimize the worst client's performance, instead of the averaged performance, i.e., $\min_{f} \max_{i} \gR_i(f)$. Further, personalized FL algorithms, such as FedRep~\cite{collins2021exploiting}, customize the model's prediction for each client whose objective is $\min_{f_i: i \in [n]} \frac{1}{n} \sum_{i=1}^n \gR_i(f_i)$.

All the FL algorithms mentioned above will lead sub-optimal solutions to our problem since they do not make \emph{domain specific} predictions. We illustrate this point by the following toy example of linear regression: 
We assume the data in $m$'th domain is generated via the following procedure: $\vx \in \R^d$ is i.i.d sampled from a distribution $p(\vx)$ with mean zero and covariance $\mI_d$. The label $y \in \R$ obeys $y = \vx^\top \mB^* \vw^*_{m}$ where $\mB^* \in \R^{d \times k}$ is ground truth linear embedding shared by all domains, and $\vw^*_{m} \in \R^{k}$ is the linear head specific to domain $m$. Under this setting, $\tilde{\mathcal{D}}_m$ stands for data $(\vx,y)$ where $x\sim p(\vx)$ and $y=\vx^{\top} \mB^* \vw_m^*$. For each client, the local data $\mathcal{D}_i$ is a mix of data from different domains with mixed coefficients, i.e., $\mathcal{D}_i =\sum_m \pi_{i,m} \tilde{D}_m$. %We denote the distribution that a $(\vx,y)$ pair  $\tilde{D}_m$ to denote the distribution where

% Locally, the $i$'th client's data is a mixture of $M$ domains with coefficients $\pi_{i,m}$. Our goal is a model that regresses $y$ well in every domain.

\textbf{FedAvg:} learns a single model $\mB$ and $\vw$ across the all clients via the following objective,
% \begin{equation}
%     \min_{\mB \in \R^{d \times k},\vw \in \R^{k}} \frac{1}{2n} \sum_{i \in [n], m \in [M]} \pi_{i,m} \E_{\vx} (\vx^\top\mB^* \vw^*_{m} - \vx^\top\mB \vw )^2
% \end{equation}
% \lina{should the problem formulation has $y$?
\begin{equation}
    \min_{\mB,\vw } \frac{1}{2n} \sum_{i \in [n]} \E_{(\vx,y)\sim \mathcal{D}_{i}} (y - \vx^\top\mB \vw )^2:= \frac{1}{2n} \sum_{i \in [n]} \sum_{ m \in [M]} \pi_{i,m} \E_{(\vx,y)\sim \tilde{D}_{m}} (y - \vx^\top\mB \vw )^2
\end{equation}

% Since data $x$ has a covariance of $\mI_d$, the above objective is equivalent to $\frac{1}{n} \sum_{i \in [n], m \in [M]} \pi_{i,m} \| \mB^* \vw^*_{m} - \mB \vw \|^2_2$. This quadratic form achieves its minimum at $\mB=\mB^*$ and $\vw_{\texttt{avg}} = \frac{1}{n}\sum_{i \in [n], m \in [M]}\pi_{i,m} \vw_m^*$, with value $\frac{1}{n}\sum_{i \in [n], m \in [M]}\pi_{i,m} \|B^*(\vw^*_{m} - \vw_{\texttt{avg}})\|^2_2$. 

\textbf{FedRep:} learns shared representation $\mB$ and separated heads $\vw_i$ for each clients $i$ rather than for each domain $m$,
\begin{equation}
    \min_{\mB,\vw_1, \dots, \vw_n} \frac{1}{2n} \sum_{i \in [n]} \E_{(\vx,y)\sim \mathcal{D}_i} (y - \vx^\top \mB \vw_i )^2:= \frac{1}{2n} \sum_{i \in [n]}\sum_{m \in [M]} \pi_{i,m} \E_{(\vx,y)\sim\tilde{\mathcal{D}}_m} (y - \vx^\top \mB \vw_i )^2
\end{equation}

% Similarly the above objective is equivalent to $\frac{1}{n} \sum_{i \in [n], m \in [M]}\pi_{i,m} \| \mB^* \vw^*_{m} - \mB \vw_i \|^2_2$ whose minima is achieved at $\mB=\mB^*$ and $\vw_{\texttt{avg},i} = \sum_{m}\pi_{i,m} \vw_m^*$ with a value $\frac{1}{n}\sum_{i \in [n], m \in [M]}\pi_{i,m} \|B^*(\vw^*_{m} - \vw_{\texttt{avg},i})\|^2_2$.
%\lina{be more specific saying that in Fedrep, the decision is $w_i$ for each client, rather for each domain}

\textbf{FedDAR:} In contrast, in the linear case, our proposed method, \our, which will be introduced next, learns a shared representation $\mB$ and separate heads $\vw_m$ for each domain $m$,
\begin{equation} \label{eq:fedDAR-linear}
    \min_{\mB,\vw_1,\cdots,\vw_m} \frac{1}{2M} \sum_{i \in [n]}\sum_{m \in [M]} \frac{\pi_{i,m}}{\sum_{i'}\pi_{i',m}} \E_{(\vx,y)\sim \tilde{\mathcal{D}}_{m}} (y - \vx^\top \mB \vw_m )^2
\end{equation}

From the above formulations, we can see that FedAvd and FedRep are not able to achieve the zero error in our domain-mixed FL problem.

\section{Proposed Method: FedDAR}

To solve the Domain-mixed FL problem, we propose a new method called, \emph{Federated Domain-Aware Representation Learning} (FedDAR). In the following, we first introduce the model, learning objective and the details of the federated optimization algorithm. 

\subsection{Algorithm Overview} 
Our model is made of a shared encoder $\phi(\cdot; \bm{\theta})$ and $M$ domain specific heads $h_{m}(\cdot; \vw_{m})$ whose are parameterized by neural networks with the weights $\bm{\theta}$ and $\vw_{m}, \forall m \in [M]$. According to our problem formation in \Eqref{eqn:obj-origin}, our algorithm aims to solve the following optimization,
\begin{equation}
\label{eqn:obj}
\min_{\phi,h_1,...,h_M} \gR(\phi,h_1,...,h_M):=
\frac{1}{M}\sum_{m=1}^M\gR_{m}(h_m \circ \phi)
\end{equation}
We decouple the training between encoder and heads. Specifically, we alternates the learning between the encoder and the heads. The learning is done federatedly and has two conventional steps: (1) local updates; (2) aggregation at the server. 
% Note that since empirically operating heads more frequently than the encoder leads better performance, the objective in \Eqref{eqn:obj} is solving by the bi-level optimization, $\min_{\phi} \min_{h_1,...,h_M} \gR(\phi,h_1,...,h_M)$. 
\Algref{alg:our} shows the relevant code.

\textbf{Empirical Objectives with Re-weighting.} Empirically, the objectives are estimated via the finite data samples at each client. We use $\mathcal{S}_{i,m}$ to denote the set of samples from domain $m$ in client $i$, with $L_{i,m}:=|\mathcal{S}_{i,m}|$ denoting the sample size. Further, $L_i:=\sum_{m=1}^M L_{i,m}$ is the number of samples in client $i$ while $L_m:=\sum_{i=1}^n L_{i,m}$ is the total number of samples belonging to domain $m$ across all the clients. We denote the empirical risk at client $i$ specific to domain $m$ as $\hat{\gR}_{i,m}(h_m \circ \phi):= \frac{1}{L_{i,m}} \sum_{(\vx,y) \in \mathcal{S}_{i,m} } \ell( h_m \circ \phi(\vx), y)$. The empirical risk at client $i$ is designed as $\hat{\gR}_{i}(\phi,h_1,...,h_M)=\sum_m\frac{L_{i,m}}{L_i}u_m \hat{\gR}_{i,m}(h_m \circ \phi)$, where $u_m=\frac{L}{L_{m}M}$ re-weights the risk for each domain. Combining commonly used weighted average FL objective $\hat{\gR}(\phi,h_1,...,h_M)=\sum_{i=1}^n \frac{L_i}{L} \hat{\gR}_i(\phi,h_1,...,h_M)$, the overall empirical risk is derived as the following,  
\begin{equation}
\label{eqn:obj-erm}
\hat{\gR}(\phi,h_1,...,h_M) :=\sum_{i=1}^n \frac{L_i}{L} \hat{\gR}_i(\phi,h_1,...,h_M) = \frac{1}{M}\sum_{m=1}^M \hat{\gR}_{m}(h_m \circ \phi) 
\end{equation}
, where $\hat{\gR}_{m}(h_m \circ \phi):= \sum_{i=1}^n \frac{L_{i,m}}{L_m} \hat{\gR}_{i,m}(h_m \circ \phi)$. This is consistent with Equation \ref{eqn:obj}.

\subsection{Local Updates at Clients} 
In each communication round, clients use gradient descent methods to optimize representation $\phi(\cdot;\bm{\theta})$ and local heads $h_m(\cdot;\vw_m)$ for $m \in [M]$ alternately. We use $t$ to denote the current round. For a module $f$, $f^{t-1}$ denotes its optimized version after $t-1$ rounds. Each round has multiple gradient descent iterations. We use $f^{t,s}$ to denote the module in round $t$ after $s$ iterations. Since the updates are made locally, clients maintain their own copies of both modules, we use subscripts $i$ to index local copy at client $i$, e.g., $f^{t,s}_i$. We use $\texttt{GRD}$ to denote a generic gradient-base optimization step which takes three inputs: \emph{
objective function}, \emph{variables}, \emph{learning rate} and maps them into a new module with updated variables. For example, the vanilla gradient descent has the form $\texttt{GRD}(\gL(f_{\vw}), f_{\vw},\alpha) = f_{\vw - \alpha \nabla_{\vw} \gL(f_{\vw})}$. 

\textbf{For the heads}, client $i$ performs $\tau_h$ local gradient-based updates to obtain optimal head given the current shared encoder $\phi^{t-1}$. For $s \in [\tau_h]$, client $i$ updates via $h^{t,s}_{i,m} \leftarrow \texttt{GRD}(\hat{\gR}_{i,m}(h^{t,s-1}_{i,m} \circ \phi^{t-1}),h^{t,s-1}_{i,m},\alpha)$. \textbf{For the shared encoder}, the clients executes $\tau_\phi$ local updates. Specifically, for $s \in [\tau_\phi]$, client $i$ updates the local copy of the encoder via $\phi^{t,s}_{i} \leftarrow \texttt{GRD}(\hat{\gR}_{i}( \phi_i^{t, s-1},\{h^{t}_m\}_{m=1}^M),\phi^{t,s-1}_{i},\alpha)$. The re-weighting mentioned in last section is implemented by re-weighting each sample with $u_m$ when calculating the loss function.

%%%%%% reweighting %%%%%%
% Specifically,
% \[\hat{\gR}_{i}(\{h_{i,m}\}_{m=1}^M, \phi)=\sum_m\frac{L_{i,m}}{L_i}u_m \hat{\gR}_{i,m}(\{h_{i,m}\}_{m=1}^M, \phi) \]
% where $\hat{\gR}_{i,m}(\{h_{i,m}\}_{m=1}^M, \phi)=\frac{1}{L_{i,m}}\sum_{(x,y)\in\mathcal{S}_{i,m}}\ell(h_{i,m}\circ \phi (x),y)$ is loss function for client $i$, domain $m$ while $u_m=\frac{L}{L_{m}M}$ reweights the global objective to promote fairness across domains. It is derived as the following, 
% \begin{align*}
%     &\sum_{i}\frac{L_i}{L}\hat{\gR}_{i}(\{h_{i,m}\}_{m=1}^M, \phi)=\sum_{i}\frac{L_i}{L}\sum_m\frac{L_{i,m}}{L_i}u_m \hat{\gR}_{i,m}(\{h_{i,m}\}_{m=1}^M, \phi) \\
%     % =\frac{1}{M}\sum_{i}\sum_m\frac{L_{i,m}}{L_m} \hat{\gR}_{i,m}(\{h_{i,m}\}_{m=1}^M, \phi)
%     =\frac{1}{M} & \sum_m\sum_{i}\frac{L_{i,m}}{L_m} \hat{\gR}_{i,m}(\{h_{i,m}\}_{m=1}^M, \phi)
%     =\frac{1}{M}\sum_m\hat{\gR}_{m}(h_{m}, \phi).
% \end{align*}
% to match our desired objective in \Eqref{eqn:obj}. 
% In practice, this can be easily implemented by re-weighting each sample with $u_m$ when calculating the loss function.
%$\phi^{t,s}_{i} \leftarrow \texttt{GRD}(\sum_{m} \pi_{i,m} \gR_{i,m}(h^{t,s-1}_{i,m}, \phi^{t, s-1}),\phi^{t,s-1}_{i},\alpha)$.

\subsection{Aggregation at Server}

We introduce two strategies: (1) weighted average (WA); (2) second-order aggregation (SA).

\textbf{Weighted average} means the aggregated model parameters are the average of the local model's parameters weighted by the number of data samples. Specifically, for the shared encoder, we have $\bm{\theta}^{t} = \sum_{i=1}^{n} \frac{L_i}{L} \bm{\theta}^{t-1}$. Similarly for each head, we have  $\vw_{m}^{t} = \sum_{i=1}^{n} \frac{L_{i,m}}{L_m} \vw_{m,i}^{t-1}$.

\textbf{Second-order aggregation} is a more complex strategy. 
% It is motivated by the fact the heads are optimized to almost optima in each round. 
% Since each domain has its specific head, we perform head aggregation domain-wise independently. 
Ideally, we want the head aggregation generates the globally optimal model given a set of locally optimal model, as shown in the following,
\begin{equation}
\label{eq:headagg}
    \vw^* \in \arg\min_{\vw} \gJ(\vw) \triangleq \sum_{i=1}^n \alpha_i \gJ_i(\vw),~~~\text{given}~\vw_i^* = \arg\min_{\vw} \gR_i(\vw) \ \ \forall i \in [n].
\end{equation}
where $\gJ_i$ is $i$'th client's virtual objective, $\alpha_i := L_i / L$ is the importance of the client, $L_i$ is the number of data samples. We call $\gJ_i$ the virtual objective to distinguish it from the real learning objective $\gR_i$. The virtual objective is defined as that the local updates give the optimal solution w.r.t it. It is introduced since the local updates during two aggregated are not guaranteed to optimize the head to optimal w.r.t the real objective. For example, if each local updates is single step gradient descent with a learning rate $\eta$, i.e., $\vw^{t+1}_i = \vw^{t} - \eta \nabla_\vw \gR_i(\vw^t)$. Then the virtual objective becomes $\gJ_i(\vw) = \gR_i(\vw^t) + (\vw - \vw^t)^{\top}\nabla_\vw \gR_i(\vw^t) + \frac{1}{2\eta} \|\vw - \vw^t\|_2^2$ which satisfies $\vw^{t+1}_i \in \arg\min_{\vw} \gJ_i(\vw)$. Such a virtual objective leads the solution of problem~\ref{eq:headagg} to $\vw^* = \frac{1}{n} \sum_{i=1}^n \vw^*_i$ which is the simple averaging strategy. 

However, in real practice, the local updates is usually more complicated which makes the virtual objective closer to the true objective. We consider the case that the virtual objective is the second order Taylor expansion of the true objective, i.e., $\gJ(\vw) = \gR(\vw_t) + (\vw - \vw^t)^{\top}\nabla_\vw \gR(\vw^t) + \frac{1}{2} (\vw - \vw^t)^{\top} \mH_\gR(\vw^t) (\vw - \vw^t)$ where $\mH_\gR$ is the Hessian matrix. Then each round of local update equivalents to a Newton-like step, $\vw^{t+1}_i = \vw^t - \mH_{\gR_i}(\vw^t)^{-1} \nabla_{\vw}\gR_i(\vw^t)$. While $\vw^{t+1} = \vw^t - \mH_{\gR}(\vw^t)^{-1} \nabla_{\vw}\gR(\vw^t)$ is the desired globally optima. Leveraging the fact that, $\nabla_{\vw} \gR(\vw) = \sum_{i \in [n]} \alpha_i \nabla_{\vw} \gR_i(\vw)$ and $\mH_{\gR}(\vw) = \sum_{i \in [n]} \alpha_i \mH_{\gR_i}(\vw)$, we can get $\vw^{t+1}$ from $\vw^{t+1}_{i}$ via the following equation, which we call second-order aggregation,

\vspace{-5mm}
\rebuttal{
\begin{equation}
\label{eq:headagg-geo}
    \vw^{t+1}  = \mH_{\gR}(\vw^t)^{-1} \sum_{i \in [n]} \alpha_i \mH_{\gR_i}(\vw^t) \vw^{t+1}_i 
\end{equation}
}
Note that proposed head aggregation requires sending the Hessian matrix to the server which takes a communication cost being quadratic to the size of the weight. In real practice, the predictor head is usually small, e.g., a linear layer with hundreds of neurons. Thus it is acceptable to aggregate the Hessian matrix of the head's parameters. 

In the following, we provide two instances of our second-order aggregation with a linear head.
\vspace{-2mm}

\textbf{1. Linear Regression} where $\gR_i(\vw) = \frac{1}{L_i} \sum_{j=1}^{L_i} (\vw^{\top}\vx^{j}_i - y^{j})^2$ is quadratic itself. Thus the second order taylor expansion of the objective itself, i.e., $\gJ_i(\vw) = \gR_i(\vw)$. In this case, $\mH_{\gR_i}(\vw) = \mX_i^{\top}\mX_i$ where $\mX_i = [\vx^{1}_i, \cdots, \vx^{L_i}_i]^{\top}$ is the data matrix of client $i$.
\vspace{-2mm}

\textbf{2. Binary Classification} where $\gR_i(\vw) =  - \frac{1}{L_i} \sum_{j=1}^{L_i}  y^{j}_i \log \sigma(\vw^{\top}\vx^{j}_i) + (1 - y^{j}_i) \log (1 - \sigma(\vw^{\top}\vx^{j}_i))$. $\sigma$ is the sigmoid function. Let $\mu^{j}_i \triangleq \sigma(\vw^{\top}\vx^{j}_i)$ denote model's output. The gradient and the Hessian are, $\nabla_{\vw} \gR_i(\vw) = \frac{1}{L_i} \sum_{j} (\mu^{j}_i - y^{j}_i) \vx^{j}_i = \frac{1}{L_i} \1^{\top} \diag(\bm{\mu}_i - y_i) \mX_i^\top $ and $\mH_{\gR_i}(\vw) = \frac{1}{L_i}\mX_i^{\top}\mS\mX_i$ where $\mS \triangleq \diag(\mu^{1}_i(1-\mu^{1}_i), \cdots, \mu^{L_i}_i(1-\mu^{L_i}_i))$. Similar formulas can be derived for the multiclass classification. Please refer to the text book~\cite{murphy2022probabilistic} for the exact equations. 
\vspace{-10pt}
\paragraph{Remark.}
In practice, when the dimension of $\vw$ is larger than the number of samples of certain domain, the Hessian may have small singular values which causes numerically instability. To address this, we add an additional projection layer on top of the model's representation to reduce its dimension. 
% We denote the former variation as \our-WA~(weighted aggregation), the latter variation as \our-SA~(geometric-aware aggregation) and show results of both in the experiments section.

\subsection{Theoretical Result of FedDAR}
For a simplified linear regression setting as discussed in domain-mixed FL (\ref{eq:fedDAR-linear}) (cf. details in Appendix A), we \rebuttal{give below the sample complexity required for an adapted version of our algorithm (\Algref{algorithm:our-linear} in the appendix)  to enjoy linear convergence}. Due to the space limit, we only provide an informal statement to highlight the result. Formal statement and the proof are deferred in the appendix. 

\begin{theorem}[\rebuttal{Sample complexity of }FedDAR convergence in linear case (informal)]
\label{theorem:fed-MDR-linear-informal} Consider the linear setting for domain-mixed FL in (\ref{eq:fedDAR-linear}).  At each iteration, suppose that the number of samples used by each of $n$ clients to update the encoder, is $\tilde{\Omega}(\frac{d k^2}{n})$, and that the aggregate number of samples used in the update for the domain-specific heads, is $\tilde{\Omega}(k^2)$. Then, for a suitably chosen step-size, the distance between the encoder $\bf{B}_t$ \Algref{algorithm:our-linear} outputs and the true encoder $\bf{B}^*$ converges at a linear rate. 
\end{theorem}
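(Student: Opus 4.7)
The plan is to adapt the alternating-minimization analysis used in subspace/representation-learning federated results (e.g.\ the FedRep-style argument) to the domain-personalized setting, tracking the principal-angle distance $\mathrm{dist}(\mB_t, \mB^*) := \|(\mI - \mB_t \mB_t^\top)\mB^*\|_2$ between the column spaces of the current encoder iterate and the true encoder $\mB^*$. In the linear model from \eqref{eq:fedDAR-linear}, the per-domain heads admit a closed form given the shared encoder, so the analysis reduces to showing (i) that the best-response head estimates $\hat{\vw}_m(\mB_t)$ recover the true domain parameters up to an error proportional to the current subspace distance, and (ii) that the ensuing aggregated encoder step strictly contracts that distance at a geometric rate. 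Throughout, I would orthonormalize each iterate via a QR step (absorbing the triangular factor into the heads), which keeps the span untouched and simplifies the contraction bookkeeping.

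First I would handle the head update. Conditional on $\mB_t$, each $\vw_m$ solves a re-weighted least-squares problem over the aggregate domain-$m$ pool $\bigcup_i \mathcal{S}_{i,m}$ with weights $\pi_{i,m}/\sum_{i'}\pi_{i',m}$. Using a matrix-Bernstein or Hanson--Wright concentration bound on the re-weighted Gram matrix $\mB_t^\top (\tfrac{1}{L_m}\sum_{i,j} \vx_{i,m}^{j}\vx_{i,m}^{j\top}) \mB_t$, I would show that once the aggregate domain sample count is $\tilde{\Omega}(k^2)$, this matrix concentrates around $\mB_t^\top \mB_t = \mI_k$, so that $\hat{\vw}_m = \mB_t^\top \mB^* \vw^*_m + \vec{\eta}_m$ with noise term $\|\vec{\eta}_m\| = \mathcal{O}(\mathrm{dist}(\mB_t,\mB^*))$. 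A union bound over the $M$ domains extends this uniformly.

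Next I would analyze the encoder gradient step $\mB_{t+1} = \mB_t - \eta \mG_t$, where $\mG_t$ is the reweighted aggregate of per-client encoder gradients built from the $\tilde{\Omega}(dk^2/n)$ fresh samples at each of the $n$ clients. Decomposing $\mG_t$ into a population signal plus a sampling-noise term, the signal part can be written (after plugging in the head estimates from the previous paragraph) as a linear map acting on $(\mI - \mB_t\mB_t^\top)\mB^*$, whose operator norm is strictly less than one provided the head matrix $\mW^* := [\vw^*_1,\dots,\vw^*_M]$ has full column rank $k$ (a diversity condition) and the step size $\eta$ is small enough. The noise part is controlled by a second matrix-concentration argument using the aggregate encoder sample count $\tilde{\Omega}(dk^2)$; combined with the standard fact that the principal-angle distance is non-expansive under QR retraction, this yields
\begin{equation}
\mathrm{dist}(\mB_{t+1},\mB^*) \le (1-\rho)\,\mathrm{dist}(\mB_t,\mB^*)
\end{equation}
for some $\rho\in(0,1)$ depending on $\sigma_{\min}(\mW^*)$ and the data covariance, which is the desired linear rate.

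The main obstacle will be handling the interaction between the domain reweighting $\pi_{i,m}/\sum_{i'}\pi_{i',m}$ and the heterogeneous per-client sample sizes: unlike the client-personalized FedRep analysis, the per-domain estimators here pool data across clients with unequal mixture weights, and the encoder gradient involves a double sum over $(i,m)$ that does not separate cleanly into i.i.d.\ blocks. Establishing a uniform lower bound on the smallest singular value of the combined population encoder Hessian $\frac{1}{M}\sum_m \vw^*_m(\vw^*_m)^\top$ under arbitrary $\{\pi_{i,m}\}$ (which is what makes the contraction constant $\rho$ non-vanishing) is the key quantitative hurdle. A secondary technical issue is closing the induction across rounds; I would address this either by assuming fresh samples per round or by a standard uniform covering of the iterate trajectory so that the concentration events hold simultaneously along the entire run.
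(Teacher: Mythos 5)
Your proposal follows essentially the same route as the paper's proof: an alternating-minimization analysis in the style of FedRep that tracks the principal-angle distance, establishes via sub-Gaussian matrix concentration (over the pooled per-domain samples, with a union bound over the $M$ domains) that the closed-form head update satisfies $\hat{\vw}_m = \mB_t^\top \mB^* \vw_m^* + O(\mathrm{dist}(\mB_t,\mB^*))$, and then shows the aggregated encoder gradient step contracts the distance at a rate governed by $\sigma_{\min}(\tfrac{1}{\sqrt{M}}\mW^*)$ after a QR retraction, using fresh samples each round. The hurdles you flag are handled in the paper exactly as you anticipate --- the domain-diversity assumption supplies the singular-value lower bound, and a Bernstein argument converts the deterministic per-client budget $L$ into a high-probability lower bound on the stochastic per-domain counts $L_m$.
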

\vspace{-5mm}
\paragraph{Remark.}
As our algorithm converges linearly to the true encoder, the per-iteration sample complexity of our algorithm gives a good estimate of the overall sample complexity. Since we expect the output of the encoder to be significantly lower-dimensional than the input (i.e. $k \ll d$), our result indicates that \Algref{algorithm:our-linear}'s sample complexity is dominated by $\tilde{\Omega}(\frac{d }{n})$, implying that the complexity reduces significantly as the number of clients $n$ increases. \rebuttal{Moreover, a key implication of our result is the capacity for our algorithm to accommodate data imbalance across domains. We note that our approach requires $\Omega(dk^2)$ samples per iteration for the update of the shared representation $\mB \in \mathbb{R}^{d \times k}$, whilst needing only $\Omega(k^2)$ samples per iteration for the update of each domain head. In particular, domains with more data can contribute disproportionately to the $\tilde{\Omega}(dk^2)$ samples required to learn the common representation, whilst domains with less data need only provide $\tilde{\Omega}(k^2)$ samples to update its domain head during the course of the algorithm. Whenever $k^2 \ll d$, which we believe is a reasonable assumption for many practical applications (e.g. medical imaging), the requirement of $\tilde{\Omega}(k^2)$ samples per domain is relatively mild. Conversely, forgoing the shared representation structure would require each domain to learn a separate $d$-dimensional classifier, requiring $\tilde{\Omega}(d)$ samples per domain, which can pose a challenge in problems with domain data imbalance.}

\begin{algorithm}[tb]
   \caption{\textsc{\our}}
   \label{alg:our}
\begin{algorithmic}
    \STATE {\bfseries Input:} Data $\mathcal{S}_{1:n}$; number of local updates $\tau_h$ for the heads, $\tau_\phi$ for representation; number of communication rounds $T$; learning rate $\alpha$.
    \STATE Initialize representation and heads $\phi^0,h_1^0,...,h_M^0$.
    \FOR{$t=1,2,...,T$}
    \STATE{Server sends $\phi^{t-1},h_1^{t-1},...,h_M^{t-1}$ to the $n$ clients;}
    \FOR{client $i=1,2,...,n$ {\bfseries in parallel}}
    \STATE Client $i$ initializes $h_{i,m}^{t,0}\leftarrow h_m^{t-1}, \forall m \in [M]$.
    \FOR{$s=1$ {\bfseries to} $\tau_h$}
    \STATE $h^{t,s}_{i,m} \leftarrow \texttt{GRD}(\hat{\gR}_{i,m}(h^{t,s-1}_{i,m}, \phi^{t-1}),h^{t,s-1}_{i,m},\alpha)$, for all $m \in [M]$.
    % \STATE $h_{i,m}^{t,s}\leftarrow\textsc{HeadUpdate}(\phi^{t-1},h_{i,m}^{t,s-1},\eta), \forall m \in [M]$.
    \ENDFOR
    
    \STATE Client $i$ sends updated heads $h_{i,m}^{t,\tau_h}$ and Hessians $\mH_{\gR_{i,m}}(h_{i,m}^{t,\tau_h})$ to the server.
    % for each domain $m\in [M]$ to the server;
    \ENDFOR
    \STATE Server aggregate the heads for each domain:
    \FOR{$m \in [M]$}
    \STATE $h^{t}_{m} \leftarrow \textsc{HeadAgg}(\{h_{1,m}^{t,\tau_h}, \mH_{\gR_{1,m}}(h_{1,m}^{t,\tau_h})\}_{i=1}^n)$ via \Eqref{eq:headagg-geo}.
    \ENDFOR
    \STATE Server sends $h_1^{t},...,h_M^{t}$ to the $n$ clients;
    \FOR{client $i=1,2,...,n$ {\bfseries in parallel}}
    % \STATE $\phi_i^{t}\leftarrow\textsc{RepUpdate}(\phi^{t-1},h^{t}_{1},...,h^{t}_{M},\eta)$
    
    \FOR{$s=1$ {\bfseries to} $\tau_\phi$}
    \STATE $\phi^{t,s}_i \leftarrow \texttt{GRD}(\hat{\gR}_{i,m}(h^{t}_{m}, \phi_i^{t,s-1}),\phi_i^{t,s-1},\alpha)$.
    \ENDFOR
    
    \STATE Client $i$ sends updated representation  $\phi_i^t=\phi_i^{t,\tau_\phi}$ to server.
    \ENDFOR
    \STATE Server computes the new representation via averaging $\phi^{t} \leftarrow \sum_{i=1}^n\frac{L_i}{L}\times \phi_i^{t}$.
    \ENDFOR
\end{algorithmic}
\end{algorithm}
% \section{Multi-linear regression}

\section{Experiments}

We validate our method's effectiveness on both synthetic and real datasets. We first experiment on the exact synthetic dataset described in our theoretical analysis to verify our theory. We then conduct experiments on a real dataset, FairFace~\cite{karkkainen2019fairface}, with controlled domain distributions to investigate the robustness of our algorithm under different levels of heterogeneity. Finally we compare our method with various baselines on a real federated learning benchmark, EXAM~\cite{dayan2021federated} with real-world domain distributions. We also conduct extensive ablation studies on it to discern the contribution of each component of our method. Full details of experimental settings can be found in the Appendix B.

\subsection{Synthetic Data}
We first run experiments on the linear regression problem analyzed in Appendix A. We generate (domain, data, label) samples as the following, $z_i \sim \mathcal{M}(\bm{\pi_i})$, $\vx_i \sim \mathcal{N}(0,\mI_d)$, $y_i \sim \mathcal{N}({\vw_{z_i}^*}^\top {\mB^*}^\top \vx_i,\sigma)$ where $\sigma=10^{-3}$ controls label observation errors, $\mathcal{M}(\bm{\pi_i})$ is a multinomial domain distribution with parameter $\bm{\pi_i}=[\pi_{i,1},...,\pi_{i,M}]\in \Delta^M$. The hyper-parameters of domain distributions $\bm{\pi_i}$ are drawn from a Dirichlet distribution, i.e., $\bm{\pi_i} \sim Dir(\alpha \vp)$, where $\vp \in \Delta^{M}$ is a prior domain distribution over $M$ domains, and $\alpha > 0$ is a concentration parameter controlling the heterogeneity of domain distributions among clients. The largest domain distributions heterogeneity is achieved as $\alpha \rightarrow 0$ where each client contains data only from a single randomly selected domain. On the other hand, when $\alpha \rightarrow \infty$, all clients have identical domain distributions that equal to the prior $\vp$. We generate ground-truth representation $\mB ^* \in \sR^{d\times k}$ and domain specific heads ${\vw_{m}^*}, \forall m \in [M]$ by sampling and normalizing Gaussian matrices. 

% \setlength{\intextsep}{0pt}%
% \setlength{\columnsep}{8pt}%
% \begin{wrapfigure}{r}{0.5\textwidth}
% \vspace{-8mm}
%   \begin{center}
%     \includegraphics[width=0.5\textwidth]{figures/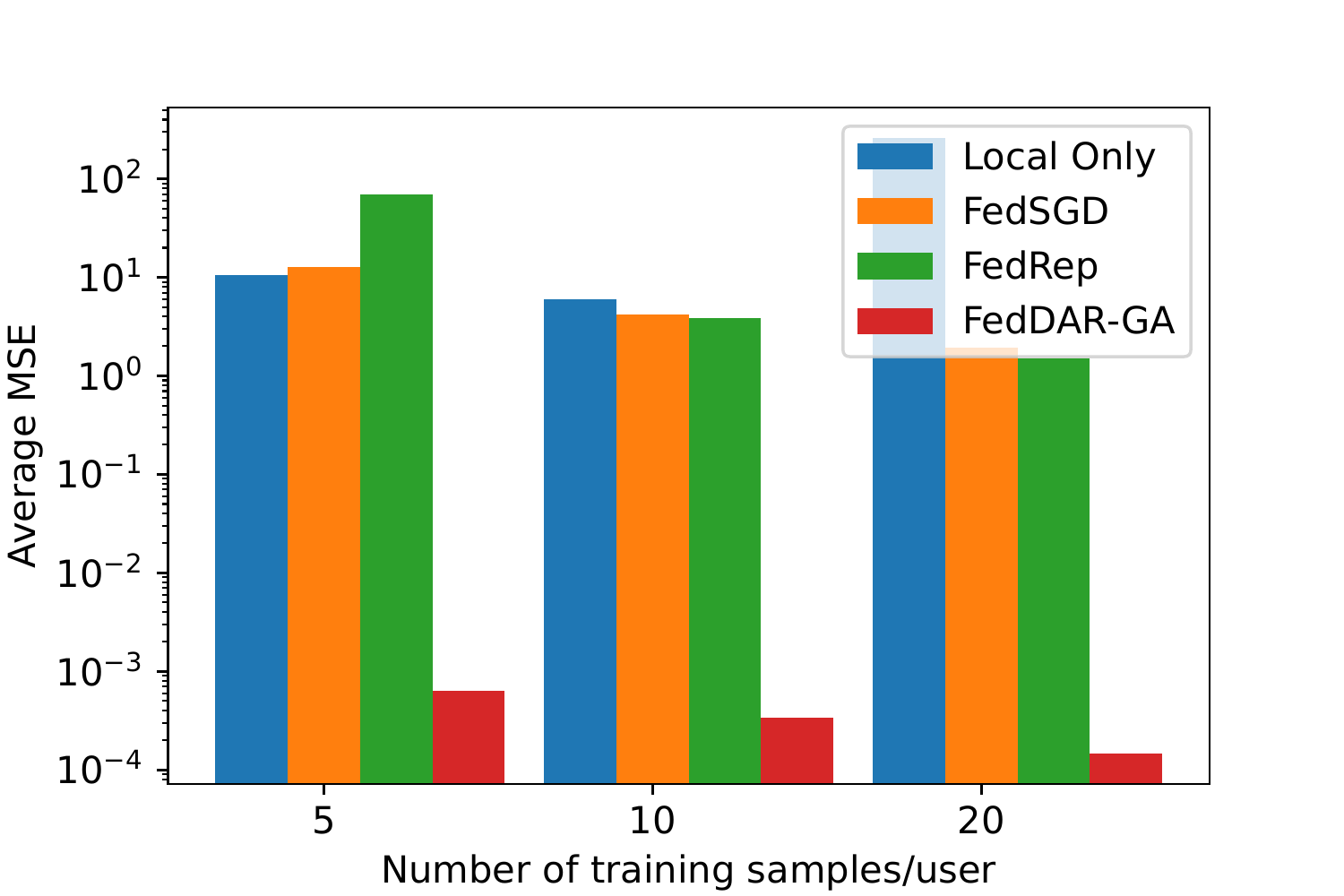}
%   \vspace{-6mm}
%     \caption{Model performance under different number of training samples per client. Experiment sets 100 clients, 5 domains.}
%     \label{fig:synthetic_mse}
%   \end{center}
% % \caption{MSE for different methods with different numbers of local data at each client, where $d=20, k=2, n=100, M=5$.}
% \end{wrapfigure}

% \begin{figure}
% \centering
% \includegraphics[height=5cm]{figures/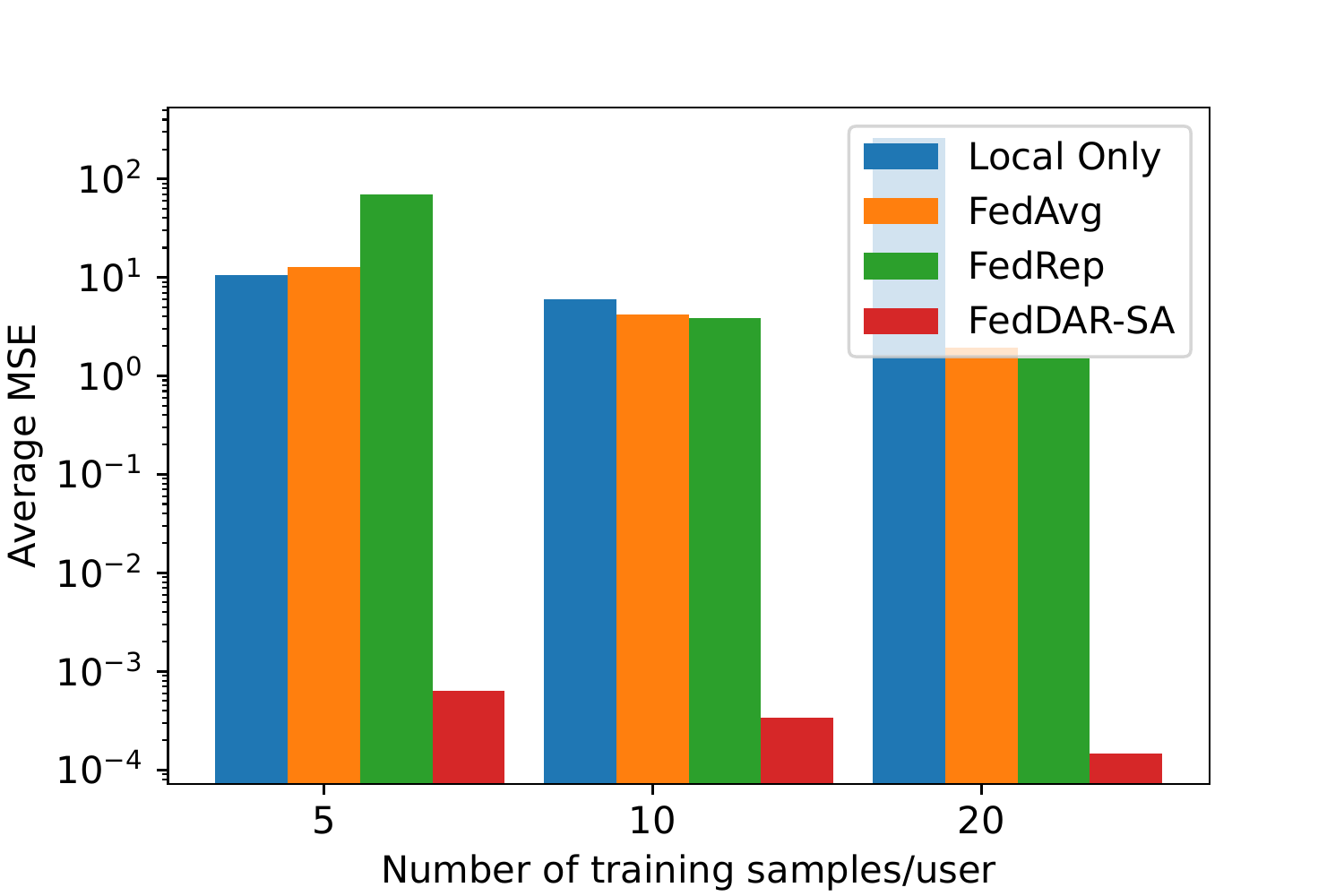}
% \caption{MSE for different methods with different numbers of local data at each client, where $d=20, k=2, n=100, M=5$}
% \label{fig:synthetic_mse}
% \end{figure}

%\begin{figure}
%\centering
%\includegraphics[height=6.5cm]{figures/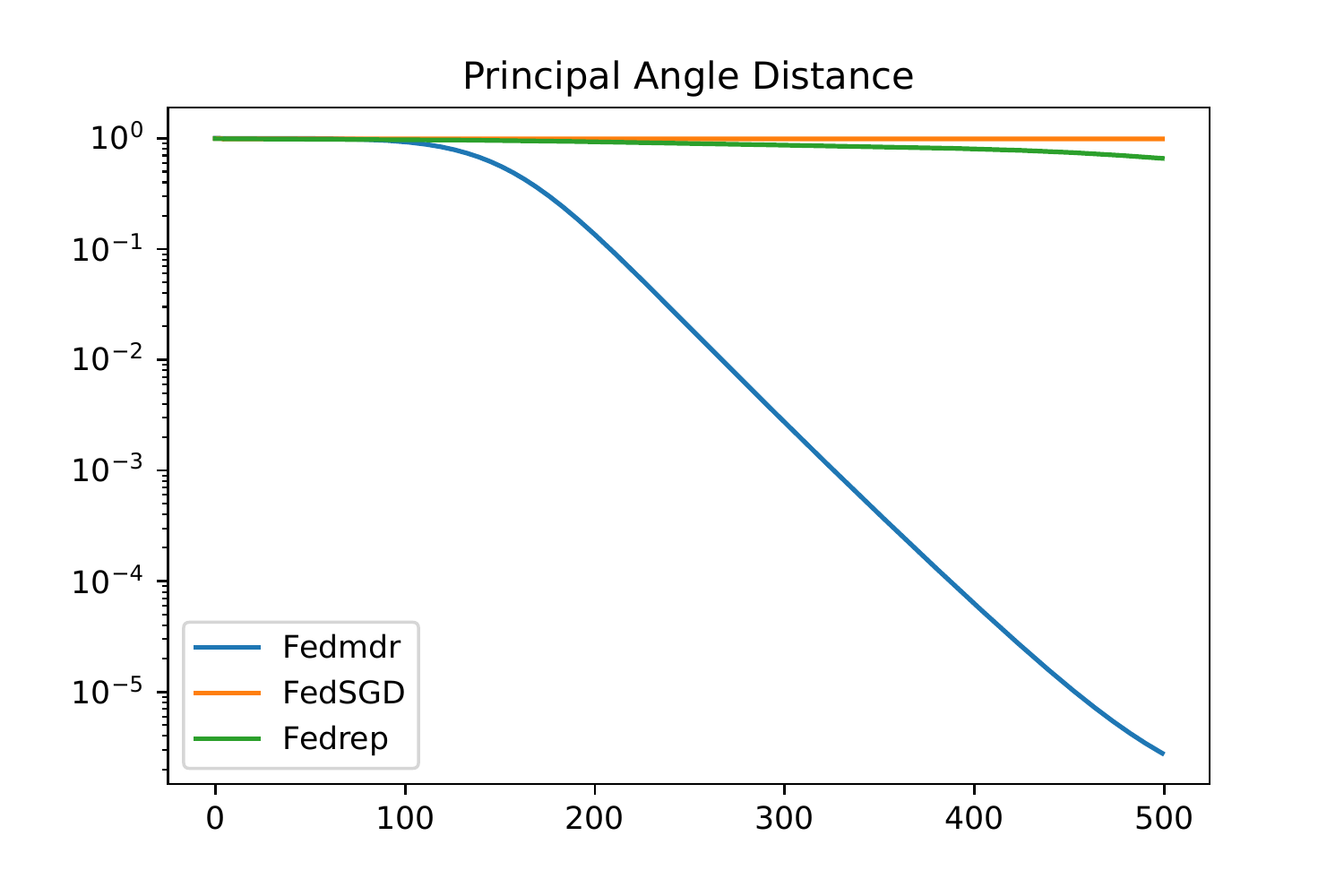}
%\caption{Comparison of (principal angle) distances between the ground truth and learned representations by different methods. $d=10$, $k=2$, $n=100$, $M=5$, $n=10$ \zzz{need to add fedmdr without fancy aggregation and change it to average trajectories of multiple runs}}
%\label{fig:synthetic_convergence}
%\end{figure}

\Figref{fig:synthetic_mse} shows result of our experiments where we set $n=100$ clients, $M=5$ domains, feature dimension $k=2$. We varies the number of training samples per clients from $5$ to $20$. The result shows that \our-SA, achieves four orders of magnitude smaller errors than all the baselines: (1) Local-Only where each client train a model using its own data; (2) FedAvg which learns a single shared model; (3) FedRep which learns shared representation and client-specific heads. The results demonstrate that our method overcomes the heterogeneity of domain distributions across clients. \our-WA fails to converge under such setting, confirming the effectiveness of proposed second-order aggregation.

% that our proposed method, \our, can overcome the heterogeneity of domain distributions and the diversity among domains to learn a model which includes one common representation and domain-specific heads. Our method can achieve small errors no matter .  one shared global model(FedSGD), client-wise personalized model(FedRep) and models trained with local data only all fail.
\vspace{-5pt}
\subsection{Real Data with Controlled Distribution}
% We now validate our method with deep neural networks and a real face dataset with multiple domains.
% \vspace{-2mm}
\textbf{Dataset and Model.} We use FairFace~\cite{karkkainen2019fairface}, a public face image dataset containing 7 race groups which are considered as the domains. Each image is labeled with one of 9 age groups and gender. We use the age label as the target to build a multi-class age classifier. We created a FL setting via dividing training data to $n$ clients without duplication. Each client has a domain distribution $\bm{\pi_i} \sim Dir(\alpha \vp)$ sampled from a Dirichlet distribution. 
% and assign the corresponding number of images from 7 domains from the training set of FairFace without duplication. 
The total number of samples at each client $L_i=500$ is set to be the same in all experiments. We control the heterogeneity of domain distributions by altering $\alpha$. The label distributions are uniform for all the clients.

\textbf{Implementation and Evaluation.} We use Imagenet\cite{deng2009imagenet} pre-trained ResNet-34~\cite{he2016deep} for all experiments on this dataset. All the methods are trained for $T=100$ communication rounds. We use Adam optimizer with a learning rate of $1\times 10^{-4}$ for the first $60$ rounds and $1\times 10^{-5}$ for the last $40$ rounds.

Our evaluation metrics are the classification accuracy on the whole validation set of FairFace for each race group. We don't have extra local validation set to each client since we assume the data distribution within each domain is consistent across the clients. The numbers reported are the average over the final $10$ rounds of communication following the standard practice in~\cite{collins2021exploiting}.

Table \ref{table:face_age} report the results of our experiments. In general, our \our~achieved the best performance compared with the baselines.% \hhh{Maybe need some general description of the result before jumping the following points?}

\textbf{Effect of $k$.} The limitation of using \our-SA instead of \our-WA is the need of tuning the dimension of representation $k$. Figure \ref{fig:effect-k} shows results of the average domain test accuracy with different $k$. We can see that \our-SA can achieve better accuracy with a properly chosen $k$. 

\begin{figure}[t!]
  \begin{minipage}{.55\linewidth}
    \centering
      \vspace{-4mm}
        \includegraphics[width=0.9\textwidth]{bar_fig1.pdf}
      \vspace{-4mm}
        \caption{Performance under different number of training samples per client (100 clients, 5 domains).}
        \label{fig:synthetic_mse}        
        \vspace{-2mm}

  \end{minipage}
    \hfill
  \begin{minipage}{.43\linewidth}
    \centering
        \includegraphics[width=0.9\textwidth]{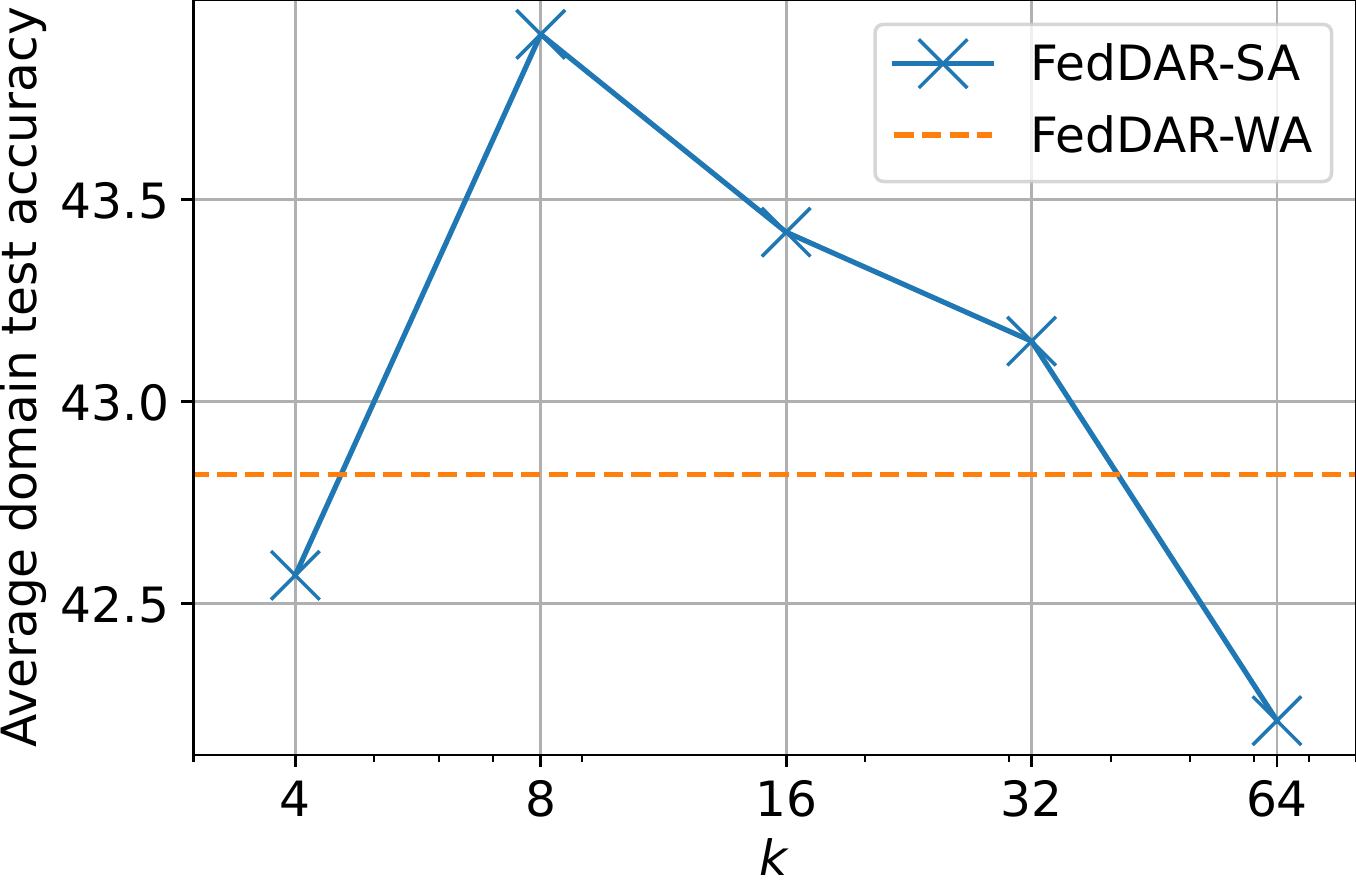}
        \vspace{-3mm}
        % \caption{Age classification accuracy (FairFace\cite{karkkainen2019fairface}) as a function of representation dimension $k$. The number of clients $n=5$. The number of domains $M=7$. The heterogeneity parameter $\alpha =1$.}
        \caption{Age classification accuracy (FairFace\cite{karkkainen2019fairface}) as a function of representation dimension $k$. (5 clients, 7 domains, heterogeneity parameter $\alpha =1$)}
        \label{fig:effect-k}
  \end{minipage}%
  \vspace{-3mm}
\end{figure}

% \begin{figure}[t]
% \centering
% \includegraphics[width=0.43\textwidth]{figures/k.pdf}
% \includegraphics[width=0.45\textwidth]{figures/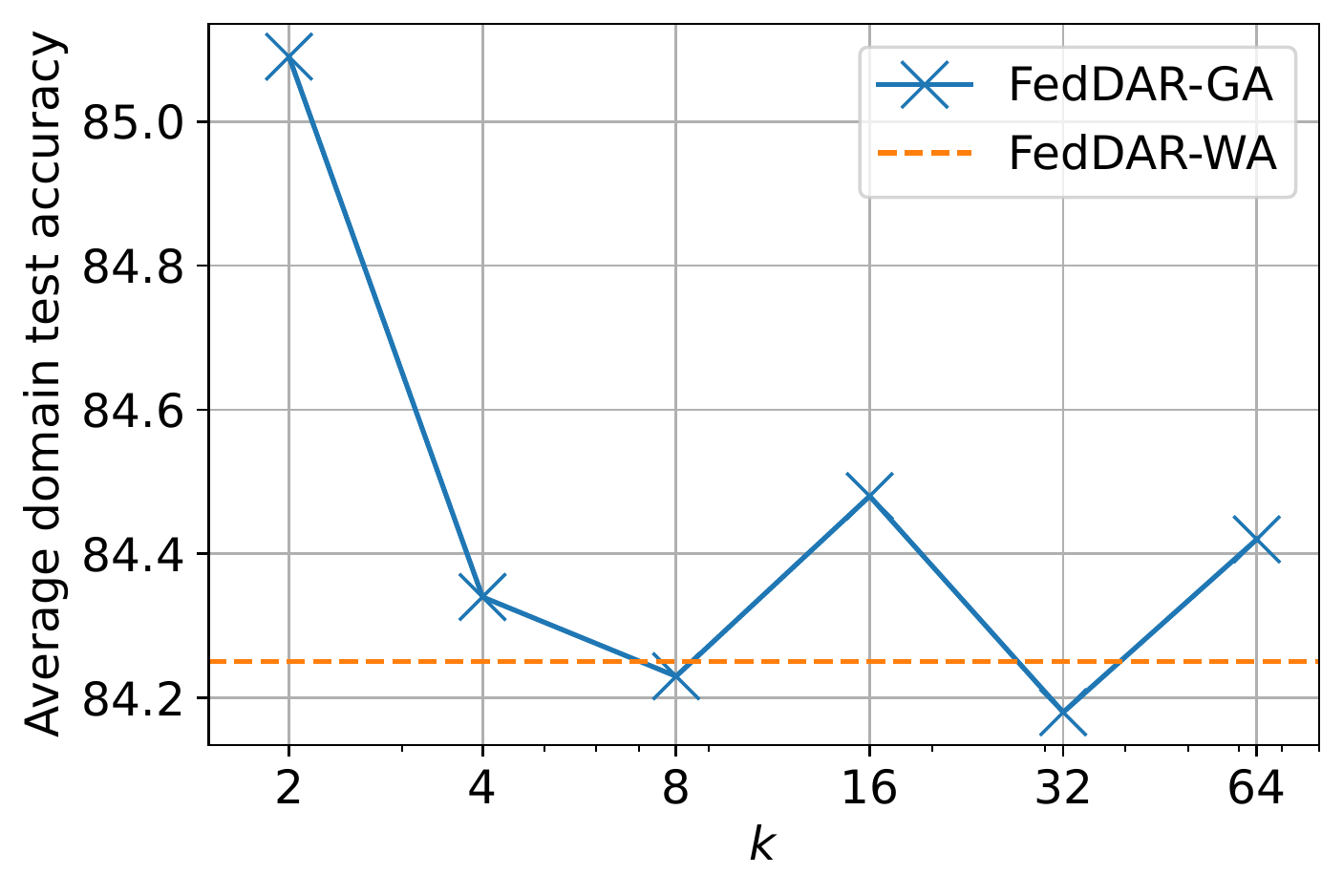}
% \vspace{-2mm}
% \caption{Effect of dimension of representation $k$ for age classification (left) and gender classification (right) on FairFace\cite{karkkainen2019fairface}, where the number of clients $n=5$, the number of domains $M=7$, the heterogeneity parameter $\alpha =1$.}
% \label{fig:effect-k}
% \end{figure}

\textbf{Robustness to Varying Levels of Heterogeneity.} From the result with various $\alpha$, we can observe that the performance of \our-SA is very stable no matter how heterogeneous the domain mixtures are. However the baselines' accuracy decrease when $\alpha$ becomes smaller.

\setlength{\tabcolsep}{4pt}
\begin{table}[t]
\caption{Min, max and average test accuracy of age classification across 7 domains ({\it race groups}) on FairFace with number of clients $n=5$, number of samples at each client $L_i=500$}
\label{table:face_age}
\centering
\resizebox{\textwidth}{!}{%
\begin{tabular}{l l ccc| ccc| ccc| ccc}
\toprule
\multirow{2}{*}{Task} & \multirow{2}{*}{Method} & \multicolumn{3}{c|}{$\alpha=0.1$} & \multicolumn{3}{c|}{$\alpha=0.5$}      & \multicolumn{3}{c|}{$\alpha=1$}      & \multicolumn{3}{c}{$\alpha=100$}              \\ 
  & & Max    & Min    & Avg    & Max           & Min  & Avg  & Max           & Min  & Avg  & Max           & Min  & Avg           \\\midrule \midrule
\multirow{4}{*}{Age} & FedAvg                        & 44.1   & 37.3   & 39.8    & 44.3 & 38.9 & 41.0 & 44.0          & 38.7 & 40.8 & 45.0          & \textbf{40.9} & 42.4          \\
& FedAvg + Multi-head           & 46.8   & 32.4   & 39.8  & \textbf{49.1}          & 34.9 & 40.0  & \textbf{51.1}          & 34.7 & 40.3 & \textbf{49.6} & 36.4 & 39.8 \\
& FedDAR-WA                     & 47.7   & 32.7   & 39.8  & 47.3 & 38.2 & 41.0 & 49.6 & 40.0 & 42.8  & 47.1          & 38.9 & 41.4          \\
& FedDAR-SA &
  \textbf{49.0} &
  \textbf{40.0} &
  \textbf{42.9} &
  47.8 &
  \textbf{40.6} &
  \textbf{42.8} &
  48.6 &
  \textbf{41.1} &
  \textbf{43.9} &
  
  48.3 & 40.2 & \textbf{42.8} \\
 %%%%% gender %%%%% 
%  \midrule
%  
%  \multirow{4}{*}{Gender} & FedAvg                   & \textbf{92.0}     & 71.7   %&83.9    &91.0    &77.4  &84.2 &90.5  &76.8  &84.5  &90.5  &77.1  &84.7           %\\
%& FedAvg + Multi-head           & 89.8   &  49.7  & 77.8   &   91.3        & 77.0 & %83.6 &   90.5        & \textbf{78.2} & 84.6 & 91.1 & 77.5 & 84.5 \\
%& FedDAR-WA                     & 89.8   & 53.4   & 80.9   & 91.5 & 76.7 & 84.3 & %90.8 & \textbf{78.0} & 84.6 & 90.0         & 76.8 & 84.1          \\
%& FedDAR-GA &
%  \textbf{92.2} &
%  \textbf{73.4} &
%  \textbf{85.1} &
%   \textbf{91.4}&
%   \textbf{78.2}&
%   \textbf{85.1}&
%   \textbf{92.2}&
%  \textbf{78.0} &
%  \textbf{85.8} &
%  \textbf{92.2}
%   & \textbf{78.1} & \textbf{85.6} \\ 
  \bottomrule
\end{tabular}%
}
\vspace{-8pt}

\end{table}
\vspace{-5pt}
\setlength{\tabcolsep}{1.4pt}
\subsection{Real Data with Real-World Data Distribution}

\textbf{Dataset and Model.} We use the EXAM dataset~\cite{dayan2021federated}, a large-scale, real-world healthcare FL study. We use part of the dataset including 6 clients with a total of 7,681 cases. We use race groups as domains. The dataset is collected from suspected COVID-19 patients at the visit of emergency department~(ED), including both Chest X-ray~(CXR) and electronic medical records~(EMR). We adopt the same data preprocessing procedure and the model as~\cite{dayan2021federated}. Our task is to predict whether the patient received the oxygen therapy higher than high-flow oxygen in 72 hours which indicates severe symptoms.

\textbf{Baselines.}
We compare our \our~against various baselines including: (1) methods that learn one global model, FedAvg\cite{mcmahan2017communication}, FedProx\cite{li2020federated}, FedMinMax\cite{papadaki2021federating} along with their local fine-tuned variants; \rebuttal{(2) train $M$ separate models with FedAvg; (3) train one global model with FedAvg fisrt, then fine-tune on $M$ domains separately with FedAvg;} (4) client-wise personalized FL approaches, FedRep\cite{collins2021exploiting}, FedPer\cite{arivazhagan2019federated}, LG-Fedavg\cite{liang2020think}.

\textbf{Implementation and Evaluation.}
We apply 5-fold cross validation. All the models are trained for $T=20$ communication rounds with Adam optimizer and a learning rate of $1\times 10^{-4}$. The models are evaluated by aggregating predictions on the local validation sets then calculating the area under curve (AUC) for each domain. The average AUCs on local validation set of clients are also reported.

\textbf{Average Performance Across Domains and Clients.} Table~\ref{table:exam} shows the average of AUCs across domains and clients. We can see that our methods, both \our-WA and \our-SA, achieve significantly better performance than all the baselines under both domain-wise and client-wise metrics. The gap between our domain-wise personalized approach and other client-wise personalized baselines shows the validity of learning domain-wise personalized model facing the diversity across domains. The reason that fine-tuning methods 
induce worse result is mainly because of the imbalanced label distribution. Each local training dataset doesn't have enough positive cases to do proper fine-tuning.

\textbf{Fairness Across Domains.} The AUCs of each specific domain in Table~\ref{table:exam}, show that our proposed \our~ method uniformly increases the AUC for each domain. The column of the minimum AUC among domains also verifies that our method indeed improve the fairness across the domains.

\begin{wraptable}{r}{0.45\textwidth}
\vspace{-18pt}
\caption{Ablation results of different components' contribution in \our.}
\vspace{5pt}
\label{table:ablation}
\centering
\resizebox{0.44\textwidth}{!}{%
\tiny
\begin{tabular}{cccccc|cc}
\toprule
RW & MH & \rebuttal{DI}& Alter & Proj & AGG & 
  \begin{tabular}[c]{c}Domain\\ Avg / Min\end{tabular} &
  \begin{tabular}[c]{c}Client\\ Avg\end{tabular} 
\\ \midrule \midrule
           &  &          &            &            &  N/A  & .861 / .773 & .856 \\
\checkmark &  &           &            &            &  N/A  & .881 / .824 & .873 \\ \midrule
\rebuttal{\checkmark} &  &\rebuttal{\checkmark}&            &            & \rebuttal{N/A} & \rebuttal{.880} / \rebuttal{.825} & \rebuttal{.866} \\

\checkmark & \checkmark &&            &            & WA & .885 / .834 & .870 \\
\checkmark & \checkmark &&            & \checkmark & WA & .877 / .817 & .870 \\
\checkmark & \checkmark &&            & \checkmark & SA & .878 / .826 & .871 \\ \midrule
\rebuttal{\checkmark} &  &\rebuttal{\checkmark}&  \rebuttal{\checkmark} &            & \rebuttal{N/A} & \rebuttal{.867} / \rebuttal{.806} & \rebuttal{.852} \\
\checkmark & \checkmark && \checkmark &            & WA & .912 / .872 & .898 \\
\checkmark & \checkmark && \checkmark & \checkmark & WA & \textbf{.918 / .863} & .904 \\
\checkmark & \checkmark && \checkmark & \checkmark & SA & \textbf{.919 / .868} & \textbf{.912} \\ \bottomrule
\end{tabular}%
}
\vspace{-8pt}
\end{wraptable}
\textbf{Ablation Studies.} i) \emph{re-weighting }(RW): First two rows in Table \ref{table:ablation} shows adding sample re-weighting can significantly improve the fairness across the domains. The minimum AUC among domains is improved by a large margin ($>0.05$); ii) \emph{multi-head }(MH), \rebuttal{\emph{domain as input feautre} (DI)} and \emph{\rebuttal{alternating} update }(Alter): Comparing three blocks in Table \ref{table:ablation}, we can see that adding multi-head itself cannot bring any improvement. We conjecture that alternating update prevents the overfitting of the heads with limited samples .This can also be reflected by the result in Table \ref{table:face_age}, where FedAvg+MH tends to perform badly on certain underrepresented domain especially when domain distributions are highly heterogeneous~($\alpha$ is small). \rebuttal{Meanwhile, using domain labels directly as feature input is not as good as multi-head, and not compatible with alternating update}; iii) \emph{projection }(Proj) and \emph{aggregation method }(AGG): Results in Table \ref{table:ablation} shows that using second-order aggregation with the projection of the features gives the best result.

\setlength{\tabcolsep}{4pt}
\begin{table}[t]
\caption{AUCs result on EXAM dataset with the domain being {\it race group}. Numbers are the means and standard deviations of metrics from 5-fold cross validation }
\label{table:exam}
\centering
\resizebox{\textwidth}{!}{%

\begin{tabular}{l|ccccc|cc|c}
\toprule
Methods & White       & Black       & Asian       & Latino      & Other       & Min         & Avg        & Client Avg \\ \midrule \midrule
Local  & .761$\pm$.023 & .815$\pm$.055 & .838$\pm$.039 & .889$\pm$.076 & .840$\pm$.038 & .759$\pm$.026 & .829$\pm$.032& .795$\pm$.023\\
\midrule
\rebuttal{separate FedAvg} & \rebuttal{.796$\pm$.022} & \rebuttal{.694$\pm$.015} & \rebuttal{.788$\pm$.047} & \rebuttal{.649$\pm$.133} & \rebuttal{.826$\pm$.046} & \rebuttal{.606$\pm$.080} & \rebuttal{.751$\pm$.026}& \rebuttal{.759$\pm$.027}\\
FedAvg  & .830$\pm$.027 & .854$\pm$.045 & .887$\pm$.022 & .834$\pm$.102 & .900$\pm$.038 & .773$\pm$.049 & .861$\pm$.019& .856$\pm$\rebuttal{.020}\\
FedAvg + FT  & .783$\pm$.044 & .835$\pm$.025 & .892$\pm$.015 & .817$\pm$.136 & .892$\pm$.048 & .727$\pm$.093 & .844$\pm$.024& .845$\pm$\rebuttal{.016}\\
\rebuttal{FedAvg + separate FT} & \rebuttal{.832$\pm$.032} & \rebuttal{.846$\pm$.043} & \rebuttal{.903$\pm$.025} & \rebuttal{.869$\pm$.099} & \rebuttal{.911$\pm$.026} & \rebuttal{.784$\pm$.054} & \rebuttal{.872$\pm$.017} & \rebuttal{.863$\pm$.024}\\
FedProx  & .834$\pm$.017 & .864$\pm$.056 & .903$\pm$.035 & .880$\pm$.085 & .912$\pm$.030 & .808$\pm$.030 & .879$\pm$.023& .868$\pm$\rebuttal{.012}\\
FedProx + FT & .806$\pm$.023 & .842$\pm$.049 & .910$\pm$.025 & .925$\pm$.085 & .898$\pm$.031 & .798$\pm$.025 & .876$\pm$.010& .858$\pm$\rebuttal{.014}\\
FedMinMax  & .839$\pm$.027 & .867$\pm$.054 & .894$\pm$.039 & .916$\pm$.053 & .903$\pm$.034 & .823$\pm$.032 & .884$\pm$.020& .872$\pm$\rebuttal{.016}\\\midrule
FedRep  & .837$\pm$.020 & .869$\pm$.050 & .888$\pm$.042 & .913$\pm$.083 & .910$\pm$.028 & .812$\pm$.028 & .884$\pm$.025& .867$\pm$\rebuttal{.013}\\
FedPer  & .835$\pm$.025 & .865$\pm$.073 & .909$\pm$.037 & .916$\pm$.036 & .911$\pm$.031 & .813$\pm$.047 & .887$\pm$.021& .873$\pm$\rebuttal{.011}\\
LG-FedAvg & .830$\pm$.029 & .858$\pm$.052 & .906$\pm$.032 & .902$\pm$.050 & .903$\pm$.033 & .814$\pm$.034 & .880$\pm$.019& .867$\pm$\rebuttal{.017}\\ \midrule

\our-WA       & \textbf{.884$\pm$.007} & \textbf{.896$\pm$.017} & .902$\pm$.034 & \textbf{.952$\pm$.041} & .928$\pm$.022 & \textbf{.872$\pm$.015 }& .912$\pm$.004& .898$\pm$\rebuttal{.006}\\
\our-SA & \textbf{.888$\pm$.004} & \textbf{.895$\pm$.038} & \textbf{.928$\pm$.032} & .939$\pm$.046 & \textbf{.948$\pm$.016} & \textbf{.868$\pm$.020} & \textbf{.919$\pm$.014}& \textbf{.912$\pm$.001}\\ 
\bottomrule
\end{tabular}%
}
\vspace{-12pt}

\end{table}

\section{Conclusions}
In this paper, we propose a novel domain-aware personalized federated learning framework based on the mixture of domain data distribution assumption. Our \our~approach is able to learn a global representation as well as domain-specific heads with balanced performance for each domain despite the heterogeneity of domain distributions across the clients. We provide both theoretical and empirical justification for its effectiveness. Our method is tested with face recognition task and a real-world medical imaging FL dataset, and can be easily extended to other complicated tasks like object detection and semantic segmentation due to its simplicity and flexibility. 

The limitations of our method include: i) the domain information for all samples is required to be known; ii) the heterogeneity of label distributions is not considered; \rebuttal{iii) the extra communication cost of sending Hessian matrices can be expensive, especially when output dimension is big.} We plan to address these issues in the future work. Other future research directions include further boosting the fairness across domains and study the setting where domains are structured, hierarchical, continuously indexed~\cite{wang2020continuously,nasery2021training} or  multi-dimensional (characterized by multiple factors)~\cite{wang2020continuously}. 

\clearpage
% ---- Bibliography ----
%
% BibTeX users should specify bibliography style 'splncs04'.
% References will then be sorted and formatted in the correct style.
%
% Optionally include extra information (complete proofs, additional experiments and plots) in the appendix.
% This section will often be part of the supplemental material.

\bibliographystyle{plain}
\bibliography{references,federated-learning}

\newpage
\appendix
\section{\our~for Linear Representation}
\label{sec:theory-linear}
\subsection{Setup}

We retain the setup for linear regression considered at the start of \Secref{sec:problem-cmp}. We additionally define $\mW^* \triangleq [\vw_1^*,\cdots,\vw_M^*]^\top \in \bbR^{M \times k}$ as the concatenation of domain specific heads. For notational convenience, we let $(\vx_{i,m},y_{i,m})$ denote an (input, output) sample coming from client $i$ and the $m$-th domain. To measure the distance between any two matrices $\mA, \mB$ with the same dimensions, we use the \emph{principal angle distance}~\cite{golub2013matrix}, given by $\mathrm{dist}(\mA, \mB) \triangleq \norm{\mA_{\bot}^\top \mB}_2$, where $\mA_{\bot}$ denotes a matrix whose columns form a basis for the orthogonal complement of the range of $\mA$. To simplify analysis, we further make the following assumptions. 

\begin{assumption}[Sub-Gaussianilty]
For each $m \in [M]$ and $i \in [n]$, the samples $\vx_{i,m} \in \bbR^d$ are independent, mean zero, have covariance $\mI_d$, and has subgaussian norm 1, i.e. for every $\vv \in \bbR^d$, $\bbE[\exp(\vv^\top \vx_{i,m})] \leq \exp(\norm{\vv}^2/2)$.
\label{assumption:data_mean_zero_subgaussian}
\end{assumption}

% \vspace{-4mm}
\begin{assumption}[Domain diversity]
Let $\sigma_{\min,*} \triangleq \sigma_{\min}(\frac{1}{\sqrt{M}} \mW^*)$, i.e., $\sigma_{\min,*}$ is the minimum singular value of the head matrix. Then $\sigma_{\min,*} > 0$.
\label{assumption:domain-diversity}
\end{assumption}

% \vspace{-3mm}
\begin{assumption}[Ground truth normalization]
    The true domain parameters satisfy $\frac{1}{2} \sqrt{k} \leq
    \norm{\vw_{m}^*} \leq \sqrt{k}$ for each $m \in [M]$, and $\mB^*$ has orthonormal columns.
\label{assumption:incoherence_orthonormal}
\end{assumption}

All the above assumptions aim to simplify the theoretical analysis whilst only imposing mild constraints on the data distribution and the parameters of the target functions. Similar assumptions have also been adapted in prior work~\cite{collins2021exploiting}. 
% Note that Assumption \ref{assumption:domain-diversity} requires that the number of domains, $M$, is at least as large as $k$, where $k$ is the dimension of the embedding space that $\mB$ maps data points $x$ into. 
% Assumption \ref{assumption:domain-diversity} is closely related to but different from the assumption of client diversity in~\cite{collins2021exploiting}. 
% We also have the following assumption on $\mW^*$ and $\mB^*$.

\subsection{\our~Adapted to Linear Regression}

\begin{algorithm}[t]
\caption{\textsc{\our}~for linear regression}
\label{algorithm:our-linear}
\begin{algorithmic}
\STATE {\bfseries Input:} Step size $\eta$; number of rounds $T$
\STATE {\bfseries Client initialization: } each agent $i \in [n]$ collects $L^0$ samples, and sends $\mZ_i := \sum_{i=1}^{L^0} (y_{i}^{0,j})^2 \vx_{i}^{0,j}(\vx_{i}^{0,j})^\top$ to the server. 
\STATE {\bfseries Server initialization: } finds $\mU\mD\mU^\top \leftarrow \mbox{rank-k SVD}(\frac{1}{nL^0}\sum_i^n \mZ_i)$; sets $\mB^0 \leftarrow \mU.$ 
\FOR{$t = 0,1,\dots,T$}
\STATE {Server sends current $\mB^t$ to clients.}
\STATE {\bfseries Client computation for } $\mW^{t+1}$: 
\FOR{client $i \in [n]$}
\STATE Selects $L$ new samples $\{(\vx_i^j, y_i^j)\}$. 
\STATE Computes $\nabla_{\vw_{m}} f_{i,m}^t(\vw_{m},\mB^t) = \mA_{i,m}^t \vw_m - \va_{i,m}^t$ for each domain $m \in [M]$.
% \STATE Sends ($\nabla_{\vw_{m}} f_{i,m}^t(\vw_{m},\mB^t)$, $L_{i,m}^t$) back to server
\STATE Sends $(\mA_{i,m}^t, \va_{i,m}^t$, $L_{i,m}^t$) back to server.
\ENDFOR
\STATE {\bfseries Server update for $\mW^{t+1}$}: 
\STATE Server chooses $\vw_{m}^{t+1} \in \left\{\vw_{m} \in \bbR^k: \nabla_{\vw_{m}} \left( \frac{1}{\sum_i L_{i,m}^t}\sum_{i=1}^n f_{i,m}^t(\vw_{m},\mB^t)\right) = 0 \right\}$, $\forall m \in [M]$, i.e., $\vw_{m}^{t+1}$ that satisfies $(\sum_{i} \mA_{i,m}^t) \vw_{m}^{t+1} = \sum_{i} \va_{i,m}^t$.
\STATE Sends $\mW^{t+1} = [\vw_{1},\cdots,\vw_{M}]^\top \in \bbR^{M \times k}$ to clients. 
\STATE {\bfseries Client computation for $\mB^{t+1}$:} 
\FOR{client $i \in [n]$} 
\STATE Selects $L$ new samples $\{\vx_i^j,y_i^j\}$.
\STATE Computes $\nabla_{\mB} f_{i,m}^{t'}(\vw_{m}^{t+1}, \mB^t) = \mC_{i,m}^t \mB^t - \vc_{i,m}^t$ for each $m \in [M]$. 
\STATE Sends ($\nabla_{\mB} f_{i,m}^{t'}(\vw_{m}^{t+1},\mB^t)$, $L_{i,m}^{t'}$) back to server.
\ENDFOR
\STATE {\bfseries Server update for $\mB^{t+1}$:} 
\STATE Server computes $\tilde{\mB}^{t+1} \leftarrow \mB^t - \eta \frac{1}{m}\sum_{m=1}^M \frac{1}{\sum_i L_{i,m}^{t'}} \sum_{i=1}^n \nabla_{\mB} f_{i,m}^{t'}(\vw_{m}^{t+1}, B)$.
\STATE Server performs QR decomposition $\hat{\mB}^{t+1}, \mR^{t+1} = \texttt{QR}(\tilde{\mB}^{t+1})$. \\
Server updates $\mB^{t+1} \leftarrow \hat{\mB}^{t+1}$.
\ENDFOR
\end{algorithmic}
\end{algorithm}

% \vspace{-2mm}
We analyze an adapted version of our \our~algorithm. Since the linear regression problem has an analytic solution, to ease analysis, we update the heads $\{\vw_m\}_{m=1}^M$ at the server in closed form using local gradient information. Meanwhile, we update the representation $\mB$ by taking a step using the averaged local gradients. \Algref{algorithm:our-linear} shows the procedure of this adapted version.
% Since linear regression problem has a nice analytic solution
% we simplify the local updates for both representation and heads from gradient-based methods to closed form solutions. Similarly, the head aggregation step is also computed in closed form.\

\paragraph {The local objective} for $i$-th client in $m$-th domain at $t$-th iteration, $f_{i,m}^t(\vw_{m},\mB^t)$ is defined as the following,

\begin{align*}
    f_{i,m}^t(\vw_{m},\mB^t) \triangleq \frac{1}{2}\sum_{j=1}^{L_{i,m}^t} (y_{i,m}^j - \vw_{m}^\top \mB^\top \vx_{i,m}^j)^2, 
\end{align*}
where $L_{i,m}^t$ is the number of samples from domain $m$ at client $i$. We assume in each iteration the data points $\{\vx_{i,m}^j,y_{i,m}^j\}_{j \in [L_{i,m}^t]}$ are all newly sampled from the distribution. We denote $L = \sum_{m} L_{i,m}^t$. Note that since the objective function has a quadratic form, thus its gradient w.r.t either $\vw_m$ or $\mB$ has a linear form of $\mA_{i,m}\vw_m - \va_{i,n}$ or $\mC_{i,m}\mB - \vc_{i,m}$ which we write down explicitly in Appendix B. After every global update of the representation $\mB$, we apply an additional QR decomposition to normalize it to be column-wise orthogonal. 
% We note that  comes from the fresh data generated in the computation for $\mW^{t+1}$. Note that $L_{i,m}^t$ is a random variable denoting the number of datapoints from domain $m$ at time $t$, and that $\sum_{m} L_{i,m}^t = L$. 

% \vspace{-2mm}
% \paragraph{Client computation for $\mB^{t+1}$.} We note that $f_{i,m}^{t'}(\vw_{m},\mB^t)$ is defined similarly as the cost in client computation for $\mW^{t+1}$, but with different set of samples. Same goes for $L_{i,m}^{t'}$.

\subsection{Convergence Analysis}

We first present a theorem that states our adapted \our (\Algref{algorithm:our-linear}) enjoys linear convergence. The theorem is followed by multiple remarks which highlight key detailed points of our convergence result.

\begin{theorem}[Algorithm \ref{algorithm:our-linear} convergence]
\label{theorem:fed-MDR-linear}
Define $E_0 := 1 - \mathrm{dist}^2(\mB^0,\mB^*)$, $\bar{\sigma}_{\max,*} := \sigma_{\max} \left(\frac{1}{\sqrt{M}} \mW^* \right)$, $\bar{\sigma}_{\min,*} := \sigma_{\min}\left(\frac{1}{\sqrt{M}} \mW^* \right)$. Let $\kappa := \frac{\bar{\sigma}_{\max,*}}{\bar{\sigma}_{\min,*}}$. Suppose  
\begin{align}
\label{eq:requirement_on_L}
L \geq \tilde{\Omega}\left(\max \left\{\frac{dk^2 \kappa^4}{n E_0^2},\frac{k^2 \kappa^4}{E_0^2 \min_{m \in [M]}(\sum_{i=1}^m \pi_{i,m})}  \right\}\right).
\end{align}
Then, for any $T$ and any $\eta \leq 1/(4\bar{\sigma}_{\max,*}^2)$, with probability at least $1- T e^{-80}$,

\begin{align}
    \mathrm{dist}(\mB^T, \mB^*) \leq (1 - \eta E_0 \bar{\sigma}_{\min,*}^2/2)^{T/2} \mathrm{dist}(\mB^0,\mB^*).
\end{align}
\end{theorem}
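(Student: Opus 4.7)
The strategy is the standard population-plus-perturbation argument pioneered for shared-representation federated learning (e.g.\ the \textsc{FedRep} analysis of~\cite{collins2021exploiting}), adapted to the domain-indexed setting in~(\ref{eq:fedDAR-linear}). The first step is to identify the noiseless (infinite-sample) update. Because $\mathbb{E}[\vx \vx^\top]=\mI_d$, the closed-form head update becomes $\vw_m^{t+1}= \mB^{t,\top}\mB^* \vw_m^*$ in expectation, and plugging this into the gradient step for $\mB$ yields the population map
\begin{equation*}
\tilde{\mB}^{t+1} \;=\; \mB^t \;+\; \eta \bigl(\mI_d - \mB^t \mB^{t,\top}\bigr)\mB^*\,\Sigma_*\, \mB^{*,\top}\mB^t,
\qquad \Sigma_* := \tfrac{1}{M}\mW^{*,\top}\mW^*.
\end{equation*}
Multiplying on the left by $\mB^{*,\top}_\perp$ kills $\mB^*$, which gives
$\mB^{*,\top}_\perp \tilde{\mB}^{t+1} = \mB^{*,\top}_\perp \mB^t \bigl(\mI_k - \eta\, \mB^{t,\top}\mB^*\Sigma_*\mB^{*,\top}\mB^t\bigr)$
and hence a population contraction by a factor $1-\eta \bar{\sigma}_{\min,*}^2\,\sigma_{\min}^2(\mB^{*,\top}\mB^t) \le 1-\eta E_0 \bar{\sigma}_{\min,*}^2$, using $\sigma_{\min}^2(\mB^{*,\top}\mB^t)=1-\mathrm{dist}^2(\mB^t,\mB^*)$ and the inductive invariant $\mathrm{dist}(\mB^t,\mB^*)\le \mathrm{dist}(\mB^0,\mB^*)$.

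Next I would isolate the perturbation introduced by finite samples. The head system $\sum_i \mA_{i,m}^t\vw_m = \sum_i \va_{i,m}^t$ differs from its population version in two places: the $k\times k$ matrix $\tfrac{1}{L_m}\sum_i \mB^{t,\top}\hat{\Sigma}_{i,m}^t \mB^t$ deviates from $\mI_k$, and the right-hand side deviates from $\mB^{t,\top}\mB^*\vw_m^*$. Both are controlled by sub-Gaussian matrix Bernstein (Assumption~\ref{assumption:data_mean_zero_subgaussian}) in the $k$-dimensional subspace spanned by $\mB^t$, giving errors of order $\sqrt{k/L_m}$ up to log factors; the $k^2/E_0^2$ and $\kappa^4$ factors in~(\ref{eq:requirement_on_L}) arise when translating this into a bound on $\|\hat{\vw}_m^{t+1}-\mB^{t,\top}\mB^*\vw_m^*\|$ multiplied by the head norms and inverted through $\sigma_{\min,*}$. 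For the representation step, the averaged empirical gradient $\tfrac{1}{M}\sum_m \tfrac{1}{L_m}\sum_i \mC_{i,m}^t\mB^t-\vc_{i,m}^t$ is a sum of $n L$ independent $d\times k$ sub-Gaussian terms; the resulting deviation bound scales like $\sqrt{dk/(nL)}$, matching the requirement $L\gtrsim dk^2/(n E_0^2)$ after squaring and multiplying by $\kappa^4$.

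With both deviations quantified, I combine them into a single inequality of the form
\begin{equation*}
\bigl\|\mB^{*,\top}_\perp \tilde{\mB}^{t+1}\bigr\|_2 \;\le\; \bigl(1-\eta E_0 \bar{\sigma}_{\min,*}^2\bigr)\,\bigl\|\mB^{*,\top}_\perp \mB^t\bigr\|_2 \;+\; (\text{finite-sample slack}),
\end{equation*}
and calibrate~(\ref{eq:requirement_on_L}) so that the slack is at most $\tfrac{1}{2}\eta E_0 \bar{\sigma}_{\min,*}^2 \|\mB^{*,\top}_\perp \mB^t\|_2$, absorbing half the contraction gap. The QR step is benign: $\hat{\mB}^{t+1}$ and $\tilde{\mB}^{t+1}$ share a column space, so one only has to convert the bound on $\|\mB^{*,\top}_\perp \tilde{\mB}^{t+1}\|_2$ into a bound on $\mathrm{dist}(\mB^{t+1},\mB^*)=\|\mB^{*,\top}_\perp \tilde{\mB}^{t+1} (\mR^{t+1})^{-1}\|_2$ by controlling $\sigma_{\min}(\tilde{\mB}^{t+1})$ away from zero; this is a short calculation using $\eta\le 1/(4\bar{\sigma}_{\max,*}^2)$ and the fact that $\mB^t$ has orthonormal columns. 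The extra factor $1/2$ in the final rate $(1-\eta E_0\bar{\sigma}_{\min,*}^2/2)^{T/2}$ reflects this conversion together with the slack absorption, and a union bound over $T$ rounds yields the failure probability $T e^{-80}$.

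I expect the main obstacle to be the head-update concentration at the stated $\tilde{\Omega}(k^2)$ per-domain rate rather than the naive $\tilde{\Omega}(d)$. The key is that $\hat{\Sigma}_{i,m}^t$ only appears sandwiched between $\mB^t$ and $\mB^t{}^\top$, so all randomness acts in a $k$-dimensional effective space; making this rigorous requires a careful sub-Gaussian matrix Bernstein bound on $\mB^{t,\top}\hat{\Sigma}_{i,m}^t\mB^t-L_{i,m}^t\mI_k$ together with a concentration inequality for $\mB^{t,\top}\sum_j y_{i,m}^j \vx_{i,m}^j$ whose sub-Gaussian proxy depends on $\|\mB^*\vw_m^*\|\le\sqrt{k}$ (Assumption~\ref{assumption:incoherence_orthonormal}). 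A secondary subtlety is verifying the initialization: the spectral step on $\tfrac{1}{nL^0}\sum_i \mZ_i$ must be shown to satisfy $E_0>0$, which requires the scalar isotropic term in $\mathbb{E}[y^2 \vx\vx^\top]$ not to swamp the rank-$k$ signal $2\mB^*\Sigma_*\mB^{*,\top}$, and this is where domain diversity (Assumption~\ref{assumption:domain-diversity}) enters the argument.
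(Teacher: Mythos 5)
Your plan follows essentially the same route as the paper's proof: the head update is decomposed as $\mW^{t+1} = \mW^*(\mB^*)^\top\mB^t + F$ with $\lVert F\rVert_F \lesssim \delta_k\,\mathrm{dist}(\mB^t,\mB^*)\lVert \mW^*\rVert_F$, the representation step is written as the population contraction plus a perturbation term $H_Q$, both controlled by $\epsilon$-net/Bernstein arguments in the $k$-dimensional effective subspace, and the pieces are combined through the FedRep-style descent lemma of Collins et al., with a union bound over rounds. The one step your outline leaves implicit is that the per-domain counts $L_m$ are random multinomial sums, so the paper needs an additional Bernstein bound converting the deterministic per-client budget $L$ into a high-probability lower bound on $\min_{m} L_m$ --- which is precisely where the $\min_{m}\sum_{i}\pi_{i,m}$ factor in \eqref{eq:requirement_on_L} enters.
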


% \begin{theorem}[Algorithm \ref{algorithm:our-linear} convergence]
% \label{theorem:fed-MDR-linear}
% Define $E_0 := 1 - \mathrm{dist}^2(\mB^0,\mB^*)$, $\bar{\sigma}_{\max,*} := \sigma_{\max} \left(\frac{1}{\sqrt{M}} \mW^* \right)$, $\bar{\sigma}_{\min,*} := \sigma_{\min}\left(\frac{1}{\sqrt{M}} \mW^* \right)$. Let $\kappa := \frac{\bar{\sigma}_{\max,*}}{\bar{\sigma}_{\min,*}}$. Suppose  
% $L \geq \left(\frac{400dk^2}{n c}\right) \left(\frac{1}{\min\left\{1/2,8 E_0/(25 \cdot 5 \kappa^2) \right\}}\right)^2, \label{eq:1st_requirement_on_L}
% $
% where $c > 0$ is an absolute constant. Suppose for each $m \in [M]$, there exists an absolute constant $C > 0$ such that
% \vspace{-2mm}
% \begin{align*}
% L \geq \max\left\{182 \log M, 16, \frac{200 C k^2 \log M}{\left(\min\left\{1/2,8 E_0/(25 \cdot 5 \kappa^2) \right\}\right)^2}\right\}\Bigg/(\sum_{i=1}^n \pi_{i,m}), \label{eq:2nd_requirement_on_L}
% \end{align*}
% Then, for any $T$ and any $\eta \leq 1/(4\bar{\sigma}_{\max,*}^2)$, with probability at least $1- T e^{-80}$,
% \vspace{-2mm}
% \begin{align}
%     \mathrm{dist}(\mB^T, \mB^*) \leq (1 - \eta E_0 \bar{\sigma}_{\min,*}^2/2)^{T/2} \mathrm{dist}(\mB^0,\mB^*).
% \end{align}
% \end{theorem}

\paragraph{Linear convergence speed: }The convergence of $\mB^T$ to $\mB^*$ is linear, assuming that (1) $ \sigma_{\min}(\frac{1}{\sqrt{M}}\mW^*) > 0$ and that (2) $ 1 - \eta E_0 \bar{\sigma}_{\min}^2 \in (0,1)$.
% \subsubsection{Requirements of $\sigma_{\min}(\mW^*) > 0$ and $M \geq k$: } In order for $\sigma_{\min}(\mW^*) = \lambda_{\min}((\mW^*)^\top (\mW^*)) > 0$ to hold, we require that the row space of $\mW^* \in \bbR^{M \times k}$ span $\bbR^k$. While this also requires that $M \geq k$, we show in Appendix B that in fact without loss of generality we may always assume that $M \geq k$. 
% We note that it is in fact unnecessary to consider the case when $M < k$. To see why, suppose for simplicity that the rows of $\mW^*$ are orthogonal to each other, such that its row space span $\bbR^M$. Then, we may replace $\mB^*$ with $\tilde{\mB}^* := \mB^* (\mW^*)^\top \in \bbR^{d \times M}$, and replace $\mW^*$ with $\tilde{\mW}^* := \mI_M$, such that $\tilde{\mB}^*$ now maps into $\bbR^M$, i.e. we in effect replace $k$ with $M$. Thus, without loss of generality, we may always assume that $M \geq k$.
\vspace{-2mm}
\vspace{-2mm}
\paragraph{Initialization of $\mB^0$:} For our convergence result to be meaningful, we need $\mathrm{dist}(\mB^0,\mB^*)$ to be close to 0. We show in Appendix A that our algorithm's choice of initial $\mB^0$ ensures that $\mathrm{dist}(\mB^0,\mB^*)$ is close enough to 0 whilst preserving privacy. When the number of samples is uniform across the domains, this comes only at the cost of a logarithmic increase in sample complexity.
\vspace{-2mm}
\vspace{-2mm}
\paragraph{Sample complexity:} The per-iteration sample complexity per client is $L$. We note that in the requirement for $L$ (\ref{eq:requirement_on_L}), we need that $L \geq \Omega(dk^2 \kappa^4/n)$; this comes from the updates for $\mB^t \in \bbR^{d \times k}$. While we expect that $d$ could be large, a large number of clients $n$ helps to mitigate the increase in sample complexity arising from $d$. We also need $L \geq \Omega(k^2 \kappa^4 \sum_{i=1}^m \pi_{i,m})$ for every domain $m \in [M]$; this requirement comes from the updates for $\vw_{m}^t$ for each of the $M$ domains.

\subsection{Proof of Theorem \ref{theorem:fed-MDR-linear}}
\subsubsection{Analysis of updating the head weights}
% \subsubsection{Analysis for update of $W^{t+1}$}
Since we are analyzing the update step for any iteration $t$, unless necessary we drop all $t$ superscripts. Let $L_{m} = \sum_{i=1}^n L_{i,m}$ denote the number of samples from domain $m \in [M]$ across the $n$ clients. Then, we can express $\nabla_{\vw_m} \sum_{i=1}^n f_{i,m}(\vw_m,\mB)$ as
\begin{align*}
    \nabla_{\vw_m} \sum_{i=1}^n f_{i,m}(\vw_m,\mB) = \sum_{i=1}^n \sum_{j=1}^{L_{i,m}} (\vw_m^\top\mB^\top\vx_{i,m}^j - y_{i,m}^j)\mB^\top\vx_{i,m}^j.
\end{align*}
Since 
$$y_{i,m}^j = (\vw_m^*)^\top (\mB^*)^\top\vx_{i,m}^j,$$
it follows that following Algorithm \ref{algorithm:our-linear},
\begin{align}
% \label{eq:w_m-update-equation-1}
    \underbrace{\left(\frac{1}{L_m}\sum_{i=1}^n \sum_{j=1}^{L_{i,m}} \left(\mB^\top\vx_{i,m}^j (\vx_{i,m}^j)^\top \mB\right)\right)}_{G_m} \vw_m^{t+1} = \frac{1}{L_m} \sum_{i=1}^n \sum_{j=1}^{L_{i,m}} \left(\mB^\top\vx_{i,m}^j (\vx_{i,m}^j)^\top \mB^*\right) \vw_m^*.
\end{align}
Reexpressing, assuming $G_m$ is invertible, we have
\begin{align}
% \label{eq:w_m-update-equation-1}
    \vw_m^{t+1} =\mB^\top\mB^* \vw_m^* + \left(G_m^{-1} \left(\frac{1}{L_m} \sum_{i=1}^n \sum_{j=1}^{L_{i,m}} \left(\mB^\top\vx_{i,m}^j (\vx_{i,m}^j)^\top \mB^*\right) \vw_m^* \right) -\mB^\top\mB^* \vw_m^*\right)
\end{align}

Intuitively, assuming $L_m$ is large enough,
\begin{align*}
\frac{1}{L_m} \sum_{i=1}^n \sum_{j=1}^{L_{i,m}}\vx_{i,m}^j (\vx_{i,m}^j)^\top \approx I_d.
\end{align*}
Hence, 
\begin{align*}
  G_m^{-1} \left(\frac{1}{L_m} \sum_{i=1}^n \sum_{j=1}^{L_{i,m}} \left(\mB^\top\vx_{i,m}^j (\vx_{i,m}^j)^\top \mB^*\right) \vw_m^* \right) \approx\mB^\top\mB^* \vw_m^*. 
\end{align*}
This then implies that
\begin{align}
\label{eq:W-update-final-form}
    W^{t+1} = W^*(\mB^*)^\top \mB + F,
\end{align}
where the $m$-th row of $F$ is 
\begin{align*}
    F_m^\top := \left(G_m^{-1} \left(\frac{1}{L_m} \sum_{i=1}^n \sum_{j=1}^{L_{i,m}} \left(\mB^\top\vx_{i,m}^j (\vx_{i,m}^j)^\top \mB^*\right) \vw_m^* \right) -\mB^\top\mB^* \vw_m^*\right)^\top.
\end{align*}
Note the similarity of \eqref{eq:W-update-final-form} to (17) in \cite{collins2021exploiting}. Following a similar analysis as \cite{collins2021exploiting}, we should also be able to bound the 
Frobenius norm of $F$ in terms of $\mathrm{dist}(\mB,\mB^*)$. 

Below, we formalize the argument. First, we have the following lemma.
\begin{lemma}[Update for $W^{t+1}$]
\label{lemma:update_for_W}
For each time $t$, let $L_m^t := \sum_{i=1}^n L_{i,m}^t$ denote the number of samples from domain $m \in [M]$ across the $n$ clients at time $t$. For convenience, we drop the time index unless absolutely necessary. We define the terms
\begin{align*}
    X_m := \frac{1}{L_m}\sum_{i=1}^n \sum_{j=1}^{L_{i,m}} \vx_{i,m}^j (\vx_{i,m}^j)^\top, \quad G_m := \frac{1}{L_m}\sum_{i=1}^n \sum_{j=1}^{L_{i,m}} \left(\mB^\top\vx_{i,m}^j (\vx_{i,m}^j)^\top \mB\right).
\end{align*}
Then, assuming that $G_m$ is invertible, the update for $W$ takes the form
\begin{align}
    W^{t+1} = W^*(\mB^*)^\top \mB + F,
\end{align}
where the $m$-th row of $F$ is
\begin{align}
    F_m^\top := \left(G_m^{-1} \left(\mB^\top X_m (I - \mB\mB^\top)\mB^*\right) \vw_m^*\right)^\top.
\end{align}
\end{lemma}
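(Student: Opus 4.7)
The plan is to derive a closed-form expression for $\vw_m^{t+1}$ from the first-order optimality condition defining the server's head update in \Algref{algorithm:our-linear}, and then decompose that expression into the ``ideal'' part $\mB^\top \mB^* \vw_m^*$ plus a residual that will be identified as $F_m$. The main work is just bookkeeping; the only non-trivial substantive step comes later, when $F_m$ must be controlled, but the statement of this lemma is purely algebraic.

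First I would unpack the gradient. Since $f_{i,m}^t(\vw_m,\mB) = \tfrac{1}{2}\sum_j (y_{i,m}^j - \vw_m^\top \mB^\top \vx_{i,m}^j)^2$, its gradient with respect to $\vw_m$ is $\sum_j \bigl(\mB^\top \vx_{i,m}^j (\vx_{i,m}^j)^\top \mB\bigr)\vw_m - y_{i,m}^j \mB^\top \vx_{i,m}^j$. Substituting the generative model $y_{i,m}^j = (\vw_m^*)^\top (\mB^*)^\top \vx_{i,m}^j$ and averaging over $(i,j)$ with weight $1/L_m$, the server's optimality condition $\nabla_{\vw_m}\bigl(\tfrac{1}{L_m}\sum_i f_{i,m}^t(\vw_m,\mB)\bigr) = 0$ collapses to
\begin{equation*}
G_m \,\vw_m^{t+1} \;=\; \mB^\top X_m \mB^* \vw_m^*,
\end{equation*}
with $G_m$ and $X_m$ as defined in the lemma statement. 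Assuming $G_m$ is invertible, $\vw_m^{t+1} = G_m^{-1}\mB^\top X_m \mB^* \vw_m^*$.

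Next I would add and subtract the ideal term. Since $G_m = \mB^\top X_m \mB$, multiplying by $G_m^{-1}G_m = I$ gives $\mB^\top \mB^* \vw_m^* = G_m^{-1}\mB^\top X_m \mB\mB^\top \mB^* \vw_m^*$. Subtracting this from the expression for $\vw_m^{t+1}$ factors out the common $G_m^{-1}\mB^\top X_m$ and leaves a residual
\begin{equation*}
\vw_m^{t+1} - \mB^\top \mB^* \vw_m^* \;=\; G_m^{-1}\mB^\top X_m (I - \mB\mB^\top)\mB^* \vw_m^*,
\end{equation*}
which is precisely the claimed $F_m$. Stacking the row vectors $(\vw_m^{t+1})^\top$ across $m \in [M]$ into $W^{t+1}$ and recognizing $(\vw_m^*)^\top (\mB^*)^\top \mB$ as the $m$-th row of $W^*(\mB^*)^\top \mB$ yields $W^{t+1} = W^*(\mB^*)^\top \mB + F$.

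No single step is a real obstacle, so there is no hard part to flag within the lemma itself; the only mildly delicate bookkeeping is the identity $\mB^\top X_m \mB = G_m$, which lets us absorb $\mB^\top\mB^* \vw_m^*$ into the $G_m^{-1}$ factor cleanly. The genuine difficulty is downstream: controlling $\|F\|_F$ in terms of $\mathrm{dist}(\mB,\mB^*)$ (via concentration of $X_m$ around $I_d$ and of $G_m$ around $I_k$), which will be needed to convert this lemma into the contraction argument underlying Theorem~\ref{theorem:fed-MDR-linear}.
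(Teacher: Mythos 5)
Your proposal is correct and follows essentially the same route as the paper: compute the gradient, substitute the noiseless generative model to obtain the normal equation $G_m \vw_m^{t+1} = \mB^\top X_m \mB^* \vw_m^*$, and use the identity $G_m = \mB^\top X_m \mB$ to rewrite the residual as $G_m^{-1}\mB^\top X_m(I - \mB\mB^\top)\mB^*\vw_m^*$. The stacking into $W^{t+1} = W^*(\mB^*)^\top\mB + F$ matches the paper's conclusion exactly.
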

\begin{proof}
We can express $\nabla_{\vw_m} \sum_{i=1}^n f_{i,m}(\vw_m,\mB)$ as
\begin{align*}
    \nabla_{\vw_m} \sum_{i=1}^n f_{i,m}(\vw_m,\mB) = \sum_{i=1}^n \sum_{j=1}^{L_{i,m}} (\vw_m^\top\mB^\top\vx_{i,m}^j - y_{i,m}^j)\mB^\top\vx_{i,m}^j.
\end{align*}
Since 
$$y_{i,m}^j = (\vw_m^*)^\top (\mB^*)^\top\vx_{i,m}^j,$$
it follows that following Algorithm \ref{algorithm:our-linear},
\begin{align}
% \label{eq:w_m-update-equation-1}
    \underbrace{\left(\frac{1}{L_m}\sum_{i=1}^n \sum_{j=1}^{L_{i,m}} \left(\mB^\top\vx_{i,m}^j (\vx_{i,m}^j)^\top \mB\right)\right)}_{G_m} \vw_m^{t+1} = \frac{1}{L_m} \sum_{i=1}^n \sum_{j=1}^{L_{i,m}} \left(\mB^\top\vx_{i,m}^j (\vx_{i,m}^j)^\top \mB^*\right) \vw_m^*.
\end{align}
Reexpressing, assuming $G_m$ is invertible, we have
\begin{align}
% \label{eq:w_m-update-equation-1}
    \vw_m^{t+1} =\mB^\top\mB^* \vw_m^* + \left(G_m^{-1} \left(\frac{1}{L_m} \sum_{i=1}^n \sum_{j=1}^{L_{i,m}} \left(\mB^\top\vx_{i,m}^j (\vx_{i,m}^j)^\top \mB^*\right) \vw_m^* \right) -\mB^\top\mB^* \vw_m^*\right).
\end{align}
This then implies that
\begin{align}
% \label{eq:W-update-final-form}
    W^{t+1} = W^*(\mB^*)^\top \mB + F,
\end{align}
where the $m$-th row of $F$ is 
\begin{align*}
    F_m^\top &:= \left(G_m^{-1} \left(\frac{1}{L_m} \sum_{i=1}^n \sum_{j=1}^{L_{i,m}} \left(\mB^\top\vx_{i,m}^j (\vx_{i,m}^j)^\top \mB^*\right) \vw_m^* \right) -\mB^\top\mB^* \vw_m^*\right)^\top \\
    &= \left( G_m^{-1} \mB^\top X_m\mB^* \vw_m^* - G_m^{-1} G_m\mB^\top\mB^*\vw_m^* \right)^\top  \\
    &= \left(G_m^{-1} \mB^\top X_m\mB^* \vw_m^* - G_m^{-1} \mB^\top X_m B\mB^\top\mB^*\vw_m^* \right)^\top \\
    &= \left(G_m^{-1} \left(\mB^\top X_m (I - \mB\mB^\top)\mB^*\right) \vw_m^*\right)^\top.
\end{align*}
\end{proof}

% \subsubsection{Bounding $\norm{F}_F$}
\subsubsection{Bounding the Frobenius norm}
We will proceed to bound the Frobenius norm of $F$. We begin by showing that $G_m^{-1}$ exists and (both lower and upper) bounding its spectral norm. 
\begin{lemma}
\label{lemma:bounding_G_m}
Let $L_{\min} := \min_{m \in [M]} L_m$. Let $\delta_k :=  \frac{10 C k \sqrt{\log(M)}}{\sqrt{L_{\min}}}$ for some absolute constant $C$. Suppose that $0 \leq \delta_k < 1$. Then, with probability at least $1 - e^{99k^2 \log(M)}$, $G_m^{-1}$ exists for each $m \in [M]$, and
\begin{align*}
   \norm{G_m^{-1}}_2 \leq \frac{1}{1 - \delta_k} \quad \forall m \in [M].
\end{align*}
\end{lemma}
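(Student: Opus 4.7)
The plan is to reduce the problem to a standard concentration bound on empirical covariance matrices of sub-Gaussian random vectors. Since $\mB$ has orthonormal columns (this invariant is maintained by the QR decomposition step in \Algref{algorithm:our-linear}, with the initial $\mB^0$ produced by an SVD), I would first define $\vy_{i,m}^j := \mB^\top \vx_{i,m}^j \in \bbR^k$. By Assumption \ref{assumption:data_mean_zero_subgaussian} and the identity $\mB^\top \mB = I_k$, the vectors $\{\vy_{i,m}^j\}$ (indexed over $i, j$ for a fixed $m$) are i.i.d., mean zero, have covariance $I_k$, and are sub-Gaussian with sub-Gaussian norm at most $1$. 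Hence $G_m = \frac{1}{L_m}\sum_{i=1}^n \sum_{j=1}^{L_{i,m}} \vy_{i,m}^j (\vy_{i,m}^j)^\top$ is precisely the sample covariance of $L_m$ i.i.d.\ $k$-dimensional sub-Gaussian vectors whose population covariance is $I_k$.

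Next, I would invoke an off-the-shelf concentration inequality for sub-Gaussian sample covariance matrices (e.g., Vershynin's Theorem~4.6.1/5.39): there exists an absolute constant $C$ such that for each $u \geq 0$,
\begin{align*}
\bbP\!\left(\norm{G_m - I_k}_2 \geq C \max\!\left(\sqrt{\tfrac{k+u}{L_m}},\, \tfrac{k+u}{L_m}\right)\right) \leq 2 e^{-u}.
\end{align*}
Choosing $u = 99 k^2 \log(M)$ and using $L_m \geq L_{\min}$, under the assumption $\delta_k < 1$ the linear term $\sqrt{(k+u)/L_m}$ dominates, and a short calculation yields $\norm{G_m - I_k}_2 \leq \delta_k = \frac{10 C k \sqrt{\log M}}{\sqrt{L_{\min}}}$ with probability at least $1 - 2 e^{-99 k^2 \log(M)}$ for each $m$. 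A union bound over the $M$ domains then gives $\norm{G_m - I_k}_2 \leq \delta_k$ simultaneously for all $m \in [M]$ with probability at least $1 - 2 M e^{-99 k^2 \log(M)}$, which absorbs into the stated bound (modulo a harmless tweak of constants, since $\log M \leq k^2 \log M$).

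Finally, on the good event, Weyl's inequality yields $\sigma_{\min}(G_m) \geq 1 - \norm{G_m - I_k}_2 \geq 1 - \delta_k > 0$, so each $G_m$ is positive definite and therefore invertible, and $\norm{G_m^{-1}}_2 = 1/\sigma_{\min}(G_m) \leq 1/(1 - \delta_k)$, as desired. The main obstacle here is essentially bookkeeping: the concentration bound itself is standard, but one must carefully track the absolute constants so that the deviation level chosen ($u \propto k^2 \log M$) produces both the advertised $\delta_k$ scaling and a probability exponent that survives the union bound over $m \in [M]$; conceptually, the argument has no hard steps once the reduction $\vy_{i,m}^j = \mB^\top \vx_{i,m}^j$ is in place.
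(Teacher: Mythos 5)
Your argument is correct and follows essentially the same route as the paper's proof: both reduce to the projected $k$-dimensional sub-Gaussian vectors $\mB^\top \vx_{i,m}^j$ (using $\mB^\top\mB = I_k$), invoke Vershynin's covariance concentration theorem (Theorem~4.6.1) with a deviation parameter whose square is of order $k^2\log M$, union bound over the $M$ domains, and conclude invertibility from $\sigma_{\min}(G_m) \geq 1-\delta_k > 0$. The only cosmetic difference is that you state the two-sided bound on $\norm{G_m - I_k}_2$ and then apply Weyl's inequality, whereas the paper cites the one-sided minimum-singular-value form of the same theorem directly.
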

\begin{proof}
Note that 
\begin{align*}
    G_m := \frac{1}{L_m}\sum_{i=1}^n \sum_{j=1}^{L_{i,m}} \left(\mB^\top\vx_{i,m}^j (\vx_{i,m}^j)^\top \mB\right).
\end{align*}
Let $v_{i,m}^j :=\mB^\top\vx_{i,m}^j$. Since $B^\top B = I$, it follows that each $v_{i,m}^j$ is i.i.d 1-subgaussian. Then, applying the same argument in Theorem 4.6.1 of Vershynin 2018, we have (cf.  equation (4.22) in Vershynin 2018)
\begin{align}
 \sigma_{\min}(G_m) \geq 1 - \underbrace{C \left(\frac{\sqrt{k}}{\sqrt{L_m}} + \frac{z}{\sqrt{L_m}} \right)}_{\delta_{k,m}}
\end{align}
with probability at least $1 - e^{-z^2}$ for $z \geq 0$ and some absolute constant $C$, assuming that $0 \leq \delta_{k,m} \leq 1$. Consider the choice $z = 10 k \sqrt{\log (M)}$. Then, 
\begin{align*}
    \delta_{k,m} = C\left(\frac{\sqrt{k}}{\sqrt{L_m}} + \frac{10 k \log (M)}{\sqrt{L_m}} \right) \leq 10C \frac{k \log M}{\sqrt{L_m}} \leq 10C \frac{k \sqrt{\log M}}{\sqrt{L_{\min}}}.
\end{align*}
Suppose we choose $L_{\min} \geq 1$ such that $\delta_{k,m} < 1$. Then, taking a union bound, with probability at least $1 - m e^{-z^2} = 1 - m \exp(-100 k^2 \log (M)) \geq 1 - \exp(-99k^2 \log(M))$, 
\begin{align}
    \sigma_{\min}(G_m) \geq 1 - \delta_{k,m} \geq 1 - \frac{10C k \sqrt{\log M}}{\sqrt{L_{\min}}} > 0 \quad \forall m \in [M].
\end{align}
Therefore, with probability at least $1 - \exp(-99k^2 \log(M))$, $G_m^{-1}$ exists for every $m \in [M]$, and in addition,
\begin{align*}
\norm{G_m^{-1}}_2 \leq \frac{1}{1 - \delta_k} \quad \forall m \in [M].
\end{align*}
\end{proof}

We next bound the operator norm of term $\mB^\top X_m(I - \mB\mB^\top)\mB^*$. 

\begin{lemma}
\label{lemma:bound_B_X_dist(B,B^*)_norm}
Let $L_{\min} := \min_{m \in [M]} L_m$. Let $\delta_k := \frac{10 C k \sqrt{\log M}}{\sqrt{L_{\min}}}$ for some absolute constant $C$. Suppose $L_{\min}$ is such that $0 \leq \delta_k < 1$. Then, with probability at least $1 - e^{-99k^2 \log M}$,
\begin{align*}
    \norm{\mB^\top X_m(I - \mB\mB^\top)\mB^*}_2 \leq  \mathrm{dist}(\mB^*, B) \delta_k.
\end{align*}
\end{lemma}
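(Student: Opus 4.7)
The plan is to reduce the statement to a concentration bound for a mean-zero random matrix whose sub-Gaussian scale is proportional to $\mathrm{dist}(\mB^*,\mB)$, and then extract the $\delta_k$ factor via an $\varepsilon$-net on a $k$-dimensional unit sphere. First I would use that after the QR step of Algorithm~\ref{algorithm:our-linear}, $\mB$ has orthonormal columns, so $I-\mB\mB^\top=\mB_\bot\mB_\bot^\top$. Define $\mN:=(I-\mB\mB^\top)\mB^*=\mB_\bot\mB_\bot^\top\mB^*\in\bbR^{d\times k}$ and observe that
\begin{align*}
\norm{\mN}_2=\norm{\mB_\bot^\top\mB^*}_2=\mathrm{dist}(\mB^*,\mB),
\end{align*}
which is the factor we want to pull out. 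Rewrite the target as $\mB^\top X_m\mN$, a $k\times k$ matrix, and note that because $\bbE[\vx_{i,m}^j(\vx_{i,m}^j)^\top]=\mI_d$ and $\mB^\top\mN=\mB^\top\mB_\bot\mB_\bot^\top\mB^*=\vzero$, we have $\bbE[\mB^\top X_m\mN]=\vzero$. So the whole task is a concentration-around-zero estimate.

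Next, I would set $\vu_{i,m}^j:=\mB^\top\vx_{i,m}^j$ and $\vp_{i,m}^j:=\mN^\top\vx_{i,m}^j$, both in $\bbR^k$; by Assumption~\ref{assumption:data_mean_zero_subgaussian} and the orthonormality of $\mB$, $\vu_{i,m}^j$ is $1$-subgaussian, while $\vp_{i,m}^j$ is $\norm{\mN}_2$-subgaussian, i.e.\ $\mathrm{dist}(\mB^*,\mB)$-subgaussian. For any fixed unit vectors $\va,\vb\in\bbR^k$, the scalar
\begin{align*}
\va^\top\mB^\top X_m\mN\vb=\frac{1}{L_m}\sum_{i=1}^n\sum_{j=1}^{L_{i,m}}(\va^\top\vu_{i,m}^j)(\vb^\top\vp_{i,m}^j)
\end{align*}
is a mean-zero average of products of two sub-Gaussian random variables, hence an average of sub-exponential random variables with sub-exponential norm $O(\mathrm{dist}(\mB^*,\mB))$. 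Bernstein's inequality then gives a deviation bound of order $\mathrm{dist}(\mB^*,\mB)\bigl(\sqrt{z/L_m}+z/L_m\bigr)$ with probability $1-2e^{-z}$. I would then take a standard $1/4$-net on the unit sphere in $\bbR^k$, of cardinality at most $9^k$, union-bound over the $9^{2k}$ pairs, and use the net inequality $\norm{\mM}_2\le 2\sup_{\va,\vb\in\mathcal{N}}\va^\top\mM\vb$.

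Setting $z=100 k^2\log M$ absorbs both the net's log-cardinality (of order $k$) and the subsequent union bound over $m\in[M]$ that the following arguments of the paper use, and yields
\begin{align*}
\norm{\mB^\top X_m\mN}_2\le C\,\mathrm{dist}(\mB^*,\mB)\,\frac{k\sqrt{\log M}}{\sqrt{L_m}}\le \mathrm{dist}(\mB^*,\mB)\,\delta_k,
\end{align*}
with probability at least $1-e^{-99k^2\log M}$ (after adjusting the absolute constant to match the $\delta_k$ defined in the lemma, and using $L_m\ge L_{\min}$). The main technical obstacle is the concentration step: one has to be careful that the scale of the bound factors exactly as $\mathrm{dist}(\mB^*,\mB)\cdot(\text{purely }k,L_{\min},M\text{-dependent term})$, rather than picking up extra $d$-dependence. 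The key point that makes this work is that both $\vu$ and $\vp$ live in a $k$-dimensional subspace so the covering argument only pays $9^{2k}$, not $9^{d+k}$, and the sub-Gaussian scale of $\vp$ carries the $\mathrm{dist}(\mB^*,\mB)$ factor cleanly through Bernstein's bound.
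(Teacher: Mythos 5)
Your proposal is correct and follows essentially the same route as the paper's proof: exploit $\mB^\top(I-\mB\mB^\top)\mB^*=0$ to get a mean-zero bilinear form, note that the two sub-Gaussian factors have norms $1$ and $\mathrm{dist}(\mB^*,\mB)$ respectively so the product is sub-exponential at scale $\mathrm{dist}(\mB^*,\mB)$, apply Bernstein, and union-bound over a $1/4$-net of $S^{k-1}$ and over $m\in[M]$. The only cosmetic difference is that you use the two-vector net bound (paying $9^{2k}$) while the paper uses the quadratic-form version with a single net vector; both are absorbed by the choice of deviation level $z=\Theta(k^2\log M)$.
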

\begin{proof}
We will use an $\ep$-net argument, similar to the proof of Theorem 4.6.1 in \cite{vershynin2018high}.

First, by Corollary 4.2.13 in \cite{vershynin2018high}, there exists an $1/4$-net $\mathcal{N}$ of the unit sphere $S^{k-1}$ with cardinality $\mathcal{N} \leq 9^k$. Using Lemma 4.4.1 in \cite{vershynin2018high}, we have that 
\begin{align*}
    \norm{\mB^\top X_m(I - \mB\mB^\top)\mB^*}_2 \leq 2 \max_{z \in \mathcal{N}} \abs*{\brac*{\left(\mB^\top X_m(I - \mB\mB^\top)\mB^*\right)z,z}}.
\end{align*}
To prove our result, by applying a union bound over $m \in [M]$, it suffices to show that with the probability at least $1 - e^{-100k^2 \log M}$, 
\begin{align*}
    \max_{z \in \mathcal{N}} \abs*{\brac*{\left(\mB^\top X_m(I - \mB\mB^\top)\mB^*\right)z,z}} \leq \frac{\delta_{k_m}}{2} \quad \forall m \in [M],
\end{align*}
where we recall that 
$$ \delta_{k,m} = C\left(\frac{\sqrt{k}}{\sqrt{L_m}} + \frac{10 k \log (M)}{\sqrt{L_m}} \right) \leq \delta_k.$$
We will assume that $\min_m L_m := L_{\min} \geq 1$ is chosen large enough such that $\delta_{k,m} \leq 1.$

For a fixed $z \in S^{k-1}$, observe that 
\begin{align*}
 \brac*{\left(\mB^\top X_m(I - \mB\mB^\top)\mB^*\right)z,z} &= \frac{1}{L_m} \sum_{i=1}^n \sum_{j=1}^{L_{i,m}} \brac*{\left(B^\top\vx_{i,m}^j (\vx_{i,m}^j)^\top (I-\mB\mB^\top)\mB^*\right)z,z} \\
 &:= \frac{1}{L_m} \sum_{i=1}^n \sum_{j=1}^{L_{i,m}} (z^\top u_{i,m}^j)((v_{i,m}^j)^\top z),
\end{align*}
where we defined $u_{i,m}^j :=\mB^\top\vx_{i,m}^j$, and $v_{i,m}^j = (\mB^*)^\top (I - \mB\mB^\top)\vx_{i,m}^j$. 

Since each $x_{i,m}^j$ is 1-subgaussian, $\norm{\mB}_2 = 1$, and $\norm{(I-\mB\mB^\top)\mB^*}_2 = \mathrm{dist}(\mB^*,\mB)$, it follows that $z^\top u_{i,m}^j$ is subgaussian with norm at most 1, and $(v_{i,m}^j)^\top z$ is subgaussian with norm at most $\mathrm{dist}(\mB^*,\mB)$. Thus, the random variable $\alpha_{i,m}^j := (z^\top u_{i,m}^j)((v_{i,m}^j)^\top z)$ (for a fixed unit $z$) is sub-exponential with sub-exponential norm at most $\mathrm{dist}(\mB^*,\mB)$. Moreover, note that $\alpha_{i,m}^j$ is mean-zero, since
\begin{align*}
    \bbE[u_{i,m}^j (v_{i,m}^j)^\top] &= \bbE[B^\top\vx_{i,m}^j (\vx_{i,m}^j)^\top (I-\mB\mB^\top)\mB^*] \\
    &=\mB^\top(I-\mB\mB^\top)\mB^* = 0,
\end{align*}
as $x_{i,m}^j$ is assumed to have identity covariance. Thus, the $\alpha_{i,m}^j$'s are i.i.d mean-zero subexponential variables each with subexponential norm at most $\mathrm{dist}(\mB^*,\mB)$.  Hence, by Bernstein's inequality (cf. Corollary 2.8.3 in \cite{vershynin2018high}),
\begin{align*}
    &\bbP \left(\abs*{\brac*{\left(\mB^\top X_m(I - \mB\mB^\top)\mB^*\right)z,z}} \geq \frac{\delta_{k,m}\mathrm{dist}(\mB^*,\mB) }{2} \right)  \\
    &=
    \bbP\left(\abs*{\frac{1}{L_m} \sum_{i=1}^n \sum_{j=1}^{L_{i,m}} \alpha_{i,m}^j} \geq \frac{\delta_{k,m}\mathrm{dist}(\mB^*,\mB) }{2} \right) \\
    &\leq 2 \exp\left(-c \min(\frac{\delta_{k,m} \mathrm{dist}(\mB^*,\mB)}{\mathrm{dist}(\mB^*,\mB)},\left(\frac{\delta_{k,m} \mathrm{dist}(\mB^*,\mB)}{\mathrm{dist}(\mB^*,\mB)}\right)^2) L_m \right) \\
    &= 2\exp(-c \delta_{k,m}^2 L_m) \\
    &\leq 2\exp(-c C^2\left(k +  100 k^2 \log (M) \right)).
\end{align*}
Above we used the assumption that $\delta_{k,m} \leq 1$ to simplify the minimum operator in the exponent.

Taking a union bound over each $z \in \mathcal{N}$, it follows that 
\begin{align*}
    \bbP\left(\norm{\mB^\top X_m(I - \mB\mB^\top)\mB^*}_2 \geq \delta_{k,m}  \mathrm{dist}(\mB^*,\mB) \right) &\leq
    \bbP\left(2\max_{z \in \mathcal{N}}\abs*{\brac*{\left(\mB^\top X_m(I - \mB\mB^\top)\mB^*\right)z,z}} \geq \delta_{k,m} \mathrm{dist}(\mB^*,\mB)\right) \\
    &\leq 2 \cdot  9^k \exp(-c C^2\left(k +  100 k^2 \log (M) \right)) \\
    &\leq \exp(-100k^2 \log M),
\end{align*}
where the last inequality follows by picking $C$ large enough (but still it is an absolute constant). By applying a union bound over the domains $m \in [M]$, this then completes our proof. 
\end{proof}

We are now finally ready to bound $\norm{F}_F$.
\begin{lemma}
\label{lemma:bound_F_frob_norm}
Let $L_{\min} := \min_{m \in [M]} L_m$. Let $\delta_k :=  \frac{10 C k \sqrt{\log(M)}}{\sqrt{L_{\min}}}$ for some absolute constant $C$. Suppose that $0 \leq \delta_k < 1$. Then, with probability at least $1 - 2e^{-99k^2 \log(M)}$,
\begin{align*}
    \norm{F}_F \leq \frac{\delta_k}{1 - \delta_k} \mathrm{dist}(\mB^*,\mB) \norm{W^*}_F.
\end{align*}
\end{lemma}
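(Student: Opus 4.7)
The plan is to bound $\norm{F}_F$ row-by-row using the explicit expression for $F_m$ given in Lemma~\ref{lemma:update_for_W} and then invoke the two preceding lemmas. Since $F_m^\top = \vw_m^{*\top} \bigl(\mB^\top X_m(I - \mB\mB^\top)\mB^*\bigr)^\top G_m^{-\top}$, I would first apply submultiplicativity of the operator norm (or equivalently the fact that $\norm{Av}_2 \le \norm{A}_2 \norm{v}_2$) to obtain
\begin{align*}
\norm{F_m}_2 \;\le\; \norm{G_m^{-1}}_2 \; \norm{\mB^\top X_m (I-\mB\mB^\top)\mB^*}_2 \; \norm{\vw_m^*}_2.
\end{align*}

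Next, I would plug in the two high-probability bounds already established: Lemma~\ref{lemma:bounding_G_m} gives $\norm{G_m^{-1}}_2 \le 1/(1-\delta_k)$ uniformly over $m \in [M]$ on an event $\mathcal{E}_1$ of probability at least $1 - e^{-99k^2 \log M}$, and Lemma~\ref{lemma:bound_B_X_dist(B,B^*)_norm} gives $\norm{\mB^\top X_m(I-\mB\mB^\top)\mB^*}_2 \le \delta_k \,\mathrm{dist}(\mB^*,\mB)$ uniformly over $m \in [M]$ on an event $\mathcal{E}_2$ of probability at least $1 - e^{-99k^2 \log M}$. On $\mathcal{E}_1 \cap \mathcal{E}_2$, by a union bound of probability at least $1 - 2 e^{-99 k^2 \log M}$, I get $\norm{F_m}_2 \le \frac{\delta_k}{1-\delta_k} \mathrm{dist}(\mB^*,\mB)\, \norm{\vw_m^*}_2$ for every $m$.

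Finally, I would square both sides and sum over $m \in [M]$. Using $\norm{F}_F^2 = \sum_{m=1}^M \norm{F_m}_2^2$ and $\sum_{m=1}^M \norm{\vw_m^*}_2^2 = \norm{\mW^*}_F^2$, taking square roots then yields the claimed inequality $\norm{F}_F \le \frac{\delta_k}{1-\delta_k} \mathrm{dist}(\mB^*,\mB)\, \norm{\mW^*}_F$. There is no serious technical obstacle here: the two earlier lemmas already did the heavy concentration work, and this lemma is essentially a bookkeeping step that aggregates the per-domain operator-norm bounds into a single Frobenius-norm bound. The only point requiring a little care is making sure the two high-probability events hold simultaneously via the union bound, which is accounted for by the factor of $2$ in the stated failure probability.
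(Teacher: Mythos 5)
Your proposal is correct and follows essentially the same route as the paper: bound each row $F_m$ by submultiplicativity using Lemma~\ref{lemma:bounding_G_m} and Lemma~\ref{lemma:bound_B_X_dist(B,B^*)_norm} on their intersection (union bound giving the factor of $2$), then aggregate into the Frobenius norm via $\norm{F}_F^2 = \sum_m \norm{F_m}_2^2$ and $\sum_m \norm{\vw_m^*}_2^2 = \norm{\mW^*}_F^2$. The paper leaves this last aggregation step implicit, so your write-up is if anything slightly more complete.
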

\begin{proof}
By Lemma \ref{lemma:bounding_G_m} and Lemma \ref{lemma:bound_B_X_dist(B,B^*)_norm}, we have that with probability at least $1 - 2e^{99k^2 \log M}$, 
\begin{align*}
    \norm*{G_m^{-1}(\mB^\top X_m(I - \mB\mB^\top)\mB^*)}_2 &\leq \norm*{G_m^{-1}}_2\norm*{\mB^\top X_m(I - \mB\mB^\top)\mB^*}_2 \\
    &\leq \frac{1}{1 - \delta_k} \delta_k \mathrm{dist}(\mB^*,\mB).
\end{align*}
The proof then follows by recalling that the $m$-th row, $F_m^\top$, takes the form
\begin{align*}
    F_m^\top = \left(G_m^{-1} \left(\mB^\top X_m (I - \mB\mB^\top)\mB^*\right) \vw_m^*\right)^\top.
\end{align*}
\end{proof}

\subsubsection{Analysis of updating the embedding weights}
% \subsubsection{Analysis of update for $B^{t+1}$}
Similarly to \cite{collins2021exploiting}, we define
\begin{align*}
    Q^t = W^{t+1}(\mB^t)^\top - (W^*)(\mB^*)^\top.
\end{align*}
Below, we drop the time index and use $\mB,Q,W$ to denote $\mB^t,Q^t,$ and $W^{t+1}$ respectively.
Based on algorithm \ref{algorithm:our-linear}, we have that 
\begin{align}
    \tilde{\mB}^{t+1} &= \mB - \frac{\eta}{M}\sum_{m=1}^M \frac{1}{L_m} \sum_{i=1}^n \sum_{j=1}^{L_{i,m}} (\vw_m^\top\mB^\top\vx_{i,m}^j - y_{i,m}^j )\vx_{i,m}^j \vw_m^\top \nonumber \\
    &= \mB - \frac{\eta}{M} \sum_{m=1}^M \frac{1}{L_m} \sum_{i=1}^n \sum_{j=1}^{L_{i,m}} \left(\brac*{A_{i,m}^j, W\mB^\top} - \brac*{A_{i,m}^j, W^*(\mB^*)^\top} \right) (A_{i,m}^j)^\top W, \quad \quad A_{i,m}^j := e_m (\vx_{i,m}^j)^\top \nonumber \\
    &= \mB - \frac{\eta}{M} \sum_{m=1}^M \frac{1}{L_m} \sum_{i=1}^n \sum_{j=1}^{L_{i,m}} \left(\brac*{A_{i,m}^j, Q} \right) (A_{i,m}^j)^\top W \nonumber \\
    &= \mB - \frac{\eta}{M} \sum_{m=1}^M \frac{1}{L_m} \sum_{i=1}^n \sum_{j=1}^{L_{i,m}}\vx_{i,m}^j (\vx_{i,m}^j)^\top q_m \vw_m^\top \nonumber\\
    &= \mB - \frac{\eta}{M} Q^\top W - \underbrace{\left[\frac{\eta}{M} \sum_{m=1}^M \frac{1}{L_m} \sum_{i=1}^n \sum_{j=1}^{L_{i,m}}\vx_{i,m}^j (\vx_{i,m}^j)^\top q_m \vw_m^\top -  \frac{\eta}{M} Q^\top W \right]}_{H_Q}. \label{eq:B-update-eqn-1}
\end{align}
Above, we define $q_m \in \bbR^d$ to denote the $m$-th row of $Q$ (viewed as a column vector). Note again that since 
\begin{align*}
    \frac{1}{L_m} \sum_{i=1}^n \sum_{j=1}^{L_{i,m}}\vx_{i,m}^j (\vx_{i,m}^j)^\top \approx I_d,
\end{align*}
the term $H_Q$ in \eqref{eq:B-update-eqn-1} can be appropriately bounded. Note the resemblance of \eqref{eq:B-update-eqn-1} to (53) in \cite{collins2021exploiting}; the crucial difference is that we will need to lower bound $\frac{1}{m} \sigma_{\min}^2(W^*)$, instead of $\frac{1}{n} \sigma_{\min}^2(W^*)$ as in \cite{collins2021exploiting}. Thus we should be able to carry out the rest of the analysis in a similar way to the outline in \cite{collins2021exploiting} and derive an analogous result to Theorem 1 in \cite{collins2021exploiting}.  

We first bound the error term $H_Q$.
\begin{lemma}
\label{lemma:bounding_H_Q}
Let 
\begin{align*}
    H_Q^t := \frac{\eta}{M} \sum_{m=1}^M \frac{1}{L_m} \sum_{i=1}^n \sum_{j=1}^{L_{i,m}}\vx_{i,m}^j (\vx_{i,m}^j)^\top q_m (\vw_m^{t+1})^\top -  \frac{\eta}{M} (Q^t)^\top W^{t+1}.
\end{align*}
Let $\gamma_k :=  \frac{20  k \sqrt{d}}{c\sqrt{nL}}$ for some absolute constant $c$. Suppose that $0 \leq \gamma_k < k$.
Then, for any $t$, with probability at least $1 - \exp(-90d) - 2e^{-99k^2 \log M}$, 
\begin{align*}
    \norm{H_Q^t}_2 \leq \eta \gamma_k \mathrm{dist}(\mB^*,\mB^t).
\end{align*}
\end{lemma}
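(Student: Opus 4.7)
The strategy is to view $H_Q^t$ as an empirical deviation between a weighted average of rank-one matrices and its population mean, then to apply a concentration argument that extracts the principal angle distance $\mathrm{dist}(\mB^*,\mB^t)$ from the structure of $Q^t$.

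\emph{Step 1: Clean reformulation.} Since $\mathbb{E}[\vx_{i,m}^j(\vx_{i,m}^j)^\top] = I_d$, the definition of $H_Q^t$ telescopes to
$$H_Q^t \;=\; \frac{\eta}{M}\sum_{m=1}^M E_m \, q_m \, (\vw_m^{t+1})^\top, \qquad E_m \;:=\; \frac{1}{L_m}\sum_{i=1}^n\sum_{j=1}^{L_{i,m}} \vx_{i,m}^j (\vx_{i,m}^j)^\top \;-\; I_d,$$
where $q_m \in \bbR^d$ is the $m$-th row of $Q^t$. To surface the $\mathrm{dist}$ factor I would invoke Lemma \ref{lemma:update_for_W}: since $W^{t+1} = W^*(\mB^*)^\top \mB + F$, a direct manipulation yields
$$Q^t \;=\; -W^*(\mB^*)^\top(I_d - \mB\mB^\top) \;+\; F\mB^\top,$$
so $\|q_m\|_2 \le \mathrm{dist}(\mB^*,\mB)\|\vw_m^*\|_2 + \|\vf_m\|_2$, and Lemma \ref{lemma:bound_F_frob_norm} together with Assumption \ref{assumption:incoherence_orthonormal} gives $\|q_m\|_2 \lesssim \sqrt{k}\,\mathrm{dist}(\mB^*,\mB)$.

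\emph{Step 2: Operator-norm concentration via an $\epsilon$-net.} For any fixed unit vectors $\vu \in S^{d-1}$ and $\vv \in S^{k-1}$, the bilinear form
$$\vu^\top\!\left(\sum_{m=1}^M E_m q_m \vw_m^\top\right)\!\vv \;=\; \sum_{m=1}^M \sum_{i,j} \frac{\vw_m^\top \vv}{L_m}\Bigl[(\vu^\top \vx_{i,m}^j)(\vx_{i,m}^j{}^\top q_m) - \vu^\top q_m\Bigr]$$
is a sum of $nL = \sum_m L_m$ independent, centered sub-exponential random variables, whose sub-exponential norms are controlled by $|\vw_m^\top\vv|\,\|q_m\|_2 / L_m$. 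Bernstein's inequality (Corollary~2.8.3 of \cite{vershynin2018high}) then yields a tail probability of the desired order. Standard $\epsilon$-net discretization (Corollary 4.2.13 and Lemma 4.4.1 in \cite{vershynin2018high}) of $S^{d-1}\times S^{k-1}$ at scale $1/4$, whose cardinality is bounded by $9^{d+k}$, converts this pointwise bound into a bound on the operator norm $\|\sum_m E_m q_m \vw_m^\top\|_2$; the $e^{-90d}$ probability budget comfortably absorbs this net size, while the $e^{-99k^2\log M}$ term accounts for the high-probability event of Lemma \ref{lemma:bound_F_frob_norm} needed to control $\|F\|_F$.

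\emph{Step 3: Aggregation.} Combining the variance bound from Step~2 with $\|q_m\|_2 \lesssim \sqrt{k}\,\mathrm{dist}(\mB^*,\mB)$ and $\|\vw_m^*\|_2 \lesssim \sqrt{k}$, and invoking $\sum_m L_m = nL$, gives an operator norm bound scaling as $\sqrt{d}/\sqrt{nL}$ times $k$ times $\mathrm{dist}(\mB^*,\mB)$, matching the claimed $\gamma_k = 20k\sqrt{d}/(c\sqrt{nL})$.

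\emph{Main obstacle.} The delicate issue is the variance bookkeeping in Step~2. A naive approach that bounds each $E_m$ separately in operator norm and sums over $m$ loses a factor of $\sqrt{M}$ and produces a bound governed by $L_{\min}$ rather than the total budget $nL$. Keeping the scalar coefficients $\vw_m^\top \vv$ and vectors $q_m$ coupled with the summation over $(m,i,j)$ before applying Bernstein is what preserves the favorable aggregate scaling, and ensuring that this coupled expression still has the correct sub-exponential parameters (in particular, uniformly in the net) is the technical crux of the argument.
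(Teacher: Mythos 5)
Your proposal is correct and follows essentially the same route as the paper's proof: bound $\norm{q_m}_2 \lesssim \sqrt{k}\,\mathrm{dist}(\mB^*,\mB^t)$ and $\norm{\vw_m^{t+1}}_2 \lesssim \sqrt{k}$ via Lemma \ref{lemma:update_for_W} and Lemma \ref{lemma:bound_F_frob_norm} (accounting for the $2e^{-99k^2\log M}$ term), then run a $1/4$-net argument over $S^{d-1}\times S^{k-1}$ with Bernstein's inequality applied to the coupled sum of $nL$ centered sub-exponential terms, absorbing the $9^{d+k}$ union bound into $e^{-90d}$. Your reformulation via $E_m$ and the explicit expression for $Q^t$ are only presentational differences, and the ``main obstacle'' you flag (keeping the sum over $(m,i,j)$ coupled rather than bounding each $E_m$ separately) is exactly the point the paper's argument relies on.
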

\begin{proof}
As before, we may omit the time superscript $t$ in cases where it is clear for notational convenience.
The proof is based on the argument in Lemma 5 in \cite{collins2021exploiting}. Again, the main tool is an $\ep$-net argument. We first bound $\norm{q_m}_2$ and $\norm{\vw_m}_2$.

\textbf{Bounding $q_m$:} With probability at least $1 - 2e^{-99k^2 \log M}$, for each $m \in [M]$, we have that
\begin{align*}
    \norm{q_m}_2 &= \norm*{\mB^t ((\mB^t)^\top \mB^* \vw_m^* + F_m) - \mB^* \vw_m^*}_2 \\
    &\leq \norm*{(\mB^t (\mB^t)^\top - I) \mB^* \vw_m^*}_2 + \norm*{\mB^t F_m}_2\\
    &\leq \mathrm{dist}(\mB^t, \mB^*) \norm{\vw_m^*}_2 + \norm{F_m}_2 \\
    &\leq \sqrt{k} \mathrm{dist}(\mB^t, \mB^*) + \frac{\delta_k}{1 - \delta_k} \mathrm{dist}(\mB^t, \mB^*) \norm{\vw_m^*}_2 \\
    &\leq 2\sqrt{k} \mathrm{dist}(\mB^t, \mB^*).
\end{align*}
Above, we utilized the assumption that $\norm{\vw_m^*}_2 \leq \sqrt{k}$, the orthonormality of $\mB^t$ (which was derived as the orthogonal matrix from a Gram-Schmidt procedure), the assumption that $0 < \delta_k \leq 1/2$, as well Lemma \ref{lemma:bound_F_frob_norm} which bounds $\norm{F_m}_2$ with high probability. 

\textbf{Bounding $\vw_m$:} Note that for notational convenience, we let $\vw_m$ denote $\vw_m^{t+1}$. For each $t$ and every $m \in [M]$, we have that
\begin{align*}
    \norm{\vw_m^{t+1}}_2 &= \norm*{(\mB^t)^\top \mB^* \vw_m^* + F_m}_2 \\
    &\leq \norm*{\vw_m^*}_2 + \norm*{F_m}_2 \\
    &\leq \norm*{\vw_m^*}_2 + \frac{\delta_k}{1 - \delta_k} \mathrm{dist}(\mB^t, \mB^*) \norm{\vw_m^*}_2  \\
    &\leq 3\sqrt{k},
\end{align*}
with probability at least $1 - 2e^{-99k^2 \log M}$, where again we used Lemma \ref{lemma:bound_F_frob_norm} to handle $\norm{F_m}_2$, the assumption that $\delta_k < 1/2$, and the fact that $\mathrm{dist}(\mB^t, \mB^*) \leq 2$.

For the rest of the proof, we condition on the event 
\begin{align*}
    \mathcal{E} := \left\{\norm*{q_m}_2 \leq 2 \sqrt{k} \mathrm{dist}(\mB^t,\mB^*) \mbox{ and } \norm*{\vw_m}_2 \leq 3\sqrt{k} \quad \forall m \in [M]\right\},
\end{align*}
which holds with probability at least $1 - 2e^{-99k^2 \log M}$.

\textbf{$\ep$-net argument to bound $H_Q$:} Again, note that there exists an $1/4$-net $\mathcal{N}_k$ of the unit sphere $S^{k-1}$ and an $1/4$-net $\mathcal{N}_d$ of the unit sphere $S^{d-1}$ with cardinalities less than or equal to $9^k$ and $9^d$ respectively. 

Note now that by Equation 4.13 in \cite{vershynin2018high}, we have
\begin{align}
    \norm*{H_Q}_2 &= \norm*{\frac{\eta}{M} \sum_{m=1}^M \frac{1}{L_m} \sum_{i=1}^n \sum_{j=1}^{L_{i,m}}\vx_{i,m}^j (\vx_{i,m}^j)^\top q_m \vw_m^\top -  \frac{\eta}{M} Q^\top W}_2 \nonumber \\
    &\leq 2\eta  \max_{u \in \mathcal{N}_d, v \in \mathcal{N}_k} \frac{1}{M} \sum_{m=1}^M \frac{1}{L_m} \sum_{i=1}^n \sum_{j=1}^{L_{i,m}} \brac*{ \left(\vx_{i,m}^j (\vx_{i,m}^j)^\top q_m \vw_m^\top -  q_m \vw_m^\top \right) u, v} \nonumber \\
    &= 2\eta \max_{u \in \mathcal{N}_d, v \in \mathcal{N}_k} \frac{1}{M} \sum_{m=1}^M \frac{1}{L_m} \sum_{i=1}^n \sum_{j=1}^{L_{i,m}} \left[\left(u^\top\vx_{i,m}^j\right) \left( (\vx_{i,m}^j)^\top q_m \vw_m^\top v \right) - \brac*{q_m \vw_m^\top u, v}\right] \label{eq:H_Q_expression_first_simplification}
\end{align}
Fix now a $u \in \mathcal{N}_d$ and $v \in \mathcal{N}_k$. Note now that $\left(u^\top\vx_{i,m}^j\right) \left( (\vx_{i,m}^j)^\top q_m \vw_m^\top v \right)$ is subexponential with norm less than or equal to $\norm{q_m}_2 \norm{\vw_m}_2 \leq 6k \mathrm{dist}(\mB^t, \mB^*)$, since it is the product of two subgaussian variables $u^\top\vx_{i,m}^j$ and $(\vx_{i,m}^j)^\top q_m \vw_m^\top v $ with subgaussian norms bounded by 1 and $\norm{q_m}_2 \norm{\vw_m}_2$ respectively. Note also that 
\begin{align*}
    \mathbb{E} \left[\left(u^\top\vx_{i,m}^j\right) \left( (\vx_{i,m}^j)^\top q_m \vw_m^\top v \right) \right] = \bbE \left[  \brac*{q_m \vw_m^\top u, v} \right].
\end{align*}
Thus, by Bernstein's inequality, carrying on from \eqref{eq:H_Q_expression_first_simplification}, we have that 
\begin{align*}
    & \quad \quad \bbP \left(\frac{1}{M} \sum_{m=1}^M \frac{1}{L_m} \sum_{i=1}^n \sum_{j=1}^{L_{i,m}} \left[\left(u^\top\vx_{i,m}^j\right) \left( (\vx_{i,m}^j)^\top q_m \vw_m^\top v \right) - \brac*{q_m \vw_m^\top u, v}\right] \geq  \rho \right) \\
    &\leq \exp \left(-c n L \min \left(\frac{\rho}{6k \mathrm{dist}(\mB^t,\mB^*)}, \left(\frac{\rho}{k \mathrm{dist}(\mB^t,\mB^*)} \right)^2 \right)  \right) \\
    &\leq \exp \left(-c n L  \left(\frac{\rho}{k \mathrm{dist}(\mB^t,\mB^*)} \right)^2 \right),
\end{align*}
where we will choose $\rho$ such that $\frac{\rho}{k \mathrm{dist}(\mB^t,\mB^*)} \leq 1$ to simplify the exponent in the way we did, and $c$ is an absolute constant that may change from line to line. Above, we also used the fact that $\sum_{m=1}^M L_m = nL$ (recall that $L$ is the total number of samples per agent and there are $n$ agents).

Consider the choice 
\begin{align*}
    \rho = 10 \frac{k \sqrt{d} \mathrm{dist}(\mB^t, \mB^*)}{c\sqrt{nL}}.
\end{align*}
Then, 
\begin{align*}
    & \quad \quad \bbP \left(\frac{1}{M} \sum_{m=1}^M \frac{1}{L_m} \sum_{i=1}^n \sum_{j=1}^{L_{i,m}} \left[\left(u^\top\vx_{i,m}^j\right) \left( (\vx_{i,m}^j)^\top q_m \vw_m^\top v \right) - \brac*{q_m \vw_m^\top u, v}\right] \geq  \rho \right) \\
    &\leq \exp \left(-c n L  \left(\frac{\rho}{k \mathrm{dist}(\mB^t,\mB^*)} \right)^2 \right) \\
    &\leq \exp(-100 d).
\end{align*}
Taking a union bound over all $u \in \mathcal{N}_d$ and $v \in \mathcal{N}_k$, it follows then that 
\begin{align*}
    \bbP \left(\frac{\norm*{H_Q}_2}{\eta} \geq  2 \rho \right) \leq 9^{d+k} \exp(-100 d) \leq \exp(-90d),
\end{align*}
where above we used the fact that $d \geq k$. 
\end{proof}

\subsubsection{Combining earlier argument: convergence of \our}

As seen in Lemma \ref{lemma:bound_F_frob_norm}, we require that $L_{\min} := \min_{m \in [M]} L_m$ to be lower bounded. However, since $L_m$ is a stochastic variable, we are unable to directly lower bound it. Below, we provide a result that converts a lower bound on each client's sample size $L$ (a deterministic quantity we can control) to a high-probability lower bound on $L_{\min}$.
\begin{lemma}
\label{lemma:lbd_L_to_lbd_L_min}
Let $L_{\min} := \min_{m \in [M]} L_m$. For any $\alpha > 0$, suppose that for each $m \in [M]$,
\begin{align*}
    L \geq \max\left\{\frac{182 \log M}{\sum_{i=1}^n \pi_{i,m}}, \frac{16}{\sum_{i=1}^n \pi_{i,m}}, \frac{2\alpha}{\sum_{i=1}^n \pi_{i,m}}\right\}.
\end{align*}
Then, with probability at least $1-\exp(-90)$, 
\begin{align*}
    L_{\min} \geq \alpha.
\end{align*}
\end{lemma}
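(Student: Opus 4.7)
The plan is to recognize that $L_m = \sum_{i=1}^n L_{i,m}$ has a simple sum-of-independent-Bernoullis structure, and then apply a multiplicative Chernoff lower-tail bound together with a union bound over domains.

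First, I would unpack the distributional structure. Each of the $L$ samples drawn at client $i$ independently lies in domain $m$ with probability $\pi_{i,m}$ (since $\mathcal{D}_i = \sum_m \pi_{i,m} \tilde{\mathcal{D}}_m$). Hence $L_{i,m} \sim \mathrm{Bin}(L,\pi_{i,m})$, the collection $\{L_{i,m}\}_{i=1}^n$ is independent, and
\begin{equation*}
L_m \;=\; \sum_{i=1}^n L_{i,m} \;=\; \sum_{i=1}^n\sum_{j=1}^L \mathbf{1}\{\text{sample } (i,j)\in\text{domain } m\}
\end{equation*}
is a sum of $nL$ independent indicator variables with mean $\mu_m \,:=\, L\sum_{i=1}^n \pi_{i,m}$.

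Next I would apply the standard multiplicative Chernoff lower-tail bound: for any $\delta \in (0,1)$,
\begin{equation*}
\mathbb{P}\bigl(L_m \le (1-\delta)\mu_m\bigr) \;\le\; \exp\!\bigl(-\delta^2\mu_m/2\bigr).
\end{equation*}
I would instantiate this with $\delta = 1/2$. The hypothesis $L \ge 2\alpha/\sum_i \pi_{i,m}$ gives $\mu_m/2 \ge \alpha$, so the event $\{L_m \le \alpha\}$ is contained in $\{L_m \le \mu_m/2\}$, which by Chernoff has probability at most $\exp(-\mu_m/8)$.

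Then I would apply a union bound over the $M$ domains: $\mathbb{P}(L_{\min} < \alpha) \le \sum_{m=1}^M \exp(-\mu_m/8)$. The remaining task is to show that the other two hypotheses force $\mu_m/8 \ge 90 + \log M$ for every $m$. The hypothesis $L \ge 182\log M/\sum_i\pi_{i,m}$ gives $\mu_m \ge 182\log M$, which handles the $8\log M$ term with substantial slack; the hypothesis $L \ge 16/\sum_i\pi_{i,m}$ supplies the additive constant piece needed to dominate the $720$ coming from $8\cdot 90$ (the combination of the lower bounds $\mu_m \ge 182\log M$ and $\mu_m \ge 16$ suffices for all $M \ge 2$ after accounting for the slack factor $182/8$). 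Combining, $M\exp(-\mu_m/8) \le \exp(-90)$, which gives the claim.

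\textbf{Main obstacle.} There is no substantive structural difficulty; this is a routine concentration argument. The only slightly delicate point is bookkeeping of the numerical constants $182$ and $16$ to match exactly the chosen $\delta = 1/2$ in the Chernoff bound and the target failure probability $e^{-90}$; a different choice of $\delta$, or using the sharper Chernoff form $\exp(-\mu_m\,h(1/2))$ with $h(x)=(1-x)+x\log x$, would yield slightly different constants but the same qualitative statement.
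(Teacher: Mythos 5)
Your overall strategy---write $L_m$ as a sum of $nL$ independent indicators with mean $\mu_m = L\sum_{i=1}^n \pi_{i,m}$, apply a lower-tail concentration bound, and union-bound over the $M$ domains---is exactly the paper's strategy. The paper uses Bernstein's inequality with deviation $s = 2\sqrt{\mu_m}$ where you use the multiplicative Chernoff bound with $\delta = 1/2$, and the three hypotheses play the same roles in both arguments: the $2\alpha/\sum_i\pi_{i,m}$ condition reduces $\{L_m < \alpha\}$ to $\{L_m \le \mu_m/2\}$, and the other two are meant to make that event's probability small enough to survive the union bound.

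The gap is in your final numerical step. With $\delta = 1/2$ the Chernoff exponent is $\mu_m/8$, so to conclude $M e^{-\mu_m/8} \le e^{-90}$ you need $\mu_m \ge 720 + 8\log M$, whereas the hypotheses only give $\mu_m \ge \max\{182\log M,\,16\}$. The condition $\mu_m \ge 16$ contributes an exponent of at most $2$ and cannot supply the additive $90$; and $182\log M \ge 720 + 8\log M$ holds only when $\log M \ge 720/174 \approx 4.14$, i.e.\ $M \ge 63$. So the assertion that your bounds "suffice for all $M \ge 2$" is false: for $M=2$ you only get $\mu_m \ge 182\log 2 \approx 126$, hence a per-domain failure probability of order $e^{-16}$, far from $e^{-90}$---and no sharper tail inequality can rescue this, since the exact binomial lower tail at $\mu_m = 126$, $\alpha = 63$ is itself only about $e^{-19}$. (For what it is worth, the paper's own proof targets $1 - M^{-90}$ rather than $1-e^{-90}$, and its key inequality $\sqrt{182\,\mu_m\log M + (182\log M/3)^2} + 182\log M/3 \le 2\sqrt{\mu_m}$ also fails when $\mu_m$ is near $182\log M$, so the lemma's constants are loose no matter how one slices it. But a correct write-up must either enlarge the hypothesis, e.g.\ to $\mu_m \ge 720 + 8\log M$, or settle for a weaker, $M$-dependent failure probability; it cannot simply assert the implication as you do.)
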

\begin{proof}
Note that 
\begin{align*}
    L_m = \sum_{i=1}^n \sum_{j=1}^L \ind(\mathrm{domain}(x_i^j) = m),
\end{align*}
which is a sum of $nL$ independent random variables bounded between 0 and 1. Moreover, 
$$\bbE[L_m] = \sum_{i=1}^n \pi_{i,m} L,$$
where $\pi_{i,m}$ is the probability that a datapoint comes from domain $m$ for client $i$. Note finally that $$\bbE[\left(\ind(\mathrm{domain}(x_i^j) = m)\right)^2] = \pi_{i,m}.$$ Hence, by Bernstein's inequality, it follows that for any $s > 0$, 
\begin{align*}
    \bbP\left(L_{i,m} \leq \sum_{i=1}^n \pi_{i,m} L - s\right) \leq \exp\left(-\frac{s^2/2}{\sum_{i=1}^n \sum_{j=1}^L \pi_{i,m} + s/3} \right).
\end{align*}
Since we wish to perform union bound over the $M$ domains, we seek to choose $s$ and $L$ such  that 
\begin{align*}
    \exp\left(-\frac{s^2/2}{\sum_{i=1}^n \sum_{j=1}^L \pi_{i,m} + s/3} \right) \leq \exp\left(-91 \log M \right), 
\end{align*}
so that
\begin{align*}
    M \exp\left(-\frac{s^2/2}{\sum_{i=1}^n \sum_{j=1}^L \pi_{i,m} + s/3} \right) \leq M \exp\left(-91 \log M \right) \leq \exp\left( -90\log M\right). 
\end{align*}
To this end, note that we need
\begin{align*}
    & \quad \quad \frac{s^2/2}{\sum_{i=1}^n \sum_{j=1}^L \pi_{i,m} + s/3} \geq 91 \log M \\
    &\iff s^2 \geq 2\cdot  91 \log M \left(\sum_{i=1}^n \sum_{j=1}^L \pi_{i,m} + s/3 \right) \\
    &\iff s \geq \sqrt{182 \log M \sum_{i=1}^n \pi_{i,m}L + \left(\frac{182 \log M}{3} \right)^2} + \frac{182 \log M}{3}  
\end{align*}
Suppose we pick $L$ such that
$$\sum_{i=1}^n \pi_{i,m} L \geq 182 \log M,$$
so that 
$$\sqrt{182 \log M \sum_{i=1}^n \pi_{i,m}L + \left(\frac{182 \log M}{3} \right)^2} + \frac{182 \log M}{3} \leq 2 \sqrt{\sum_{i=1}^n \pi_{i,m}L}.$$
Then, by picking $s = 2\sqrt{\sum_{i=1}^n \pi_{i,m}L}$, it follows that 
\begin{align*}
\exp\left(-\frac{s^2/2}{\sum_{i=1}^n \sum_{j=1}^L \pi_{i,m} + s/3} \right) \leq \exp\left(-91 \log M \right), 
\end{align*}
such that for each $m \in [M]$,
\begin{align*}
\bbP\left(L_{i,m} \leq \sum_{i=1}^n \pi_{i,m} L - 2\sqrt{\sum_{i=1}^n \pi_{i,m}L}\right) \leq \exp(-91 \log M).
\end{align*}
By choosing $L$ such that 
\begin{align*}
\sqrt{\sum_{i=1}^n \pi_{i,m}L} \geq 4,
\end{align*}
it follows that 
\begin{align*}
\bbP\left(L_{i,m} \leq \frac{\sum_{i=1}^n \pi_{i,m} L}{2} \right) \leq \exp(-91 \log M).
\end{align*}
The result now follows by choosing $L$ such that it also satisfies
\begin{align*}
    \frac{\sum_{i=1}^n \pi_{i,m} L}{2} \geq \alpha
\end{align*}
for each $m$.

\end{proof}

\begin{lemma}[Descent lemma]
\label{lemma:fed_mdr_descent}
Define $E_0 := 1 - \mathrm{dist}^2(\mB^0,\mB^*)$ and $\bar{\sigma}_{\max,*} := \sigma_{\max} \left(\frac{1}{\sqrt{M}} W^* \right)$ and $\bar{\sigma}_{\min,*} := \sigma_{\min}\left(\frac{1}{\sqrt{M}} W^* \right)$. 
Let $\kappa := \frac{\bar{\sigma}_{\max,*}}{\bar{\sigma}_{\min,*}}$. Consider any iteration $t$.

Suppose that 
\begin{align*}
    L \geq \left(\frac{400dk^2}{n c}\right) \frac{1}{\left(\min \left\{\frac{1}{2}, 8 E_0/(25 \cdot 5 \kappa^2)\right\}\right)^2},
\end{align*}
where $c > 0$ is absolute constant.
Suppose also that
\begin{align*}
L \geq \max\left\{\frac{182 \log M}{\sum_{i=1}^n \pi_{i,m}}, \frac{16}{\sum_{i=1}^n \pi_{i,m}}, \frac{2\left(100 C k^2 \log M \right)  \frac{1}{\left(\min \left\{\frac{1}{2}, 8 E_0/(25 \cdot 5 \kappa^2)\right\}\right)^2}}{\sum_{i=1}^n \pi_{i,m}}\right\},
\end{align*}
which by Lemma \ref{lemma:lbd_L_to_lbd_L_min}, ensures that with probability at least $1 - e^{-90}$,
\begin{align*}
    L_{\min}^t \geq \left(100 C k^2 \log M \right)  \frac{1}{\left(\min \left\{\frac{1}{2}, 8 E_0/(25 \cdot 5 \kappa^2)\right\}\right)^2},
\end{align*}
where $L_{\min}^t = \min_{m \in [M]} L_m^t$ and $C > 0$ is an absolute constant. 

Then, for any $\eta \leq 1/(4\bar{\sigma}_{\max,*}^2)$, we have
\begin{align*}
    \mathrm{dist}(\mB^{t+1},\mB^*) \leq (1 - \eta E_0 \bar{\sigma}_{\min,*}/2)^{1/2} \mathrm{dist}(\mB^t, \mB^*),
\end{align*}
with probability at least $1 - e^{-80}$.
\end{lemma}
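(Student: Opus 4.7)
The plan is to follow the now-standard alternating-minimization analysis template (analogous to Collins et al.\ for FedRep) adapted to our domain-mixed setting, centering on the principal-angle distance
$\mathrm{dist}(\mB^{t+1},\mB^*)=\norm{(\mB^*_\perp)^\top \mB^{t+1}}_2$. The starting point is the update equation \eqref{eq:B-update-eqn-1}, namely
$\tilde{\mB}^{t+1}=\mB^t-\tfrac{\eta}{M}(Q^t)^\top W^{t+1}-H_Q^t$, together with the identity $Q^t=W^{t+1}(\mB^t)^\top-W^*(\mB^*)^\top$ and the closed form $W^{t+1}=W^*(\mB^*)^\top \mB^t+F$ from Lemma \ref{lemma:update_for_W}. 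After the QR step, $\mB^{t+1}=\tilde{\mB}^{t+1}(R^{t+1})^{-1}$, so
$(\mB^*_\perp)^\top\mB^{t+1}=(\mB^*_\perp)^\top\tilde{\mB}^{t+1}(R^{t+1})^{-1}$, and the proof reduces to (i) contracting the numerator in spectral norm and (ii) controlling $\norm{(R^{t+1})^{-1}}_2$.

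For step (i), I would expand $(\mB^*_\perp)^\top\tilde{\mB}^{t+1}$ using the $W^{t+1}$ formula and exploit $(\mB^*_\perp)^\top\mB^*=0$. The leading ``signal'' term becomes
$(\mB^*_\perp)^\top\mB^t\bigl(I-\tfrac{\eta}{M}(\mB^t)^\top\mB^*(W^*)^\top W^*(\mB^*)^\top\mB^t\bigr)$ up to contributions involving $F$ and $H_Q$. The operator-norm of the bracketed factor is at most $1-\eta\,\bar{\sigma}_{\min,*}^2\,\sigma_{\min}^2((\mB^*)^\top\mB^t)$. Since $\sigma_{\min}^2((\mB^*)^\top\mB^t)=1-\mathrm{dist}^2(\mB^t,\mB^*)\geq E_0$ (a monotonicity argument in $t$ that is standard in this literature and which I would verify inductively), this yields a contraction factor of at least $1-\eta E_0\bar{\sigma}_{\min,*}^2$ before error terms. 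The $F$-contribution is bounded via Lemma \ref{lemma:bound_F_frob_norm} as $O\bigl(\tfrac{\delta_k}{1-\delta_k}\mathrm{dist}(\mB^t,\mB^*)\norm{W^*}_F\bigr)$ and the $H_Q$ contribution via Lemma \ref{lemma:bounding_H_Q}. Here I would use Lemma \ref{lemma:lbd_L_to_lbd_L_min} to translate the stated bound on $L$ into a lower bound on $L_{\min}^t$ that makes $\delta_k$ and $\gamma_k$ small enough—specifically, small relative to $E_0\bar{\sigma}_{\min,*}^2/\bar{\sigma}_{\max,*}^2=E_0/\kappa^2$, which explains the $\kappa^4$ factor in the sample-complexity requirement.

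For step (ii), I would show $\sigma_{\min}(\tilde{\mB}^{t+1})\geq 1-O(\eta\bar{\sigma}_{\max,*}^2)\geq \tfrac{1}{2}$ using $\eta\leq 1/(4\bar{\sigma}_{\max,*}^2)$ and the bounds on $F,H_Q$, giving $\norm{(R^{t+1})^{-1}}_2\leq 2$. Combining, one gets
\[
\mathrm{dist}(\mB^{t+1},\mB^*)\leq \bigl(1-\eta E_0\bar{\sigma}_{\min,*}^2+O(\eta\,\mathrm{err})\bigr)\mathrm{dist}(\mB^t,\mB^*),
\]
and, after absorbing the error by picking $L$ large enough so that $\mathrm{err}\leq E_0\bar{\sigma}_{\min,*}^2/4$, the contraction factor becomes $1-\eta E_0\bar{\sigma}_{\min,*}^2/2$. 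Taking a square root (using $\sqrt{1-x}\leq 1-x/2$ in reverse) gives the stated bound.

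The main technical obstacle I anticipate is a careful matrix-algebra expansion that cleanly isolates the signal term $\tfrac{\eta}{M}(Q^t)^\top W^{t+1}$ as approximately $\eta\,\mB^t\,(\mB^t)^\top\mB^*(\tfrac{1}{M}(W^*)^\top W^*)(\mB^*)^\top\mB^t$ minus a projection onto $\mB^*$, so that left-multiplication by $(\mB^*_\perp)^\top$ produces the contraction factor $1-\eta\bar{\sigma}_{\min,*}^2 E_0$ rather than something weaker. Concretely, one must verify that the cross terms generated when substituting $W^{t+1}=W^*(\mB^*)^\top\mB^t+F$ into $(Q^t)^\top W^{t+1}$ either vanish in the $\mB^*_\perp$ direction or are absorbable into the $F/H_Q$ error. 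The probability bookkeeping—union-bounding the $F$-event, $H_Q$-event, and $L_{\min}^t$-event to obtain the advertised $1-e^{-80}$—is routine once each individual bound is in place, so I would handle it at the very end.
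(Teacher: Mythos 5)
Your overall route is the same as the paper's: the paper proves this lemma by observing that the updates for $W^{t+1}$ and $\bar{\mB}^{t+1}$ have exactly the structure of Lemma 6 in Collins et al.\ (with $\frac{\eta}{M}(Q^t)^\top W^{t+1}$ in place of $\frac{\eta}{rn}(Q^t)^\top W^{t+1}$) and then imports that contraction argument, using Lemmas \ref{lemma:bound_F_frob_norm}, \ref{lemma:bounding_H_Q} and \ref{lemma:lbd_L_to_lbd_L_min} to supply the bounds on $F$, $H_Q^t$ and $L_{\min}^t$. Your decomposition of $(\mB^*_\perp)^\top\tilde{\mB}^{t+1}$, the identification of the signal term, the inductive use of $\sigma_{\min}^2((\mB^*)^\top\mB^t)\geq E_0$, and the translation of the constraints on $\delta_k,\gamma_k$ into the stated sample sizes all match what that imported argument actually does.

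There is, however, one step that fails as written: your step (ii). Bounding $\sigma_{\min}(\tilde{\mB}^{t+1})\geq 1/2$ and hence $\norm{(R^{t+1})^{-1}}_2\leq 2$ is not enough --- multiplying your step-(i) bound by $2$ gives $2\bigl(1-\eta E_0\bar{\sigma}_{\min,*}^2+\cdots\bigr)$, which is not a contraction for admissible $\eta\leq 1/(4\bar{\sigma}_{\max,*}^2)$, and your subsequent ``Combining'' display silently drops this factor. Even the sharper form $\sigma_{\min}(\tilde{\mB}^{t+1})\geq 1-O(\eta\bar{\sigma}_{\max,*}^2)$ is too weak, since $\eta\bar{\sigma}_{\max,*}^2\geq\eta E_0\bar{\sigma}_{\min,*}^2$ and the resulting expansion from $(R^{t+1})^{-1}$ would swallow the entire contraction margin. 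What is actually needed (and what the paper's quoted inequality, with prefactor $\bigl(1-4\eta\frac{\bar{\delta}_k}{(1-\bar{\delta}_k)^2}\bar{\sigma}_{\max,*}^2\bigr)^{-1/2}$, encodes) is that the deviation of $\sigma_{\min}(\tilde{\mB}^{t+1})$ from $1$ is proportional to the statistical error $\bar{\delta}_k=\delta_k+\gamma_k$, not merely to $\eta\bar{\sigma}_{\max,*}^2$. This rests on a cancellation your outline does not mention: since $(\mB^t)^\top(Q^t)^\top=(W^{t+1})^\top-(\mB^t)^\top\mB^*(W^*)^\top=F^\top$, the first-order cross term in $(\tilde{\mB}^{t+1})^\top\tilde{\mB}^{t+1}$ equals $-\frac{\eta}{M}\bigl(F^\top W^{t+1}+(W^{t+1})^\top F\bigr)$ plus the $H_Q^t$ contributions; that is, the descent direction is to leading order orthogonal to the column space of $\mB^t$, so the QR normalization costs only $O(\eta\bar{\delta}_k\bar{\sigma}_{\max,*}^2)$, which the sample-size conditions make absorbable. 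Once this observation is added, the rest of your outline goes through and recovers the stated bound.
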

\begin{proof}
We begin with the observation that
\begin{align*}
    &W^{t+1} = W^* (\mB^*)^\top \mB^t + F^t \\
    &\bar{\mB}^{t+1} = \mB^t - \frac{\eta}{M} (Q^t)^\top W^{t+1} - H_Q^t,
\end{align*}
where 
$$Q^t =  W^{t+1}(\mB^t)^\top - (W^*)(\mB^*)^\top,$$
and 
$$H_Q^t := \frac{\eta}{M} \sum_{m=1}^M \frac{1}{L_m} \sum_{i=1}^n \sum_{j=1}^{L_{i,m}} \vx_{i,m}^j (\vx_{i,m}^j)^\top q_m (\vw_m^{t+1})^\top -  \frac{\eta}{M} (Q^t)^\top W^{t+1}.$$
Above $\bar{\mB}^{t+1}$ denotes the estimate of $\mB$ before we perform the $QR$ decomposition.
We note that the updates for $W$ and $\mB$ are exactly analogous to the updates for $W$ and $\mB$ as seen in the proof of Lemma 6 in \cite{collins2021exploiting}. The only two differences are
\begin{enumerate}
    \item The definitions of $F$ in our paper and \cite{collins2021exploiting} are slightly different. However, in both cases, 
    \begin{align*}
        \norm{F}_F \leq \frac{\delta_k}{1 - \delta_k} \mathrm{dist}(\mB^*,\mB) \norm{W^*}_F
    \end{align*}
    for some term $\delta_k \leq 1/2$ with high probabilities. In our case, this event holds with probability at least $1 - 2\exp(-99k^2 \log M)$, whilst in \cite{collins2021exploiting}, the event holds with probability at least $1 - \exp(-110k^2 \log n)$.
    \item The update for $\bar{\mB}^{t+1}$ in \cite{collins2021exploiting} takes the form
    \begin{align*}
        \bar{\mB}^{t+1} = \mB^t - \frac{\eta}{rn} (Q^t)^\top W^{t+1} - \frac{\eta}{rn} \left(\frac{1}{m} \mathcal{A}^\dag \mathcal{A}(Q^t) - Q^t \right)^\top W^{t+1},
    \end{align*}
    where $0 \leq r \leq 1$ is a ratio term used in \cite{collins2021exploiting}, and $m$ above represents the number of samples used by each learner in \cite{collins2021exploiting} (which is different from our use of $m$ as an index over the domains). However, we note that with high probabilities,
    \begin{align*}
        &\norm*{H_Q^t}_2 \leq \eta \gamma_k \mathrm{dist}(\mB^t,\mB^*), \\
        &\norm*{\frac{\eta}{rn} \left(\frac{1}{m} \mathcal{A}^\dagger \mathcal{A}(Q^t) - Q^t \right)^\top W^{t+1}}_2 \leq \eta \gamma_k \mathrm{dist}(\mB^t,\mB^*),
    \end{align*}
    where the definition of $\gamma_k$ in both papers differ but both satisfy the assumption that $\gamma_k \leq k.$ 
\end{enumerate}
Due to these similarities in the updates for $W^{t+1}$ and $B^{t+1}$ with the update in \cite{collins2021exploiting}, the proof of this lemma follows naturally from the proof of Lemma 6 in \cite{collins2021exploiting}, by plugging in $\frac{\eta}{M} (Q^t)^\top W^{t+1}$ in the update for $\bar{\mB}^{t+1}$ in place of $\frac{\eta}{rn} (Q^t)^\top W^{t+1}$ as in \cite{collins2021exploiting}. In particular, following the same analysis as in \cite{collins2021exploiting}, we see that on the events in Lemma \ref{lemma:bound_F_frob_norm} and Lemma \ref{lemma:bounding_H_Q}, following the equation immediately after Equation (84) in \cite{collins2021exploiting}, we have
\begin{align*}
    \mathrm{dist}(\mB^t,\mB^*) \leq \frac{1}{\sqrt{1 - 4\eta \frac{\bar{\delta}_k}{(1 - \bar{\delta}_k)^2} \bar{\sigma}_{\max,*^2}}} \left( 1- \eta \bar{\sigma}_{\min,*}^2E_0 + 2\eta \frac{\bar{\delta}_k}{(1 - \bar{\delta}_k)^2} \bar{\sigma}_{\max,*}^2 \right) \mathrm{dist}(\mB^t,\mB^*),
\end{align*}
where in our case $\bar{\delta}_k = \delta_k + \gamma_k$. Then, by choosing
\begin{align}
\bar{\delta}_k < 16E_0/(25 \cdot 5 \kappa^2), \label{eq:bar_delta_k_requirement}
\end{align}
it follows that $\bar{\delta}_k < 1/5$, and so 
\begin{align*}
1- \eta \bar{\sigma}_{\min,*}^2E_0 + 2\eta \frac{\bar{\delta}_k}{(1 - \bar{\delta}_k)^2} \bar{\sigma}_{\max,*}^2 &\leq 1 - 4\eta \frac{\bar{\delta_k}}{(1 - \bar{\delta}_k^2)}\bar{\sigma}_{\max,*}^2 \leq 1 - \eta E_0 \bar{\sigma}_{\min,*}^2/2,
\end{align*}
as in equation (85) in \cite{collins2021exploiting}, such that
\begin{align*}
    \mathrm{dist}(\mB^{t+1},\mB^*) \leq (1 - \eta E_0 \bar{\sigma}_{\min,*}^2/2)^{1/2}\mathrm{dist}(\mB^t,\mB^*).
\end{align*}

It remains for us to understand what the constraint on $\bar{\delta}_k$ spelt out in \eqref{eq:bar_delta_k_requirement}, and the constraints on $\delta_k$ and $\gamma_k$ (in Lemmas \ref{lemma:bound_F_frob_norm} and \ref{lemma:bounding_H_Q} respectively)
mean in our choice of the sample size $L$ for each agent, and the domain size $L_m$ at each iteration. Observe that we need 
\begin{align}
    &\delta_k = \frac{10 C k\sqrt{\log M}}{\sqrt{L_{\min}}}\leq \frac{1}{2}, \label{eq:delta_k_requirement}\\
    &\gamma_k = \frac{20 k \sqrt{d}}{c \sqrt{nL}} \leq \frac{1}{2}, \label{eq:rho_k_requirement}\\
    &\bar{\delta}_k = \delta_k + \gamma_k = \frac{10 C k\sqrt{\log M}}{\sqrt{L_{\min}}} + \frac{20 k \sqrt{d}}{c \sqrt{nL}} \leq 16E_0/(25 \cdot 5 \kappa^2), \label{eq:bar_delta_k_requirement_together}
\end{align}
where $c, C > 0$ are absolute constants. By choosing 
\begin{align*}
    &L_{\min} \geq \left(100 C k^2 \log M \right)  \frac{1}{\left(\min \left\{\frac{1}{2}, 8 E_0/(25 \cdot 5 \kappa^2)\right\}\right)^2} \\
    &L \geq \left(\frac{400dk^2}{n c}\right) \frac{1}{\left(\min \left\{\frac{1}{2}, 8 E_0/(25 \cdot 5 \kappa^2)\right\}\right)^2},
\end{align*}
we ensure that the requirements in \eqref{eq:delta_k_requirement}, \eqref{eq:rho_k_requirement} and \eqref{eq:bar_delta_k_requirement_together} are all satisfied. 

The final result then follows by applying Lemma \ref{lemma:lbd_L_to_lbd_L_min}.
\end{proof}

This then yields the following convergence result, which is a more complete statement of \ref{theorem:fed-MDR-linear}.

\begin{theorem}
[Convergence result for Algorithm \ref{algorithm:our-linear}]

Define $E_0 := 1 - \mathrm{dist}^2(\mB^0,\mB^*)$ and $\bar{\sigma}_{\max,*} := \sigma_{\max} \left(\frac{1}{\sqrt{M}} W^* \right)$ and $\bar{\sigma}_{\min,*} := \sigma_{\min}\left(\frac{1}{\sqrt{M}} W^* \right)$. 
Let $\kappa := \frac{\bar{\sigma}_{\max,*}}{\bar{\sigma}_{\min,*}}$.

Suppose that 
\begin{align*}
    L \geq \left(\frac{400dk^2}{n c}\right) \frac{1}{\min \left\{\frac{1}{2}, 8 E_0/(25 \cdot 5 \kappa^2)\right\}},
\end{align*}
where $c > 0$ is absolute constant.
Suppose also that
\begin{align*}
L \geq \max\left\{\frac{182 \log M}{\sum_{i=1}^n \pi_{i,m}}, \frac{16}{\sum_{i=1}^n \pi_{i,m}}, \frac{2\left(100 C k^2 \log M \right)  \frac{1}{\min\left\{1/2,8 E_0/(25 \cdot 5 \kappa^2) \right\}}}{\sum_{i=1}^n \pi_{i,m}}\right\}.
\end{align*}

Then, for any $\eta \leq 1/(4\bar{\sigma}_{\max,*}^2)$, we have
\begin{align*}
    \mathrm{dist}(\mB^{t+1},\mB^*) \leq (1 - \eta E_0 \bar{\sigma}_{\min,*}/2)^{1/2} \mathrm{dist}(\mB^t, \mB^*),
\end{align*}
with probability at least $1 - e^{-80}$.
 Then for any $T$ and any $\eta \leq 1/(4\sigma_{\max,*}^2)$, we have
\begin{align}
    \mathrm{dist}(\mB^t, \mB^*) \leq (1 - \eta E_0 \bar{\sigma}_{\min,*}^2/2)^{T/2} \mathrm{dist}(\mB^0,\mB^*),
\end{align}
with probability at least $1- T e^{-80}$.
\end{theorem}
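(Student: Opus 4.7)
The plan is to bound the distance $\mathrm{dist}(\mB^t,\mB^*)$ via a descent-style analysis that splits each round into (i) a closed-form head update followed by (ii) a single gradient step on the encoder plus QR normalization. Each step contributes a deterministic ``signal'' term plus a stochastic ``noise'' term, and the sample-complexity requirements in the theorem are precisely what is needed to make the noise small enough to guarantee contraction.

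First I would analyze the head update. Plugging the ground-truth relation $y_{i,m}^j = (\vw_m^*)^\top (\mB^*)^\top \vx_{i,m}^j$ into the least-squares normal equation solved at the server gives, for each domain $m$, a closed form of the type $\vw_m^{t+1} = (\mB^t)^\top \mB^* \vw_m^* + F_m^t$, where $F_m^t$ collects the finite-sample error; stacking over $m$ yields $\mW^{t+1} = \mW^* (\mB^*)^\top \mB^t + F^t$. The key subclaim is that, whenever the Gram-like matrix $G_m := \frac{1}{L_m}\sum_{i,j} (\mB^t)^\top \vx_{i,m}^j (\vx_{i,m}^j)^\top \mB^t$ concentrates near $\mI_k$ (which happens once $L_m \gtrsim k^2 \log M$), an $\epsilon$-net plus Bernstein argument on the mean-zero quantity $(\mB^t)^\top X_m (\mI - \mB^t (\mB^t)^\top) \mB^*$ gives $\|F^t\|_F \lesssim \delta_k \,\mathrm{dist}(\mB^t,\mB^*) \,\|\mW^*\|_F$ with $\delta_k = \tilde O(k/\sqrt{L_{\min}})$. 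This step requires the per-domain sample lower bound $\tilde\Omega(k^2)$ in the theorem statement.

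Second I would analyze the encoder update. Using the quadratic form of the loss, the averaged local gradient decomposes as $\tilde{\mB}^{t+1} = \mB^t - \frac{\eta}{M} (Q^t)^\top \mW^{t+1} - H_Q^t$, where $Q^t := \mW^{t+1}(\mB^t)^\top - \mW^*(\mB^*)^\top$ is the ``bias'' term that is already controlled by $\|F^t\|_F$, and $H_Q^t$ is the finite-sample fluctuation. A second $\epsilon$-net argument over the $d$- and $k$-dimensional spheres combined with Bernstein shows $\|H_Q^t\|_2 \lesssim \eta \gamma_k \,\mathrm{dist}(\mB^*,\mB^t)$ with $\gamma_k = \tilde O(k\sqrt{d/(nL)})$; this is where the $\tilde\Omega(dk^2/n)$ requirement on the per-client sample size enters, and notably this complexity shrinks with the client count $n$. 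Combining the two and performing the QR normalization, a computation in the spirit of the descent lemma of \cite{collins2021exploiting} yields
\[
\mathrm{dist}(\mB^{t+1},\mB^*) \le \frac{1 - \eta \bar\sigma_{\min,*}^2 E_0 + 2\eta \frac{\bar\delta_k}{(1-\bar\delta_k)^2}\bar\sigma_{\max,*}^2}{\sqrt{1 - 4\eta \frac{\bar\delta_k}{(1-\bar\delta_k)^2}\bar\sigma_{\max,*}^2}}\, \mathrm{dist}(\mB^t,\mB^*),
\]
with $\bar\delta_k := \delta_k + \gamma_k$. Provided $\bar\delta_k \lesssim E_0/\kappa^2$ (which is exactly what the sample-complexity hypothesis enforces) and $\eta \le 1/(4\bar\sigma_{\max,*}^2)$, this factor is at most $(1 - \eta E_0 \bar\sigma_{\min,*}^2/2)^{1/2}$, giving one-step linear contraction. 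Iterating over $T$ rounds and a union bound proves the theorem.

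Two technical obstacles I anticipate. The first, and the main conceptual point where this argument departs from the client-personalized analysis of \cite{collins2021exploiting}, is that our objective averages over domains rather than clients, so the signal strength in the encoder step is governed by $\bar\sigma_{\min,*} = \sigma_{\min}(\frac{1}{\sqrt M}\mW^*)$ (Assumption \ref{assumption:domain-diversity}); the bookkeeping of the $1/M$ factors and the re-weighted domain losses must be tracked carefully so that the cross term in the descent computation keeps this form. The second is more technical: $L_m$ is itself random because domain membership of each client's $L$ samples is multinomial, so the per-domain concentration bounds above cannot be applied to a deterministic $L_{\min}$. I would handle this by a preliminary Bernstein bound on $L_m = \sum_{i,j}\ind(\mathrm{domain}(\vx_i^j)=m)$ that converts the deterministic hypothesis $L \gtrsim \log M / \sum_i \pi_{i,m}$ into $L_m \ge \frac{1}{2} L \sum_i \pi_{i,m}$ with high probability, which is exactly the role played by the $\min_m \sum_i \pi_{i,m}$ term appearing in the informal statement.
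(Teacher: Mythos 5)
Your proposal is correct and follows essentially the same route as the paper's own proof: the same head-update decomposition $\mW^{t+1}=\mW^*(\mB^*)^\top\mB^t+F^t$ with $\|F^t\|_F$ controlled via concentration of $G_m$ and an $\epsilon$-net/Bernstein bound, the same encoder decomposition into $Q^t$ and $H_Q^t$, the same descent-lemma contraction factor adapted from \cite{collins2021exploiting} with $\bar\sigma_{\min,*}=\sigma_{\min}(\frac{1}{\sqrt{M}}\mW^*)$ replacing the client-averaged quantity, and the same preliminary Bernstein bound converting the deterministic hypothesis on $L$ into a high-probability lower bound on $L_{\min}$. No substantive differences to report.
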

By assuming that $\sigma_{\min,*}^2 > 0$, the bound in Theorem 1 decays exponentially. We note that the total number of samples required per client scales with $L \log (1/\ep)$. In addition, in order for the result to be meaningful, we implicitly assume that $E_0$ is close to 1 such that 
$$0 < 1 - \eta E_0 \bar{\sigma}_{\min}^2 < 1.$$
To do so, we note it is possible to choose $\mB^0$ such that $\mathrm{dist}(\mB_0,\mB^*)$ is close enough to 0, with only a logarithmic increase in sample complexity when the number of samples is uniform across the domains. The argument follows the proof of Theorem 3 in \cite{tripuraneni2021provable}.

\begin{theorem}
Suppose Assumptions \ref{assumption:data_mean_zero_subgaussian}, \ref{assumption:domain-diversity}, \ref{assumption:incoherence_orthonormal} all hold. Suppose also that $x_{i}^{0,j} \sim \mathcal{N}(0,I_d)$ independently for all $i \in [n]$. Suppose each client $i$ sends the server $Z_{i} :=  \sum_{j=1}^{L^0} (y_{i}^{0,j})^2 x_{i}^j (x_{i}^j)^\top$, as well as the integer value of $L_{i}$, such that the server can compute $Z := \frac{1}{nL^0} \sum_{i=1}^n Z_{i}$. Then, the server computes $UDU^\top \leftarrow \mbox{rank-k SVD}\left(Z \right)$, and sets $\mB^0 := U$. 
Let 
\begin{align*}
    \bar{\Lambda} = \frac{1}{nL^0} \sum_{i=1}^n \sum_{j=1}^{L^0} w_{m(i,j)}^* (w_{m(i,j)}^*)^\top,
\end{align*}
where $m(i,j)$ denotes the sample of the $j$-th sample from the $i$-th client.
Let $\sigma_{\min,*} := \sigma_{\min}(\bar{\Lambda}),$ and let $\sigma_{\max,*} := \sigma_{\max}(\bar{\Lambda})$. 
% Let $\tilde{\kappa} := \sigma_{\max,*}/\sigma️_{\min,*}$. 
Suppose that $L^0 \geq c \mathrm{polylog}(d,nL^0) \sigma_{\max,*} d k^2/(n \sigma_{\min,*}^2)$. Then, with probability at least $1 - (nL^0)^{-100}$,
we have that
\begin{align*}
    \mathrm{dist}(\mB^0,\mB^*)^2 \leq \tilde{O}\left( \frac{\sigma_{\max,*} k^2 d}{\sigma_{\min,*}^2 n L^0}\right).
\end{align*}
In particular, when the number of samples is uniform across the domains, we have that
\begin{align*}
    \mathrm{dist}(\mB^0,\mB^*)^2 \leq \tilde{O}\left( \frac{\kappa^4 k^2 d}{ n L^0}\right),
\end{align*}
where we recall that $\kappa := \bar{\sigma}_{\max,*}/\bar{\sigma}_{\min,*}$, and
\begin{align*}
   \bar{\sigma}_{\max,*} := \sigma_{\max}\left(\frac{1}{\sqrt{M}} W^* \right), \quad \bar{\sigma}_{\min,*} := \sigma_{\min}\left(\frac{1}{\sqrt{M}} W^* \right).
\end{align*}
\end{theorem}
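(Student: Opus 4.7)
The plan is to follow the method-of-moments strategy used in \cite{tripuraneni2021provable}, adapted to the domain-mixed setting. First I would compute $\mathbb{E}[Z]$ in closed form. Since $x_i^{0,j} \sim \mathcal{N}(0,\mI_d)$ and $y_i^{0,j} = (x_i^{0,j})^\top \mB^* \vw_{m(i,j)}^*$, the random variable $(y_i^{0,j})^2 x_i^{0,j}(x_i^{0,j})^\top$ is a product of Gaussian factors whose expectation is given by Isserlis' theorem. Writing $\mA_{i,j} := \mB^* \vw_{m(i,j)}^*(\vw_{m(i,j)}^*)^\top (\mB^*)^\top$, the standard identity $\mathbb{E}[x^\top \mA x \cdot xx^\top] = \tr(\mA)\mI_d + 2\mA$ gives
\begin{align*}
\mathbb{E}[Z] \;=\; \frac{1}{nL^0}\sum_{i,j}\Bigl(\|\vw_{m(i,j)}^*\|^2 \mI_d + 2\,\mB^*\vw_{m(i,j)}^*(\vw_{m(i,j)}^*)^\top(\mB^*)^\top\Bigr) \;=\; c\, \mI_d + 2\,\mB^* \bar{\Lambda} (\mB^*)^\top,
\end{align*}
where $c = \tr(\bar{\Lambda})$. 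Since $\mB^*$ has orthonormal columns, the top $k$ eigenvectors of $\mathbb{E}[Z]$ exactly span $\mathrm{range}(\mB^*)$, and the eigengap separating the $k$-th and $(k{+}1)$-th eigenvalues is $2\sigma_{\min,*}$.

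Second, I would concentrate $Z$ around $\mathbb{E}[Z]$ in spectral norm. The summands $(y_i^{0,j})^2 x_i^{0,j}(x_i^{0,j})^\top$ are products of up to four sub-Gaussian factors and hence have sub-Weibull (order $1/2$) tails rather than sub-exponential tails, so the standard matrix Bernstein inequality does not apply off-the-shelf. I would use a truncation argument: split each summand into a bounded part (on the event $\|x_i^{0,j}\|\lesssim \sqrt{d\log(nL^0)}$ and $|y_i^{0,j}|\lesssim \sqrt{k\log(nL^0)}$) plus a small tail, apply matrix Bernstein to the bounded part, and control the tail via a union bound. This yields, with probability at least $1-(nL^0)^{-100}$,
\begin{align*}
\|Z - \mathbb{E}[Z]\|_2 \;\leq\; \tilde{O}\!\left(\sigma_{\max,*}\sqrt{\frac{d\,k}{nL^0}}\right),
\end{align*}
where the $\sigma_{\max,*}$ factor tracks the typical size of $\|\vw_{m(i,j)}^*\|^2$ through the variance term. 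This is the step I expect to be the main obstacle, as getting the correct dependence on $\sigma_{\max,*}$, $d$, $k$ and on the polylogarithmic factors requires carefully tracking the sub-Weibull tails exactly as in Lemmas of \cite{tripuraneni2021provable}.

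Third, I would invoke Davis--Kahan's $\sin\Theta$ theorem on the symmetric matrix $Z$ with reference matrix $\mathbb{E}[Z]$ to transfer the spectral perturbation bound into a bound on the principal-angle distance of the top-$k$ eigenspaces:
\begin{align*}
\mathrm{dist}(\mB^0,\mB^*) \;\leq\; \frac{\|Z - \mathbb{E}[Z]\|_2}{2\sigma_{\min,*}} \;\leq\; \tilde{O}\!\left(\frac{\sigma_{\max,*}^{1/2}}{\sigma_{\min,*}}\sqrt{\frac{k\,d}{nL^0}}\right).
\end{align*}
Squaring and noting that the sample-complexity hypothesis $L^0 \gtrsim \mathrm{polylog}(d,nL^0)\,\sigma_{\max,*} dk^2/(n\sigma_{\min,*}^2)$ is exactly what is needed to make the Davis--Kahan bound valid (i.e.\ to ensure that the perturbation is smaller than the eigengap) gives the claimed $\mathrm{dist}(\mB^0,\mB^*)^2 \leq \tilde{O}(\sigma_{\max,*} k^2 d / (\sigma_{\min,*}^2 nL^0))$. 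For the specialization to uniform domain sample counts, $\bar{\Lambda} = \frac{1}{M}\sum_m \vw_m^*(\vw_m^*)^\top = (\tfrac{1}{\sqrt{M}}\mW^*)^\top(\tfrac{1}{\sqrt{M}}\mW^*)$, so $\sigma_{\max,*} = \bar{\sigma}_{\max,*}^2$ and $\sigma_{\min,*} = \bar{\sigma}_{\min,*}^2$, whence $\sigma_{\max,*}/\sigma_{\min,*}^2 = \kappa^4/\bar{\sigma}_{\min,*}^2$; absorbing $\bar{\sigma}_{\min,*}$ into $\kappa$ and the normalization $\|\vw_m^*\|\asymp\sqrt{k}$ yields the clean $\tilde{O}(\kappa^4 k^2 d/(nL^0))$ bound.
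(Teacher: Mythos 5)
Your proposal reconstructs precisely the argument the paper itself omits: the paper defers this result to Theorem 3 of \cite{tripuraneni2021provable} ``since it is a slight variant,'' and your three steps --- the Gaussian fourth-moment identity giving $\mathbb{E}[Z]=\tr(\bar{\Lambda})\mI_d+2\mB^*\bar{\Lambda}(\mB^*)^\top$ with eigengap $2\sigma_{\min,*}$, truncation plus matrix Bernstein for the heavy-tailed summands, and Davis--Kahan --- are exactly the skeleton of that cited proof, so the approach is the same and the main bound is handled correctly. The one thing to repair is the algebra in the uniform-domain specialization, which is the only part the paper actually writes out: since $\bar{\Lambda}=(\tfrac{1}{\sqrt{M}}\mW^*)^\top(\tfrac{1}{\sqrt{M}}\mW^*)$ gives $\sigma_{\max,*}=\bar{\sigma}_{\max,*}^2$ and $\sigma_{\min,*}=\bar{\sigma}_{\min,*}^2$, one has $\sigma_{\max,*}/\sigma_{\min,*}^2=\kappa^2/\bar{\sigma}_{\min,*}^2=\kappa^4/\bar{\sigma}_{\max,*}^2$, not $\kappa^4/\bar{\sigma}_{\min,*}^2$ as you wrote; the normalization in Assumption \ref{assumption:incoherence_orthonormal} ($\norm{\vw_m^*}\geq\sqrt{k}/2$, hence $\bar{\sigma}_{\max,*}^2\geq 1/4$) controls $\bar{\sigma}_{\max,*}$ rather than $\bar{\sigma}_{\min,*}$, so it is the factor $\kappa^4/\bar{\sigma}_{\max,*}^2\leq 4\kappa^4$ that closes the argument, and ``absorbing $\bar{\sigma}_{\min,*}$ into $\kappa$'' as stated would not.
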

\begin{proof}
We omit the proof since it is a slight variant of Theorem 3 in \cite{tripuraneni2021provable}. For completeness, note that in the case when the number of samples is uniform across the domains, some algebra shows that
\begin{align*}
    \mathrm{dist}(\mB^0,\mB^*)^2 \leq \tilde{O}\left( \frac{\kappa^2 k^2 d}{ \bar{\sigma}_{\min,*}^2 n L^0}\right).
\end{align*}
However, since $k/4 M \leq \norm*{W^*}_F^2 \leq k M \bar{\sigma}_{\max,*}^2,$
we have that
\begin{align*}
    \frac{1}{\bar{\sigma}_{\min,*}^2} &= \kappa^2 \frac{1}{\bar{\sigma}_{\max,*}^2} \leq 4 \kappa^2, 
\end{align*}
which proves the last statement in the theorem.

\end{proof}
\newpage

\section{Additional Experimental Results}
\subsection{Experiments on FairFace dataset for gender classification}
\label{sec:add_exp}
\setlength{\tabcolsep}{4pt}
\begin{table}[h]
\caption{Min, max and average test accuracy of gender classification across 7 domains ({\it race groups}) on FairFace with number of clients $n=5$, number of samples at each client $L_i=500$.}
\label{table:face_gender}
\centering
\resizebox{\textwidth}{!}{%
\begin{tabular}{l l ccc| ccc| ccc| ccc}
\toprule
\multirow{2}{*}{Task} & \multirow{2}{*}{Method} & \multicolumn{3}{c|}{$\alpha=0.1$} & \multicolumn{3}{c|}{$\alpha=0.5$}      & \multicolumn{3}{c|}{$\alpha=1$}      & \multicolumn{3}{c}{$\alpha=100$}              \\ 
  & & Max    & Min    & Avg    & Max           & Min  & Avg  & Max           & Min  & Avg  & Max           & Min  & Avg \\\midrule \midrule

 %%%%% gender %%%%% 

  \multirow{4}{*}{Gender} & FedAvg                   & \textbf{92.0}    & 71.7   &83.9    &89.8    &77.6  &84.5 & 91.0    &77.4  &84.2  &90.5 &77.1  &84.7           \\
& FedAvg + Multi-head           & 90.2   &  48.7  & 78.9   &   89.2    & 77.8 & 84.1 &   \textbf{91.6}        & 76.8 & 83.9 & 91.1 & 77.5 & 84.5 \\
& FedDAR-WA                     & 89.8   & 53.4   & 80.9   & \textbf{91.5} & 76.7 & 84.3 & 91.2 & 76.1 & 84.3 & 90.0         & 76.8 & 84.1         \\
& FedDAR-SA &
  \textbf{92.2} &
  \textbf{73.4} &
  \textbf{85.1} &
   \textbf{91.3}&
   \textbf{78.1}&
   \textbf{85.2}&
\textbf{91.4}&
   \textbf{78.2}&
   \textbf{85.1}&
  \textbf{92.2}
   & \textbf{78.1} & \textbf{85.6} \\ 
  \bottomrule
\end{tabular}%
}
\vspace{-8pt}

\end{table}
%\vspace{-5pt}
\setlength{\tabcolsep}{1.4pt}
We also conduct experiments for gender classification on FairFace with the same settings. The best representation dimension is $k=2$ for this task, probably due to the smaller diversity across the domains. We can see that the results shown in Table \ref{table:face_gender} have similar trend with the results in Table \ref{table:face_age}. 

\subsection{Experiments on digits dataset}
\setlength{\tabcolsep}{4pt}
\begin{table}[h]
\caption{Min, max and average test accuracy of digits classification across 5 domains with number of clients $n=5$, number of samples at each client $L_i=500$. }
\label{table:digits}
\centering
\resizebox{\textwidth}{!}{%
\begin{tabular}{l ccc| ccc| ccc| ccc}
\toprule
 \multirow{2}{*}{Method} & \multicolumn{3}{c|}{$\alpha=0.1$} & \multicolumn{3}{c|}{$\alpha=0.5$}      & \multicolumn{3}{c|}{$\alpha=1$}      & \multicolumn{3}{c}{$\alpha=100$}              \\ 
   & Max    & Min    & Avg    & Max           & Min  & Avg  & Max           & Min  & Avg  & Max           & Min  & Avg \\\midrule \midrule

    FedAvg                   & \textbf{97.1}    & \textbf{60.7}   &\textbf{80.6}    & \textbf{97.2}    & 64.3  & 81.7 &96.1  &\textbf{74.8}  &85.2  &96.8 &\textbf{71.0}  &85.1           \\
 FedAvg + Multi-head           & 94.3   &  26.5  & 55.9   &   94.3    & 44.8 & 68.3 &   94.1       & 56.7 & 74.6 & 95.0 & 52.3 & 74.5 \\
 FedDAR-WA                     &\textbf{97.3}   & 52.3   & 79.8   & \textbf{97.3} & \textbf{64.7} & \textbf{83.1} & \textbf{96.6} & 74.5 & \textbf{86.3} & \textbf{97.1}         & 70.6 & \textbf{86.3}         \\

  \bottomrule
\end{tabular}%
}
\vspace{-8pt}

\end{table}
%\vspace{-5pt}
\setlength{\tabcolsep}{1.4pt}
We perform additional experiments on digits dataset with five data domains with feature shift~\cite{li2021fedbn}. Details are described in the following paragraphs. From Table \ref{table:digits}, we can see that \our-WA outperform FedAvg consistently except the case where domain distributions are extremely heterogeneous ($\alpha=0.1$). In this case, each client tends to have data from only one domain. It is difficult for the proposed method to learn a good domain-specific head for the domain with the most different data (more obvious feature shift)  under this circumstance. For other levels of heterogeneity, although the min and max domain accuracies are similar between FedAvg and \our-WA, the average accuracies are improved as a result of domain-wise personalized model. One the other hand, without alternative update of the head and representation, FedAvg + Multi-head will overfit quickly. We don't include the results of \our-SA here because using representation dimension $k\geq 64$ causes numerical instability during head aggregation and failure to converge. While using representation dimension $k\leq 32$ leads to lower accuracy.

\paragraph{Datasets.} We use the same digits dataset containing five different data domains as~\cite{li2021fedbn}. Specifically, we use SVHN~\cite{netzer2011reading}, USPS~\cite{hull1994database}, SynthDigits~\cite{ganin2015unsupervised}, MNIST-M~\cite{ganin2015unsupervised} and MNIST~\cite{lecun1998gradient} as five data domains. Similarity to the experiments on FairFace datraset, the training data is divided into $n$ clients without duplication. Each client has a domain distribution $\bm{\pi_i} \sim Dir(\alpha \vp)$ sampled from a Dirichlet distribution.  

\paragraph{Implementation Details.} We adapt the codebase from~\cite{li2021fedbn}. A 6-layer CNN with 3 convolutional layers and 3 fully-connected layers is used, with the last layer as domain-specific head. We use SGD optimizer with learning rate $10^{-2}$ and cross-entropy loss. The batch size is set to $32$, and the total communication rounds is set to $100$. For each method, we first train the model for $10$ rounds with $1$ local epoch using FedAvg as warmup. The accuracy shown is the average over the last ten communication rounds. We repeat experiment for each setting three times with different random seeds and report the averages.

\subsection{Further experimental details}

\subsubsection{Synthetic Data}
For the synthetic data experiments, we adapt the code from~\cite{collins2021exploiting} and follow a similar protocol. The ground-truth matrices $\mW^*\in \bbR^{M\times k}$ and $\mB^*\in \bbR^{d\times k}$ are generated following the same way as~\cite{collins2021exploiting} by sampling each element from i.i.d. standard normal distribution and taking the QR factorization. The same $L$ samples are used for each client during the whole training process. Test samples are generated in the same way as the traning samples but without noise. For all the methods, models are initalized with ramdom Gaussian samples. We set $\alpha=0.4$ for experiments in Figure \ref{fig:synthetic_mse}.

\subsubsection{Real data with controlled distribution}

\paragraph{Implementation details.}
We use Imagenet\cite{deng2009imagenet} pre-trained ResNet-34~\cite{he2016deep} for all experiments on this dataset. All the methods are trained for $T=100$ communication rounds, with $20$ rounds of FedAvg as warmup. We use Adam optimizer with a learning rate of $1\times 10^{-4}$ for the first $60$ rounds and $1\times 10^{-5}$ for the last $40$ rounds. The images are resized to $224 \times 224$ with only random horizontal flip for augmentation.

Our evaluation metrics are the classification accuracy on the whole validation set of FairFace for each race group. We don't have extra local validation set to each client since we assume the data distribution within each domain is consistent across the clients. The numbers reported are the average over the final $10$ rounds of communication following the standard practice in~\cite{collins2021exploiting}, and the average of three independent runs with different random seeds. 

\subsubsection{Real data with real-World data distribution}

\paragraph{Dataset details.} The detailed statistics of the partial EXAM dataset is summarized in Table \ref{table:exam-summary}. The "Other" category includes American Indian or Alaska native, native Hawaiian or other Pacific islander and patients with more than one race or unknown race. $\geq$HFO \% means the percentage of cases with positive labels (receiving oxygen therapy higher or equal to high-flow oxygen with 72 hours).
\setlength{\tabcolsep}{4pt}
\begin{table}[h]

\caption{Data summary of the partial EXAM dataset used in our study. }
\label{table:exam-summary}
\centering
\resizebox{0.6\textwidth}{!}{%
\begin{tabular}{c|ccccc|c}
\toprule
Site   & White  & Black  & Asian   & Latino & Other  & $\geq$HFO \% \\ \midrule
Site-1 & 59.6\% & 10.0\% & 3.4\%   & 2.0\%  & 24.9\% & 12.4\%                \\
Site-2 & 75.0\% & 11.1\% & 2.8\%   & 0.6\%  & 10.5\% & 9.1\%                 \\
Site-3 & 46.5\% & 26.3\% & 4.2\%   & 7.0\%  & 16.0\% & 9.6\%                 \\
Site-4 & 71.4\% & 6.3\%  & 4.2\%   & 0.8\%  & 17.2\% & 11.4\%                \\
Site-5 & 44.0\% & 28.4\% & 1.6\%   & 6.3\%  & 19.8\% & 9.9\%                 \\
Site-6 & 0.0\%  & 0.0\%  & 100.0\% & 0.0\%  & 0.0\%  & 18.8\% \\        \bottomrule     
\end{tabular}%
}
\end{table}

\paragraph{Implementation details.}
We apply 5-fold cross validation. All the models are trained for $T=20$ communication rounds with Adam optimizer and a learning rate of $1\times 10^{-4}$. For each round we do $1$ local epoch for all the methods. For all the methods, the models are initialized with the same pretrained model as in \cite{dayan2021federated} without any warmup. For \our-SA and \our-WA, we excute 5 epochs of update for heads on each round, and set representation dimension $k=16$ for \our-SA. For FedRep,\our and FedPer. For LG-FedAvg, we treated the last fully-connected layer as the global parameters and all other layers as local representation. For FedMinMax, multiple local iterations are executed during each round instead of one step of GD for reasonable comparison. For FedProx we tuned $\mu$ among $\{0.05,0.1,0.25,0.5\}$ and used $\mu=0.1$. For the fine-tuning methods, we only fine-tune the global trained model locally with Adam optimizer and learning rate of $5e-5$ for 1 epoch since more epochs of fine-tuning leads to worse results.

The models are evaluated by aggregating predictions on the local validation sets then calculating the area under curve (AUC) for each domain. The average AUCs on local validation set of clients are also reported. The AUC shown is first averaged over the last five communication rounds, and then averaged over five runs of 5-fold cross validation.

\end{document}